%
%
%
%
%
\pdfoutput=1
\RequirePackage{fix-cm}
\documentclass[twocolumn]{svjour3}          
\smartqed  
\usepackage{graphicx}
\usepackage{times}
\usepackage{epsfig}
\usepackage{epstopdf}
\usepackage{amsmath}
\usepackage{amssymb}
\usepackage{caption}
\usepackage{tabularx}
\usepackage[table]{xcolor}
 \usepackage[normalem]{ulem}
\usepackage{algorithm}
\usepackage[noend]{algpseudocode}

\usepackage[section]{placeins}
%
%
%
\makeatletter
\def\BState{\State\hskip-\ALG@thistlm}
\makeatother
\definecolor{myGreen}{HTML}{33FF00}
\definecolor{myRed}{HTML}{FF3030}
\definecolor{myGrey}{HTML}{AA5555}
\definecolor{myWhite}{HTML}{FFFFFF}
\definecolor{maroon}{cmyk}{0,0.87,0.68,0.32}
\definecolor{petr}{HTML}{5555FF}
\definecolor{josef}{HTML}{FF3030}

\usepackage{blindtext}
\usepackage{booktabs}
\usepackage{url}
\usepackage {multirow}
\usepackage{hyperref}
\usepackage[font=footnotesize,skip=3pt,subrefformat=parens]{subcaption}

\def\I{\mathbf{I}}
\def\R{\mathbf{R}}
\def\L{\mathbf{L}}
\def\r{\mathbf{r}}

\def\a{\mathbf{a}}
\def\b{\mathbf{b}}
\def\bb{\mathbf{b}}
\def\cc{\mathbf{c}}
\def\zerom{\boldsymbol 0}
\def\onem{\boldsymbol 1}
\def\alpham{\boldsymbol \alpha}
\def\betam{\boldsymbol \beta}
\def\thetam{\boldsymbol \theta}
\def\epsilonm{\boldsymbol \epsilon}
\def\deltam{\boldsymbol \delta}
\def\prodm{\odot}

\newcommand{\colorRevision}{\color{black}}


%
\journalname{IJCV}
\usepackage{natbib}
\setcitestyle{authoryear,round,semicolon}

\begin{document}
\begin{sloppypar}
\title{DI-Retinex: Digital-Imaging Retinex Theory for Low-Light Image Enhancement
}


\author{Shangquan Sun         \and
        Wenqi Ren          \and
        Jingyang Peng   \and
        Fenglong Song \and
        Xiaochun Cao   
}


\institute{Shangquan Sun \at
            Institute of Information Engineering, Chinese Academy of Sciences, Beijing 100085, China.\\
            The School of Cyber Security, University of Chinese Academy of Sciences, Beijing 100049, China.\\
              \href{sunshangquan@iie.ac.cn}{sunshangquan@iie.ac.cn}           
           \and
           Wenqi Ren \at
           The School of Cyber Science and Technology, Shenzhen Campus of Sun Yat-sen University, Shenzhen 518107, China.\\
              \href{renwq3@mail.sysu.edu.cn}{renwq3@mail.sysu.edu.cn}
            \and
            Jingyang Peng \at
            Huawei Noah’s Ark Lab. \\
            \href{pengjingyang1@huawei.com}{pengjingyang1@huawei.com}
            \and
            Fenglong Song \at
            Huawei Noah’s Ark Lab. \\
            \href{songfenglong@huawei.com}{songfenglong@huawei.com}
            \and
            Xiaochun Cao \at 
            The School of Cyber Science and Technology, Shenzhen Campus of Sun Yat-sen University, Shenzhen 518107, China.\\
            \href{caoxiaochun@mail.sysu.edu.cn}{caoxiaochun@mail.sysu.edu.cn}
}   
\vspace{-5mm}
\date{Received: date / Accepted: date}

\maketitle

\begin{abstract}
   Many existing methods for low-light image enhancement (LLIE) based on Retinex theory ignore important factors that affect the validity of this theory in digital imaging, such as noise, quantization error, non-linearity, and dynamic range overflow.
    In this paper, we propose a new expression called Digital-Imaging Retinex theory (DI-Retinex) through theoretical and experimental analysis of Retinex theory in digital imaging. 
    %
    %
    Our new expression includes an offset term in the enhancement model, which allows for pixel-wise brightness contrast adjustment with a non-linear mapping function.
    In addition, to solve the low-light enhancement problem in an unsupervised manner, we propose an image-adaptive masked reverse degradation loss in Gamma space.
    We also design a variance suppression loss for regulating the additional offset term. 
    Extensive experiments show that our proposed method outperforms all existing unsupervised methods in terms of visual quality, model size, and speed. Our algorithm can also assist downstream face detectors in low-light, as it shows the most performance gain after the low-light enhancement compared to other methods.
\keywords{Low-light image enhancement \and Retinex theory \and face recognition}
\end{abstract}
\section{Introduction}

In low-light conditions, digital imaging often suffers from a number of degradations, such as low visibility and high levels of noise. While professional equipment and techniques such as long exposure, fill light, and large-aperture lenses can help alleviate these issues, many amateur photographers still struggle with low-light images. Therefore, there is a need for a portable and efficient low-light image enhancement (LLIE) algorithm that can effectively restore underexposed images.

Some fundamental works, e.g., Retinex theory~\citep{land1971lightness,land1977retinex}, have attempted to explain the relation among scene radiance reaching human eyes, material reflectance, and source illumination. 
Many LLIE algorithms and models have been designed based on this theory. However, it is important to note that the original theory was developed for radiance reaching the retina, rather than for measuring imaging intensity in computer vision. Some LLIE works have recognized the issue of amplified noise after enhancement, but there are other important factors, such as quantization error, non-linearity, and dynamic range overflow, that have not been thoroughly discussed. Simply applying Retinex theory from optics to computer vision can lead to imprecision and incompleteness in LLIE.

\begin{figure*}[hpt]
\small
  \centering
  \begin{subfigure}{0.2465\linewidth}
    \includegraphics[width=1\linewidth]{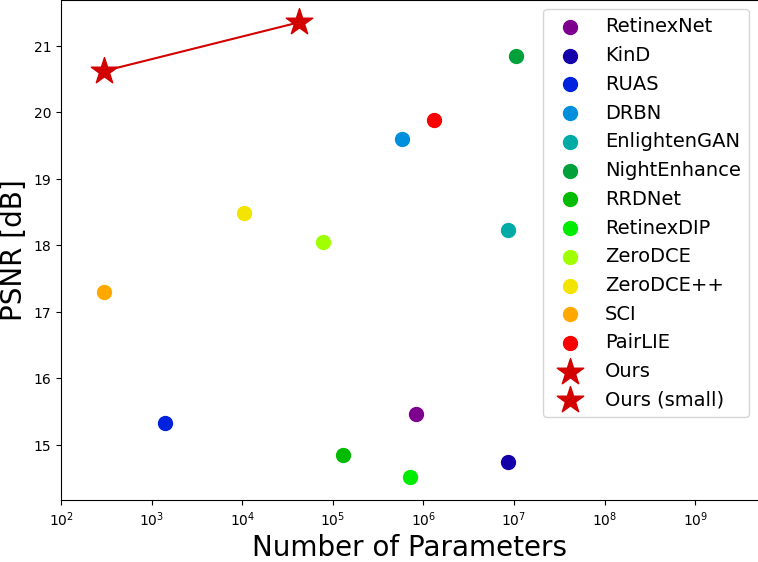}
      \caption{PSNR vs. model size on LOL-v1}
      \label{fig:space_lolv1}
  \end{subfigure}
  \hfill
  \begin{subfigure}{0.2465\linewidth}
    \includegraphics[width=1\linewidth]{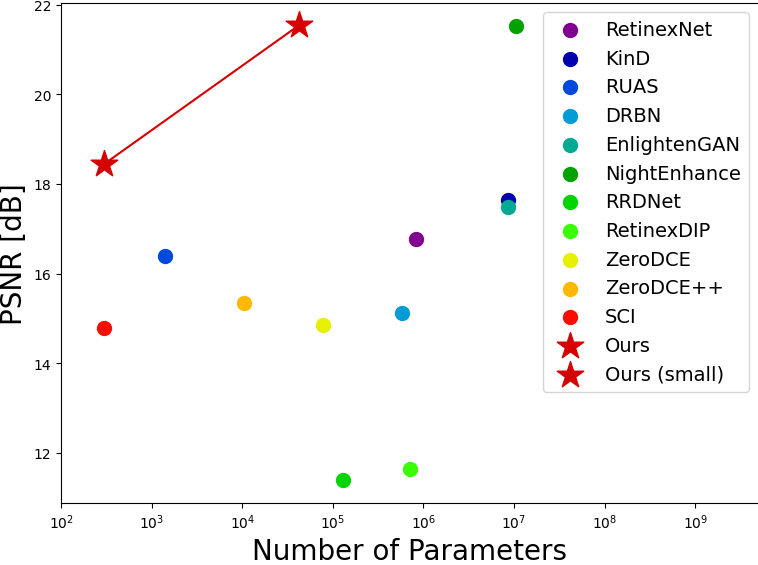}
      \caption{PSNR vs. model size on LOL-v2}
      \label{fig:space_lolv2}
  \end{subfigure}
  \hfill
  \begin{subfigure}{0.2465\linewidth}
    \includegraphics[width=1\linewidth]{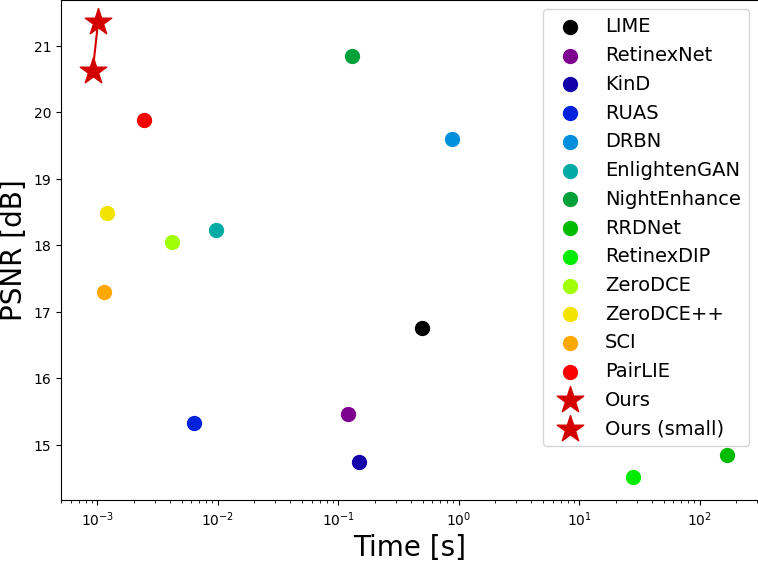}
      \caption{PSNR vs. run time on LOL-v1}
      \label{fig:time_lolv1}
  \end{subfigure}
  \hfill
  \begin{subfigure}{0.2465\linewidth}
    \includegraphics[width=1\linewidth]{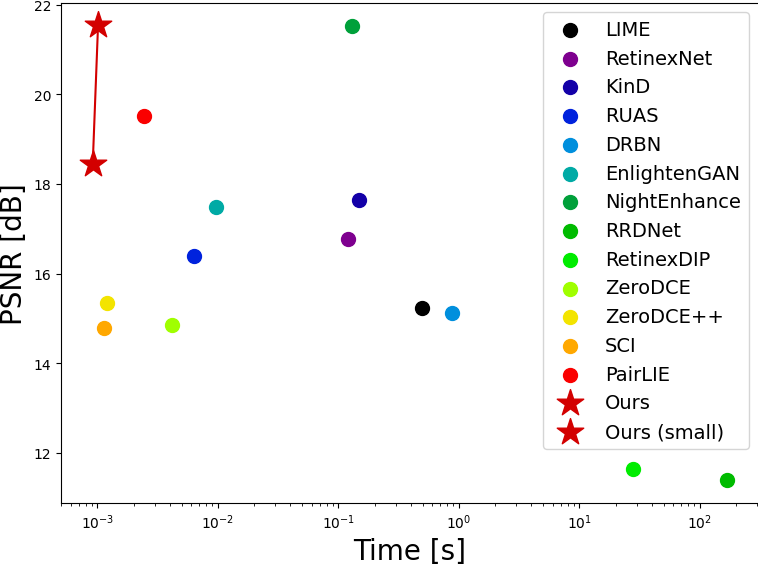}
      \caption{PSNR vs. run time on LOL-v2}
      \label{fig:time_lolv2}
  \end{subfigure}
\setlength{\abovecaptionskip}{-2pt} 
\setlength{\belowcaptionskip}{-2pt}
  \caption{The trade-off between performance, and inference time and model size. Our method can achieve the best low-light enhancement performance with the smallest parameter number or at the fastest speed consistently.}
  \label{fig::space_time_perf}
  \vspace{-0.2cm}
\end{figure*}
In this study, we conduct a theoretical analysis of the various factors that can contribute to noise, quantization error, non-linearity, and dynamic range compression when applying Retinex theory to digital imaging. We propose an extended version of Retinex theory specifically designed for computer vision called Digital-Imaging Retinex theory (DI-Retinex). 
Through our analysis, we demonstrate the existence of an offset with a non-zero mean and an amplified variance that can be caused by these factors when applying DI-Retinex theory to low-light image enhancement.

Building on the observation and analysis of the DI-Retinex theory, we design a brightness contrast adjustment algorithm that addresses the LLIE problem using a lightweight network consisting of three convolutional layers.
The network predicts the contrast and brightness coefficients in the brightness contrast adjustment function and enhances degraded images by plugging predictions in the function.
To guide the network towards generating optimal coefficient pairs, we use a masked reverse degradation loss and a variance suppression loss, which enable the network to learn in a zero-shot manner\footnote{Zero-shot learning in low-level vision tasks represents the method requiring neither paired or unpaired training data, in contrast to its concept in high-level visual tasks~\citep{li2021low,zhang2019zero}.}, without requiring paired or unpaired training data.

To sum up, our contributions can be summarized as:

\begin{itemize}
    \item We analyze Retinex theory from the perspective of optics and computational photography and discuss the variation when transferring it to digital imaging. In addition to noise, Retinex theory should also involve quantization error, non-linearity, and dynamic range overflow in its formulation. As such, we propose a new DI-Retinex theory specifically for LLIE.
    
    \item We present a contrast brightness function to solve the LLIE problem derived from DI-Retinex theory. In addition, we propose a masked reverse degradation loss with Gamma encoding and a variance suppression loss to guide the network in a zero-shot learning manner. 
    
    \item Extensive experiments demonstrate that the proposed method outperforms all the zero-shot and unsupervised learning-based LLIE methods in terms of visual quality, objective metrics, model size and speed. Moreover, our approach can be used as a preprocessing step for downstream tasks such as detection.
\end{itemize}

\section{Related Works}

\subsection{Classic Retinex Theory}

Land and McCann~\citep{land1971lightness,land1977retinex} conducted a series of photometrical experiments, showing the scene radiance reaching human eye is the product of an intrinsic reflectance and incident illumination, which is named as Retinex Theory. 
It is formally written as follows,
\begin{equation}\label{eq::retinex}
    \I^e = \L \prodm \R,
\end{equation}
\noindent where $\prodm$ denotes Hadamard product, $\I^e$ means radiance reaching human eye, $\L$ is illuminance, and $\R$ is reflectance. For images under different exposure conditions but with an identical scene, reflectance $R$ keeps unchanged since it is determined by and only related to the intrinsic material property of object surface. This also implies that sensation of color is mainly dependent on reflectance. 






Many model-based methods employ Retinex theory for better modeling~\citep{wang2013naturalness,wang2014variational,fu2014novel,fu2016weighted,guo2016lime,fu2016fusion,cai2017joint,fu2018retinex,li2018structure,ren2018joint,xu2020star}. In recent years, deep learning with powerful learning ability and inference speed dominates the field of LLIE~\citep{Chen2018Retinex,zhang2019kindling,wang2019underexposed,wang2019progressive,fan2020integrating,yang2021sparse,li2018lightennet,liu2021retinex,ma2022toward}. Nearly one-third of deep learning methods also adopt Retinex theory for the sake of better enhancement effect and physical explanation~\citep{li2021low}. Therefore, a theoretical analysis of Retinex theory in digital photography is vital for establishing a valid physical model.

Note that the classic Retinex Theory has an assumption of proper exposure and the experiments of Land and McCann~\citep{land1971lightness,land1977retinex} with a telescopic photometer conclude that Eq.~\ref{eq::retinex} holds for ``eye'' rather than camera. Simply borrowing Eq.~\ref{eq::retinex} for solving LLIE problem may be inaccurate and incomplete due to violation of the assumption and condition.

Many existing methods~\citep{jobson1997properties,jobson1997multiscale,ma2022toward,wang2019underexposed,liu2021retinex,ying2017new} presumably regard the reflectance component of Retinex theory as the final enhanced result, which is not what Retinex theory originally means and affects the final performance~\citep{li2021low}. 
Other methods~\citep{li2018structure,zhang2019kindling,ren2018joint} notice the existence of amplified noise. But 
they neglect other involved errors, e.g., quantization error and dynamic range compression, when transferring Retinex theory from optics to digital imaging.





\subsection{Model-based LLIE Methods}

The earlier LLIE methods use histogram equalization (HE)~\citep{pizer1990contrast,abdullah2007dynamic} in global level~\citep{coltuc2006exact,ibrahim2007brightness} and local region~\citep{lee2013contrast,stark2000adaptive}. Other methods utilize Retinex-theory variants including classic Retinex theory~\citep{fu2016weighted,guo2016lime,wang2014variational,cai2017joint,fu2016fusion,xu2020star}, single-scale Retinex~\citep{jobson1997properties}, multi-scale Retinex~\citep{jobson1997multiscale}, adaptive multi-scale Retinex~\citep{lee2013adaptive}, Naturalness Retinex theory~\citep{wang2013naturalness}, Robust Retinex theory~\citep{li2018structure,ren2018joint}, for designing physically explicable algorithms where degraded image is decomposed into reflectance and illuminance during iterative optimization. There are also some methods~\citep{dong2010fast,li2015low} transforming LLIE problem into a dehazing problem. The S-curve in photography can also be used to enhance underexposed image~\citep{yuan2012automatic}. 


\subsection{Data-Drive LLIE Methods}

\noindent\textbf{Supervised Learning.}
{\colorRevision
With the developing of deep learning, the earliest deep learning-based LLIE methods focus on learning the mapping from low-light image to normal exposed one on paired training set.
Various network architectures are developed for better enhancement, including a stacked-sparse denoising autoencoder in LLNet~\citep{lore2017llnet}, U-Net~\citep{chen2018learning}, a multi-branch network named MBLLEN~\citep{lv2018mbllen}, Retinex-Net composed of a Decom-net and an Enhance-net~\citep{Chen2018Retinex}, a lightweight LightenNet~\citep{li2018lightennet}, a frequency decomposition network~\citep{cai2018learning}, a decoder network extracting global and local features~\citep{wang2019underexposed}, siamese network~\citep{chen2019seeing}, 3D U-Net~\citep{jiang2019learning,zhang2021learning}, progressive Retinex network~\citep{wang2019progressive}, three-branch subnetworks~\citep{zhang2019kindling,yang2021sparse}, a complex Encoder-Decoder~\citep{ren2019low}, a frequency-based decomposition-and-enhancement model~\citep{xu2020learning}, a model integrating semantic segmentation~\citep{fan2020integrating}, an edge-enhanced multi-exposure fusion network~\citep{zhu2020eemefn}, residual network~\citep{wang2020lightening}, a Retinex-inspired unrolling model by architecture search~\citep{liu2021retinex,liu2022learning}, Signal-to-Noise-Ratio (SNR)-aware transformer~\citep{xu2022snr}, a deep color consistent network~\citep{zhang2022deep}, a deep unfolding Retinex network~\citep{wu2022uretinex}, a normalizing flow model~\citep{wang2022low}, Gamma correction model~\cite{Wang2023Low}, contrastive self-distillation~\citep{Fu2023you}, segmentation assisting model~\citep{Wu2023learning}, structure-aware generator~\citep{Xu2023low}, vision Transformers~\citep{Cai2023Retinexformer} and diffusion models~\citep{Wang2023ExposureDiffusion,Yi2023diffretinex}. The above-mentioned architectures tend to be complicated and a large scale of paired training set with low/normal exposed images are necessary for training the networks. Large model size impedes fast inference speed. Besides, collecting large paired datasets are time-consuming and expansive in terms of human labours.
}

\noindent\textbf{Semi-Supervised Learning.}
Several semi-supervised learning (SSL) methods~\citep{yang2020fidelity,yang2021band} use a deep recursive band network (DRBN) to learn a linear band representation between the pair of low-light image and corresponding ground-truth. For unpaired data, a perceptual quality-driven adversarial learning is adopted for guiding enhancement. The requirement of paired data still restricts the flexibility of the SSL methods.

\noindent\textbf{Unpaired-supervised Learning.}
To alleviate the problem of lacking paired training data, unpaired supervised learning (UL) requires a set of normal exposed images unpaired with low-light images. EnlightenGAN~\citep{jiang2021enlightengan} employs adversarial learning with a local-global discriminator structure and a perceptual loss of self-regularization. NightEnhance~\citep{jin2022unsupervised} enhances night image with dark regions and light effects by a layer decomposition network and a light-effects suppression network with the guidance of multiple unsupervised layer-specific prior losses. However, in many common scenarios unpaired normally exposed images are still lacking or of low quality. The enhancement quality of UL methods highly relies on the quality of normally exposed images.

\noindent\textbf{Zero-Shot Learning.}
To fully get rid of the limitation of paired data, zero-shot learning only needs low-light images for training.
ExCNet~\citep{zhang2019zero} is composed of a neural network to estimate the S-curve to adjust luminance and map from back-lit input to enhanced output. The network is trained by minimizing a block-based energy expression as loss function. The idea of S-curve is adopted in Zero-DCE~\citep{guo2020zero} and Zero-DCE++~\citep{li2021learning} where a network estimating intermediate light-enhancement curve is used to restore input image iteratively. They design four sophisticated loss terms to regulate spatial consistency, exposure level, color constancy, and illumination smoothness. 
RRDNet~\citep{zhu2020zero} solves underexposed problem by decomposing reflectance, illumination and noise components by a three-branch network with three loss terms aiming at Retinex reconstruction, texture enhancement and illumination-guided noise estimation. 
RetinexDIP~\citep{zhao2021retinexdip} combines Deep Image Prior and Retinex decomposition and is guided by a group of component characteristics-related losses.
SCI~\citep{ma2022toward} incorporates a self-calibrated illumination module for the acceleration for training, a cooperative fidelity loss term and a smoothness loss term to achieve enhancement effect intrinsically. 
PairLIE~\citep{fu2023pairlie} leverages the pair of low-light images to train a self-learnining network based on Retinex theory.
They are facing an unsatisfactory enhancement performance due to either delicate loss terms or the incompleteness of classic Retinex theory in LLIE task.

\section{Analysis of Retinex Theory in Low-Light}\label{sec::proof}

\begin{figure}[tp]

  \centering
  \begin{subfigure}{0.405\linewidth}
    \includegraphics[width=1\linewidth]{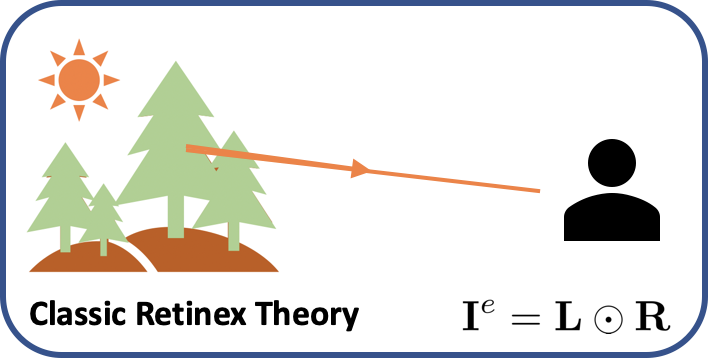}
      \caption{{\scriptsize Diagram of classic Retinex.}}
      \label{fig:fig1a}
  \end{subfigure}
  \hfill
  \begin{subfigure}{0.585\linewidth}
    \includegraphics[width=1\linewidth]{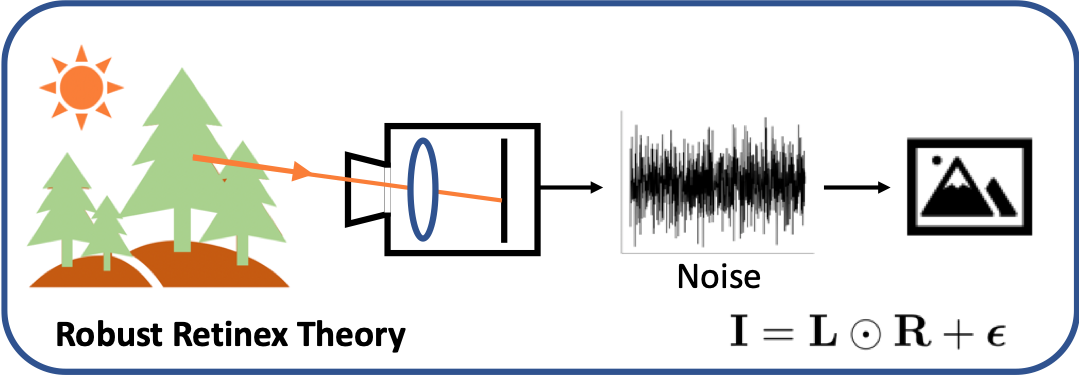}
      \caption{{\scriptsize Diagram of robust Retinex theory~\citep{li2018structure}.}}
      \label{fig:fig1b}
  \end{subfigure}
  
  \centering
  \begin{subfigure}{1\linewidth}
    \includegraphics[width=1\linewidth]{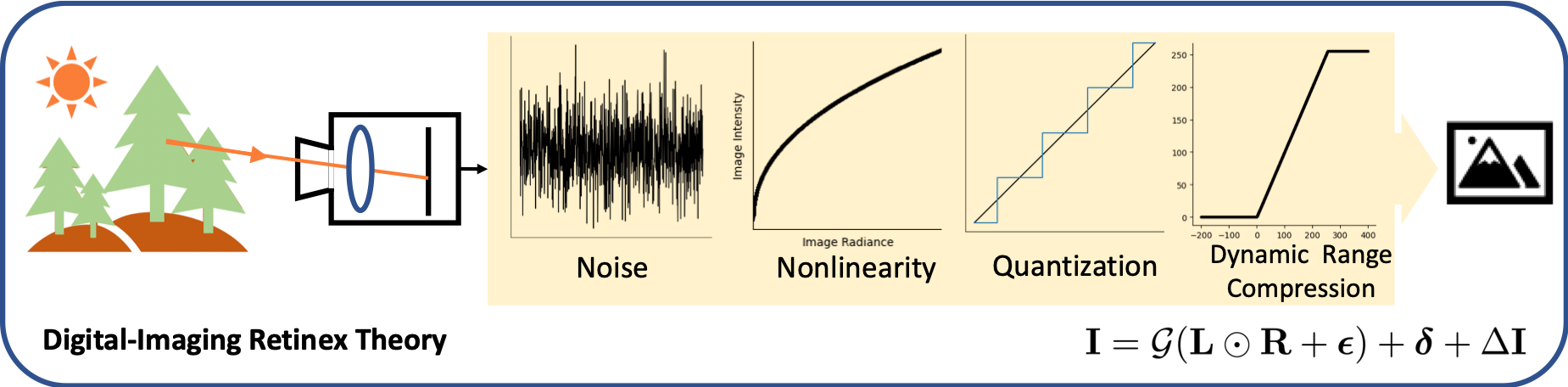}
      \caption{The diagram of our Digital-Imaging Retinex theory.}
      \label{fig:fig1c}
  \end{subfigure}
\setlength{\abovecaptionskip}{-2pt} 
\setlength{\belowcaptionskip}{-2pt}
  \caption{The classic Retinex theory is targeted for scene radiance directly incident into human eye as shown in \ref{fig:fig1a}. Some methods, e.g., \citep{li2018structure,zhang2019kindling} consider the noise as shown in \ref{fig:fig1b}. However, many other factors including quantization error, non-linearity due to camera response and dynamic range overflow due to physical limitation, are ignored. Our DI-Retinex theory in \ref{fig:fig1c} takes all of them into consideration.}
  \label{fig:DI-Retinex}
\end{figure}

In this section, we first show the existence and cause of four factors influencing the expression of the classic Retinex theory, i.e., noise, quantization error, non-linearity mapping, and dynamic range overflow. Then a novel expression of DI-Retinex theory and its corresponding enhancement model are given based on the analysis.

\subsection{Imaging Noise}\label{sec::noise}
When scene radiance reaches imaging devices, various sources of noise consisting of read noise~\citep{janesick2007photon,mackay2001subelectron}, dark current noise~\citep{moon2007application,healey1994radiometric}, photon shot noise~\citep{schottky1918spontane,blanter2000shot}, source follower noise~\citep{antal20011} etc., are reported and should be taken into account~\citep{haus2000electromagnetic}. Therefore, Retinex theory is rewritten by taking noise into account, i.e.,
\begin{equation}
    \I = \L \prodm \R + \epsilonm,
\end{equation}
where $\epsilonm$ is the combination of all significant noise sources during digital imaging. 
Among them, photon shot and dark current noise are theoretically modeled by Poisson distribution with parameters equal to ideal incoming photon and generated electrons respectively~\citep{blanter2000shot,moon2007application}. Though read noise, sometimes modeled by a Gaussian distribution, can be approximated by a Poisson distribution as well~\citep{alter2006intensity}.
In this work, we denote all noise sources as a whole by $\epsilonm$.



\subsection{Non-linearity.}
In a digital camera, an analog-to-digital converter (ADC) is deployed to convert voltage value with a range of $[0,+\infty]$ to pixel intensity with a restricted range $[0,255]$ in RGB color space. A non-linear response function achieves the conversion, compensates the difference of human adaptation in dark and bright regions according to Weber's Law~\citep{weber1831pulsu}, and suppresses representation space.
The non-linear function can be commonly modeled by a Gamma transformation or Gamma encoding as follows~\citep{mann1994beingundigital,mitsunaga1999radiometric}:
\begin{equation}
    \mathcal{G}(\I) = \mu + \lambda \I^\gamma,
\end{equation}
where $\gamma$ is a coefficient set to nearly $0.5$ empirically, e.g., $1/2.2$, and $\mu, \lambda$ are two camera-specific parameters measuring bias and scale respectively.

\subsection{Quantization error} 
Besides introducing non-linearity, the ADC converts a continuous voltage signal to a discrete $N$-bit value. For example, in RGB ($N=8$) and RAW ($N=16$) color space, there are $2^8=256$ and $2^{16}=65536$ possible pixel values for each channel, respectively. However, the conversion from continuous space to discrete one yields a quantization error that can be expressed by a uniform distribution:
\begin{equation}
    \deltam_{i, j} \sim \mathcal{U}\left(-\frac{q}{2}, \frac{q}{2}\right),
\end{equation}
where $\deltam_{i, j}$ is the $(i, j)$-th item of a quantization error $\deltam$, $\frac{q}{2}$ is the half of least significant bit $q$. The discrete intensity can be formulated by continuous intensity plus a quantization noise, i.e., $\I_{disc.} = \I_{cont.} + \deltam$.

\subsection{Dynamic Range Overflow}

Another problem when transferring Retinex theory from human eyes to digital imaging is dynamic range limitation. Human eyes are known to have a significantly large dynamic range compared to vision sensors~\citep{banterleadvanced}. Therefore, scene dynamic range is presumably within the perceptual dynamic range of human eyes in the classic Retinex theory. However, a considerable scene dynamic range may exceed the physical tolerance of imaging devices. Therefore, a clamp function $\mathcal{C}_0^k$ is defined as Eq.~\ref{eq::clampFunc} to clip all exceeding values.
\begin{equation}\label{eq::clampFunc}
    \mathcal{C}_0^k (x) = \left\{
    \begin{aligned} 
        &0, \text{$x < 0$} \\
        &k, \text{$x > k$} 
    \end{aligned}
    \right.,
\end{equation}
where $k$ is the max pixel value, e.g., $k=255$ in RGB space. By defining a masked offset matrix $\Delta\I$, clamp function can also be expressed as follows,
\begin{equation}
    \mathcal{C}_0^k \left(\I\right) = \I + \Delta \I 
\end{equation}
\noindent where $\Delta\I$ has zero items at the pixels whose intensity $\I_{i,j}\in[0,k]$ and non-zero values $\Delta\I_{i,j} = k - \I_{i,j}$ with $\I_{i,j} > k$ or $\Delta\I_{i,j} = -\I_{i,j}$ with $\I_{i,j} < 0$.

\subsection{Digital-Imaging Retinex Theory}

Combining all aforementioned factors, we reformulate the classic Retinex theory as follows:
\begin{equation}\label{newRetinexWOclamp}
    \I = \mathcal{G}(\L \prodm \R + \epsilonm) + \deltam + \Delta \I ,
\end{equation}
The image irradiance perceived by the imaging device is ideal scene radiance $\L\prodm\R$ plus a noise term. Then a camera response function $\mathcal{G}$ is applied to the noisy irradiance. Finally, a quantization noise and a masked offset due to dynamic range compression are added. A comparison of the diagrams of our DI-Retinex theory and previous Retinex theories is shown in Fig.~\ref{fig:DI-Retinex}. Despite the complicated expression, it can be utilized to derive an efficient LLIE model.

\begin{figure}[hpt]
\small
  \centering
  \begin{minipage}[b]{.30\linewidth}
    \centering
  \begin{subfigure}{1.0\linewidth}
    \includegraphics[width=1\linewidth]{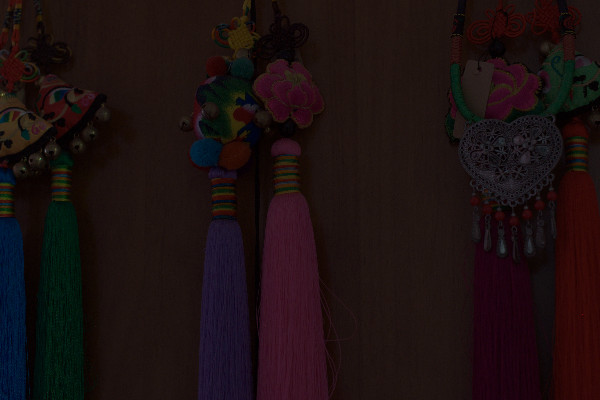}
      \caption{\(\I_l\) example}
      \label{fig:plot_joint_hist_lite1}
  \end{subfigure}
    \hfill
  \begin{subfigure}{1.0\linewidth}
    \includegraphics[width=1\linewidth]{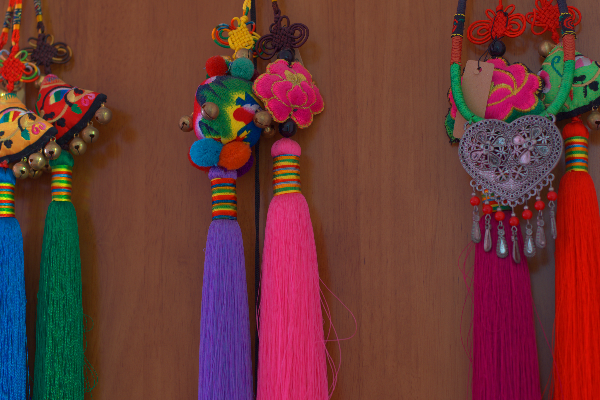}
      \caption{\(\I_h\) example}
      \label{fig:plot_joint_hist_lite2}
  \end{subfigure}
\end{minipage} 
  \hfill
  \begin{subfigure}{0.69\linewidth}
    \includegraphics[width=1\linewidth]{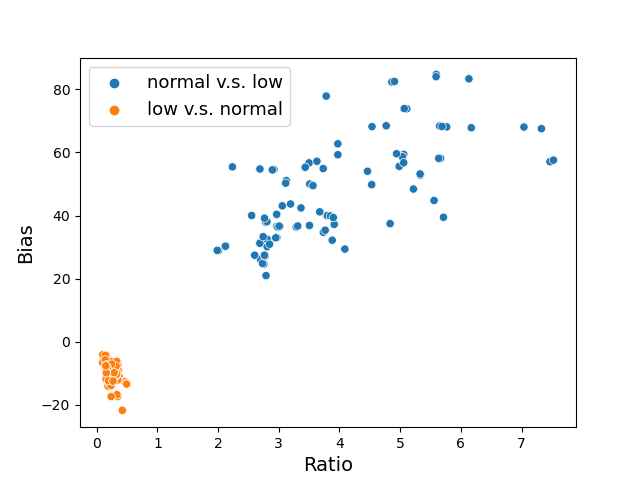}
      \caption{{\colorRevision Scatter plot of ratios and biases}}
      \label{fig:plot_joint_hist_lite3}
  \end{subfigure}
  
  \centering
  \begin{subfigure}{0.495\linewidth}
    \includegraphics[width=1\linewidth]{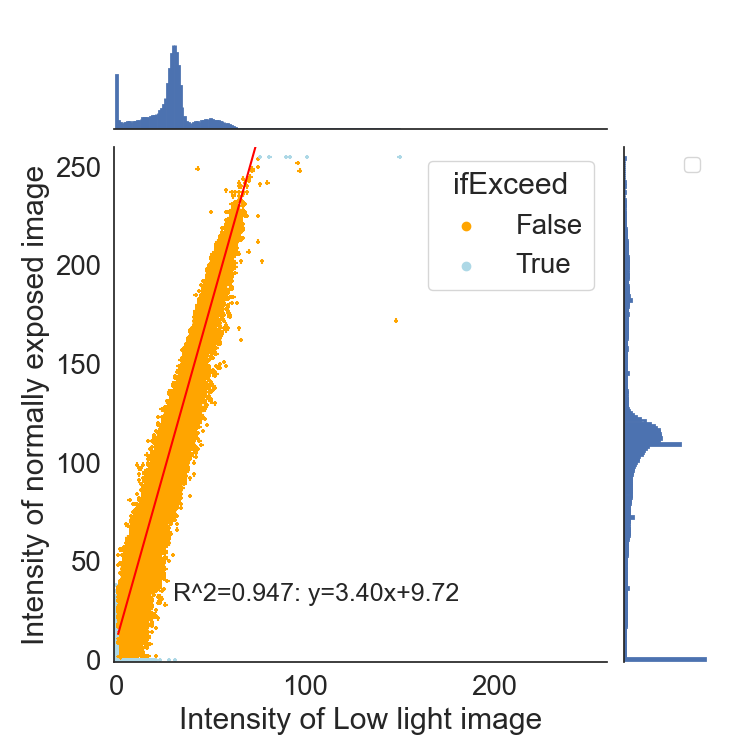}
      \caption{Joint histogram of $\I_l$ vs. $\I_h$.}
      \label{fig:plot_joint_hist_lite4}
  \end{subfigure}
  \hfill
  \begin{subfigure}{0.495\linewidth}
    \includegraphics[width=1\linewidth]{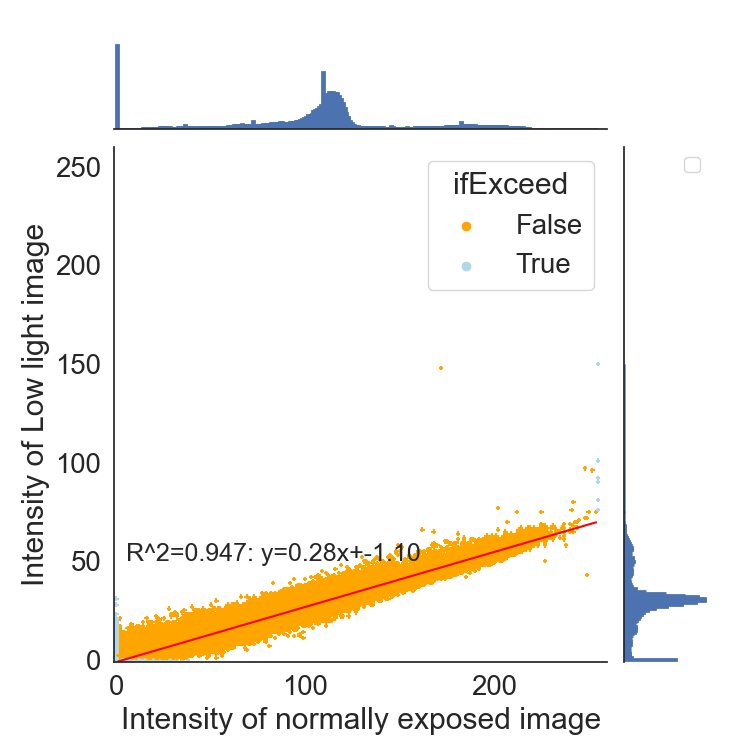}
      \caption{Joint histogram of $\I_h$ vs. $\I_l$}
      \label{fig:plot_joint_hist_lite5}
  \end{subfigure}
\setlength{\abovecaptionskip}{-2pt} 
\setlength{\belowcaptionskip}{-2pt}
  \caption{{\colorRevision The statistical experiments for showing the existence of $\betam$ with a non-neglegible magnitude and the existence of $\betam'$ with a small magnitude.}}
  \label{fig::plot_joint_hist_lite}
  \vspace{-0.2cm}
\end{figure}

\subsection{LLIE Model based on DI-Retinex}

The existing LLIE methods assuming $\R$ as a final enhanced image use an enhancement function $\I_h = \alpham \prodm \I_l $ with a network predicting $f(\I_l) = \alpham$. We extend its formulation by proving Theorem~\ref{theorem:1} involving an offset $\betam$.

\begin{theorem}\label{theorem:1}
Given an underexposed image $\I_l \in \mathbb{R}^{H\times W}$ and one of its possible corresponding properly-exposed images $\I_h \in \mathbb{R}^{H\times W}$, $\exists \alpham, \betam \in \mathbb{R}^{H\times W}$ such that the following relation holds,
\begin{equation}\label{relation}
    \I_h \approx \alpham \I_l + \betam.
\end{equation}
\end{theorem}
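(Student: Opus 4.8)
The plan is to instantiate the DI-Retinex model of Eq.~\ref{newRetinexWOclamp} for the two exposures $\I_l$ and $\I_h$, exploit the fact that the reflectance $\R$ is a scene invariant shared by both captures, and then invert the Gamma response on one equation and substitute it into the other. Since both images come from the same camera, write $\I_l = \mathcal{G}(\L_l\prodm\R+\epsilonm_l)+\deltam_l$, where the overflow term is dropped because an underexposed frame essentially never exceeds the sensor range, and $\I_h = \mathcal{G}(\L_h\prodm\R+\epsilonm_h)+\deltam_h+\Delta\I_h$. Changing the exposure rescales the incident light, so I would introduce a nonnegative gain $\kappa$ (spatially varying in general, a global scalar in the simplest model) with $\L_h\prodm\R \approx \kappa\prodm(\L_l\prodm\R)$, so that the noisy irradiances obey $\L_h\prodm\R+\epsilonm_h \approx \kappa\prodm(\L_l\prodm\R+\epsilonm_l)+\boldsymbol{\eta}$ with $\boldsymbol{\eta} := \epsilonm_h-\kappa\prodm\epsilonm_l$.

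Next I would invert the Gamma encoding $\mathcal{G}(\I)=\mu+\lambda\I^\gamma$ on the low-light equation, obtaining $\L_l\prodm\R+\epsilonm_l = ((\I_l-\mu-\deltam_l)/\lambda)^{1/\gamma}$, and plug this into the expression for $\I_h$. The decisive point is that the Gamma nonlinearity commutes with multiplicative scaling: ignoring for the moment the perturbation $\boldsymbol{\eta}$ and the quantization terms, $\lambda\,[\kappa\prodm((\I_l-\mu)/\lambda)^{1/\gamma}]^\gamma = \kappa^\gamma\prodm(\I_l-\mu)$, so $\lambda$ and the fractional exponent cancel and $\I_h \approx \kappa^\gamma\prodm\I_l + (\onem-\kappa^\gamma)\mu + (\text{error})$. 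Taking $\alpham := \kappa^\gamma$ and absorbing the deterministic bias $(\onem-\kappa^\gamma)\mu$ together with the residual noise, quantization and clamp contributions into $\betam$ yields the asserted affine relation $\I_h\approx\alpham\I_l+\betam$. In particular, when the Gamma bias $\mu$ vanishes and the error terms are neglected this collapses to the classical multiplicative rule $\I_h=\alpham\prodm\I_l$, so the offset $\betam$ is exactly the correction forced by digital imaging.

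The genuine work — and what turns the equality into the ``$\approx$'' — is estimating the terms collected in $\betam$. I would Taylor-expand $[\kappa\prodm(\L_l\prodm\R+\epsilonm_l)+\boldsymbol{\eta}]^\gamma$ to first order about the clean irradiance $\L_l\prodm\R$, which contributes a term of order $\gamma\,\kappa^{\gamma-1}\prodm(\L_l\prodm\R)^{\gamma-1}\prodm\boldsymbol{\eta}$; adding $-\kappa^\gamma\prodm\deltam_l$, $\deltam_h$ and $\Delta\I_h$ accounts for $\betam$ beyond its deterministic part. Two qualitative features then have to be read off: multiplying by $\kappa>1$ inflates the variance of $\boldsymbol{\eta}$ relative to the raw sensor noise, while the curvature of $\mathcal{G}$ (a second-order term proportional to the noise variance) together with the one-sided clamp $\Delta\I_h$ endows the residual with a nonzero mean — precisely the ``offset with non-zero mean and amplified variance'' announced in the introduction, and the justification for the variance-suppression loss used later. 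A fully rigorous remainder bound would additionally require mild hypotheses (reflectance bounded away from $0$, and $\gamma,\kappa$ in realistic ranges) to keep the Taylor error negligible, which is the step I expect to be the main obstacle to making the statement exact rather than approximate.
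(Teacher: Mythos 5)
Your proposal is correct and follows essentially the same route as the paper's own derivation: both invert the Gamma response on the low-light equation, substitute the recovered irradiance into the normally-exposed equation, exploit the cancellation $\lambda\,[\kappa\prodm(\cdot/\lambda)^{1/\gamma}]^{\gamma}=\kappa^{\gamma}\prodm(\cdot)$ to obtain $\alpham=(\L_h\oslash\L_l)^{\gamma}$, and control the noise contribution by a first-order Taylor expansion, collecting the $\mu$, quantization, and clamp terms into $\betam$ (the paper additionally retains $-\alpham\prodm\Delta\I_l$, which you drop upfront on the same grounds the paper later uses to argue $\mathbb{E}[\Delta\I_l]\rightarrow 0$). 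Your closing remarks on the non-zero mean and $\kappa$-amplified variance of the residual likewise mirror the paper's appendix discussion.
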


\begin{proof}
For a low-light and a normally exposed image, their expressions of DI-Retinex theory can be obtained
\begin{align}\label{eq::newlowhigh}
        \left\{
            \begin{aligned}
                \I_l &= \mathcal{G}(\L_l \prodm \R + \epsilonm_l) + \deltam_l + \Delta\I_l\\
                \I_h &= \mathcal{G}(\L_h \prodm \R + \epsilonm_h) + \deltam_h + \Delta\I_h.
            \end{aligned}
        \right. 
\end{align}
Note that $\R$ keeps unchanged because reflectance components are determined by the intrinsic physical characteristics of scene objects. The expression of $\I_h$ can be formulated as a function of $\I_l$ as,
\begin{align}\label{eq::newllie_model}
    &\I_h \approx \alpham \I_l + \betam, \\
    \text{where} &\left\{
    \begin{small}
    \begin{aligned}
        \alpham &= \left(\L_h \oslash \L_l\right)^\gamma  \\
        \betam &= \mu + \deltam_h + \Delta\I_h - \alpham \prodm \left(\mu + \deltam_l + \Delta\I_l\right),
    \end{aligned}
    \end{small}
    \right.
\end{align}
The notation of $\oslash$ means element-wise division between two matrices. The complete derivation of Eq.~\ref{eq::newllie_model} is attached in the appendix. Note that with the assumption of flux consistency along time dimension across spatial pixels during receiving radiance in a short period, $\alpham$ can be further treated as a constant matrix.
\end{proof}

Some may argue that the noise term $\betam$ in Eq.~\ref{eq::newllie_model} can be regarded as $\zerom$ due to the magnitude difference between the signal and the offset term $\betam$. However, we can show its mean is hardly zero, and a factor of $\alpham^2$ considerably amplifies its variance.
The detailed discussion is given in the appendix. 
We also conduct statistical experiments on a low light and normally exposed image pair from LOL-v1~\citep{Chen2018Retinex}. We plot the joint histogram of paired pixels as shown in Fig.~\ref{fig::plot_joint_hist_lite}. 
The dots in orange are those of $\I \in (0, k)$ and the dots in light blue are those of $\I = k$ or $\I = 0$  representing the pixels may have dynamic range overflow. 
A linear regression of $\I_h=\alpham \I_l + \betam$ is done for the dots $\I_h < k$. 
The regressed line is plotted in red and its coefficients are attached beside the line. 
The regression is fitted well by the linear model with a $R^2$ close to 1. 
From the statistical experiment, we find $\betam$ has a non-neglegible magnitude.

\section{Method}

Based on the extended DI-Retinex theory, we notice that directly regarding reflectance $\R$ as the final enhanced result is inappropriate. Similarly, only estimating a single matrix $\alpham$ such that the enhanced image $\tilde \I_h = \alpham \I_l$ is also inaccurate. This is because there exists an offset $\betam$ between the normal image $\I_h$ and proportionally enhanced image $\alpham \I_l$. Therefore, we adopt a contrast brightness adjustment function involving a flexible offset for efficient enhancement.

\subsection{Contrast Brightness Adjustment Function}

The algorithm commonly used in brightness adjustment~\citep{PILEnhancer} is as follows:
\begin{equation}\label{eq::brightness}
    f(\I;a) = a \I,
\end{equation}
where $a$ is a brightness adjustment coefficient ranging from 0 to infinity. When $a<1$, the image becomes dimmer, and when $a=0$ it turns entirely black. When $a=1$, the image keep unchanged. When $a>1$, the image becomes brighter. The algorithm proportionally enhances pixel intensity and thus may generate images with improper contrast.

The algorithm commonly used in contrast adjustment~\citep{PILEnhancer} is as follows:
\begin{equation}\label{eq::contrast}
\begin{split}
    f(\I;a) &= a \I + (1-a) \overline{\I} \\
    &= a (\I - \overline{\I}) + \overline{\I},
\end{split}
\end{equation}
where $a$ is a contrast adjustment coefficient ranging from 0 to infinity and $\overline{\I}$ is the average intensity of the input image. When $a$ equals $0$ or $1$, the image becomes a solid gray image or keeps unchanged, respectively. The idea is to enlarge the contrast range $\I - \overline{\I}$ of zero mean and then add the average intensity $\overline{\I}$ back to make sure the brightness of the image remains unchanged.

However, both Eq.~\ref{eq::brightness} and Eq.~\ref{eq::contrast} cannot adjust contrast and brightness simultaneously. Therefore, a contrast brightness adjustment function~\citep{GPUImageContrastFilter} is used as follows:
\begin{equation}\label{eq::contrastbrightness1}
\begin{split}
    f(\I;a,b) = a (\I - (1-b)\overline{\I}) + (1+b) \overline{\I},
\end{split}
\end{equation}
where $a$ and $b$ are the factors for contrast and brightness adjustment, respectively. A $b>0$ yields a brighter output and $b < 0$ gives a dimmer one. Compared to Eq.~\ref{eq::contrast}, a biased average $(1-b)\overline{\I}$ is subtracted from pixel intensity and later a reverse biased one $(1+b)\overline{\I}$ is added back. By Eq.~\ref{eq::contrastbrightness1}, the image can be adjusted in terms of contrast and brightness simultaneously. 

Neural networks are known to be good at predicting output with a restricted range, e.g., $[-1, 1]$, while $\a_{i, j} \in [0, +\infty)$ and the unknown range of $b$ make network difficult to predict them correctly. Therefore, we first make the $b$ restricted within $[-1,1]$ so that the network can learn to predict them better. Through substituting $\overline{\I}$ by the half of max pixel value $k/2$, Eq.~\ref{eq::contrastbrightness1} can be rewritten 
\begin{equation}\label{eq::contrastbrightness2}
\begin{split}
    f(\I;a,b) = a \left(\I - \frac{k}{2} (1-b)\right) + \frac{k}{2} (1+b) .
\end{split}
\end{equation}
When $a=1$, Eq.~\ref{eq::contrastbrightness2} is reduced to $f(\I; a,b)=\I + bk$, where $b=1$ turns all pixels to max intensity, namely pure white, and $b=-1$ makes all pixels non-positive, namely pure black. $b\in [-1,1]$ increases ($b>0$) or reduces ($b>0$) the brightness of image. And $a\neq 1$ enlarges ($a>1$) or reduces ($a<1$) the contrast of image.

The coefficient $a\in [0, +\infty)$ with range to infinity also hinders the accurate prediction of network. Therefore, a mapping function $g: [-1, 1] \rightarrow [0, +\infty)$ can be introduced. We have experimented several functions $g$, and empirically found the following one the best.
\begin{equation}\label{eq::mapping}
    \begin{split}
        a=g (c) = \tan\left( \frac{45 + (45 - \tau) c}{180} \pi\right)
    \end{split}
\end{equation}
where $\tau$ is a small number to avoid $a\rightarrow\infty$ causing error in program. Based on the function Eq.~\ref{eq::contrastbrightness2} and mapping function Eq.~\ref{eq::mapping}, we propose to use an enhancement network $\mathcal{H}$ consisting of solely three $3\times 3$ convolutional layers with parameters $\thetam$ to predict pixel-wise coefficients $\a$ and $\cc$ for efficient contrast brightness adjustment.
\begin{equation}\label{eq::contrastbrightness3}
\begin{split}
    \tilde \I_h &= \a \left(\I_l - \frac{k}{2}(1-\b)\right) + \frac{k}{2} (1+\b) \\
    &=\a \I_l + \frac{k}{2}(\a \b - \a + \b + 1), 
\end{split}
\end{equation}
where $[\b, \cc] = \mathcal{H} (\I_l; \thetam)$, and $\a = g(\cc)$. The expression is in line of the proportional relationship in Eq.~\ref{eq::newllie_model}. We design two losses for guiding the parameters $\a$ and $\b$ to learn the features of $\alpham$ and $\betam$.

\begin{figure*}[htp]
  \centering
  \begin{subfigure}{0.196\linewidth}
    \includegraphics[width=1\linewidth]{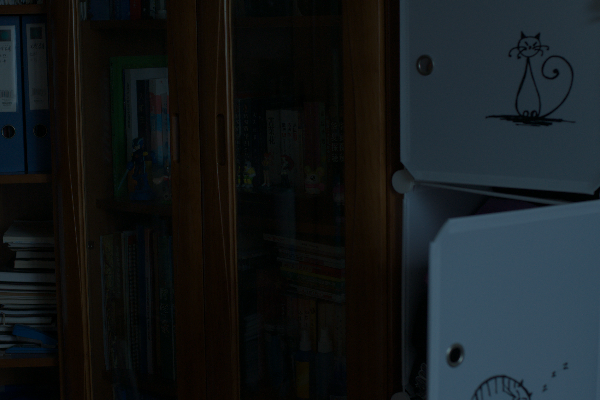}
      \caption{Input}
      \label{fig:lolv1-2-a}
  \end{subfigure}
  \hfill
  \begin{subfigure}{0.196\linewidth}
    \includegraphics[width=1\linewidth]{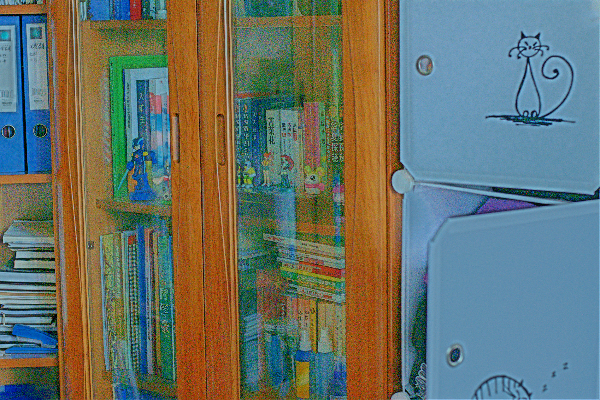}
      \caption{RetinexNet}
      \label{fig:lolv1-2-c}
  \end{subfigure}
  \hfill
  \begin{subfigure}{0.196\linewidth}
    \includegraphics[width=1\linewidth]{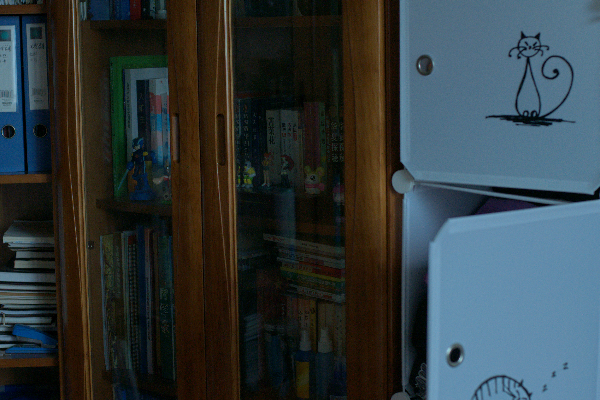}
      \caption{RRDNet}
      \label{fig:lolv1-2-b}
  \end{subfigure}
  \hfill
  \begin{subfigure}{0.196\linewidth}
    \includegraphics[width=1\linewidth]{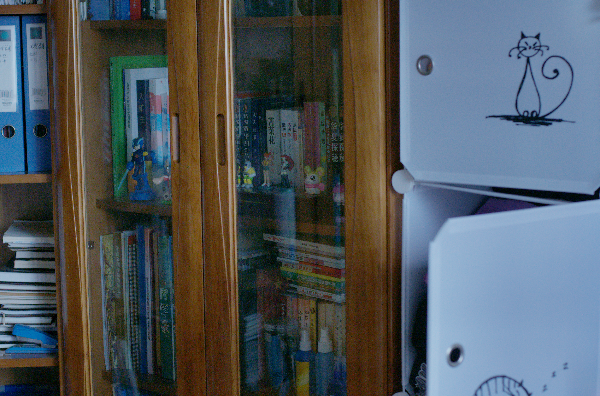}
      \caption{ZeroDCE++}
      \label{fig:lolv1-2-e}
  \end{subfigure}
  \hfill
  \begin{subfigure}{0.196\linewidth}
    \includegraphics[width=1\linewidth]{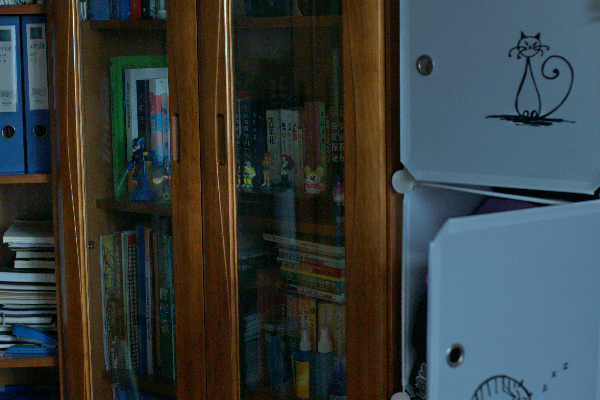}
      \caption{RetinexDIP}
      \label{fig:lolv1-2-f}
  \end{subfigure}
  
  \centering
  \begin{subfigure}{0.196\linewidth}
    \includegraphics[width=1\linewidth]{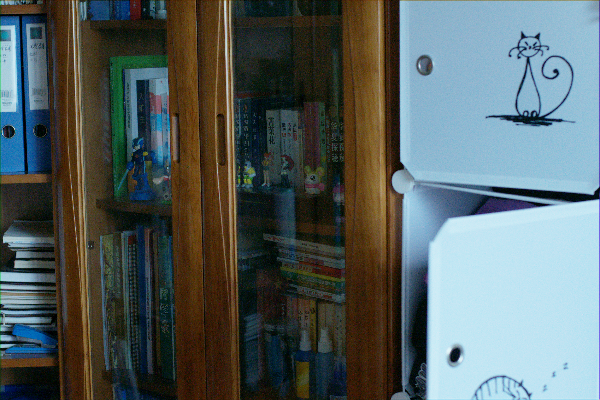}
      \caption{SCI}
      \label{fig:lolv1-2-g}
  \end{subfigure}
  \hfill
  \begin{subfigure}{0.196\linewidth}
    \includegraphics[width=1\linewidth]{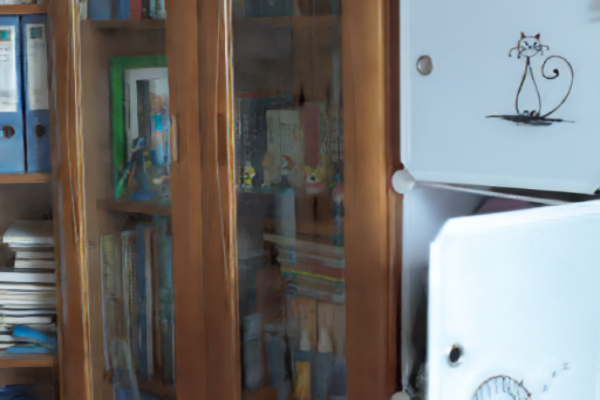}
      \caption{NightEnhance}
      \label{fig:lolv1-2-h}
  \end{subfigure}
  \hfill
  \begin{subfigure}{0.196\linewidth}
    \includegraphics[width=1\linewidth]{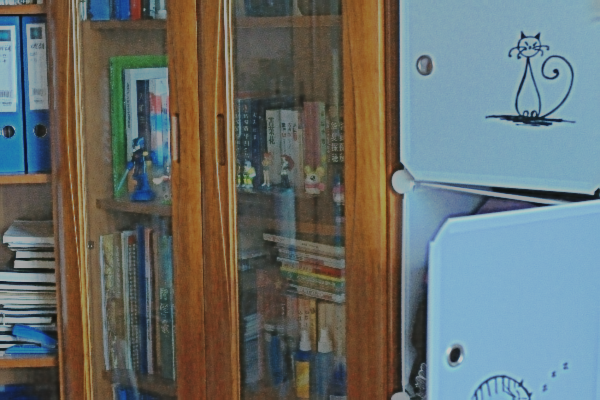}
      \caption{PairLIE}
      \label{fig:lolv1-2-d}
  \end{subfigure}
  \hfill
  \begin{subfigure}{0.196\linewidth}
    \includegraphics[width=1\linewidth]{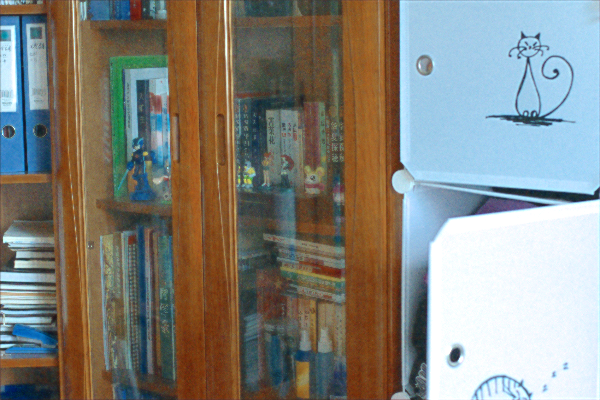}
      \caption{Ours}
      \label{fig:lolv1-2-i}
  \end{subfigure}
  \hfill
  \begin{subfigure}{0.196\linewidth}
    \includegraphics[width=1\linewidth]{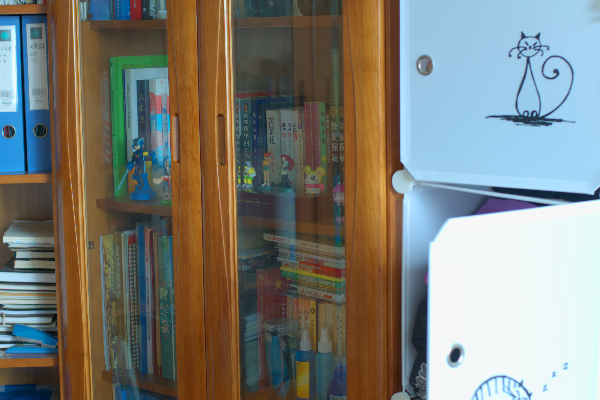}
      \caption{GT}
      \label{fig:lolv1-2-j}
  \end{subfigure}
\setlength{\abovecaptionskip}{-3pt} 
\setlength{\belowcaptionskip}{-3pt}
  \caption{{\colorRevision A visual comparison of enhancement results on LOL-v1. Please zoom in for better visualization.} }
  \label{fig:lolv1-2}
\end{figure*}
\begin{figure*}[htp]
  \centering
  \begin{subfigure}{0.196\linewidth}
    \includegraphics[width=1\linewidth]{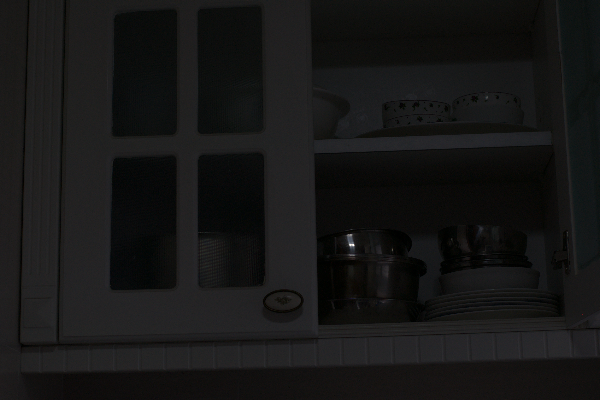}
      \caption{Input}
      \label{fig:lolv1-6-a}
  \end{subfigure}
  \hfill
  \begin{subfigure}{0.196\linewidth}
    \includegraphics[width=1\linewidth]{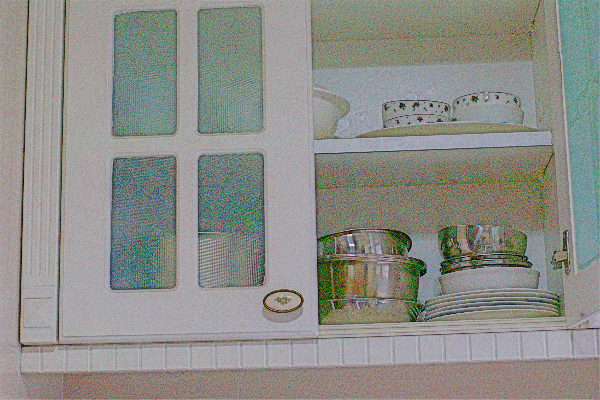}
      \caption{RetinexNet}
      \label{fig:lolv1-6-c}
  \end{subfigure}
  \hfill
  \begin{subfigure}{0.196\linewidth}
    \includegraphics[width=1\linewidth]{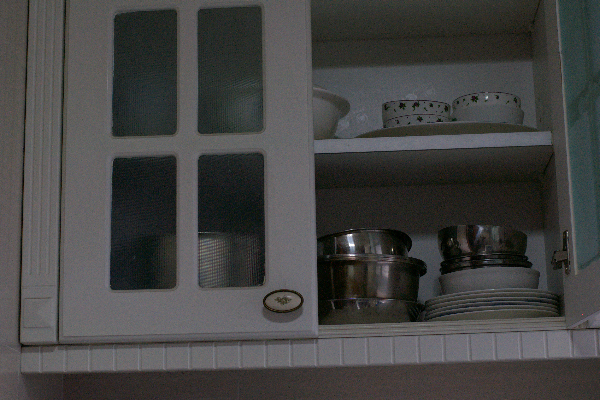}
      \caption{RRDNet}
      \label{fig:lolv1-6-d}
  \end{subfigure}
  \hfill
  \begin{subfigure}{0.196\linewidth}
    \includegraphics[width=1\linewidth]{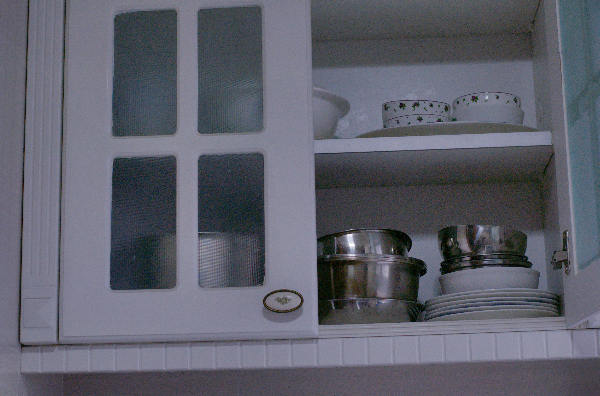}
      \caption{ZeroDCE++}
      \label{fig:lolv1-6-e}
  \end{subfigure}
  \hfill
  \begin{subfigure}{0.196\linewidth}
    \includegraphics[width=1\linewidth]{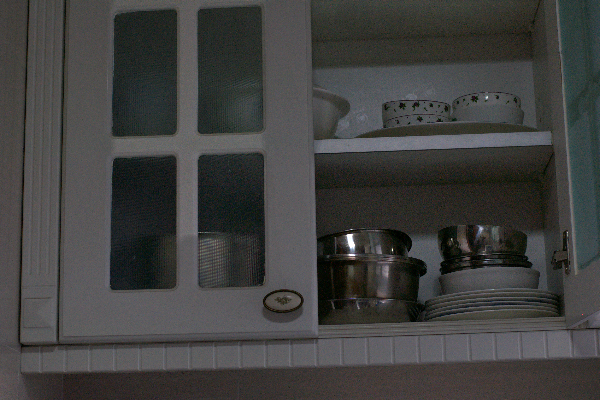}
      \caption{RetinexDIP}
      \label{fig:lolv1-6-f}
  \end{subfigure}
  
  \centering
  \begin{subfigure}{0.196\linewidth}
    \includegraphics[width=1\linewidth]{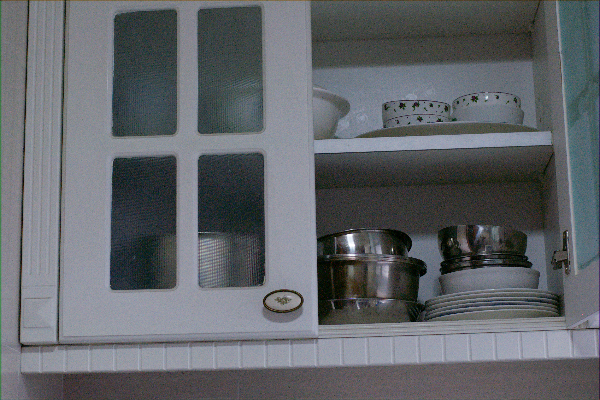}
      \caption{SCI}
      \label{fig:lolv1-6-g}
  \end{subfigure}
  \hfill
  \begin{subfigure}{0.196\linewidth}
    \includegraphics[width=1\linewidth]{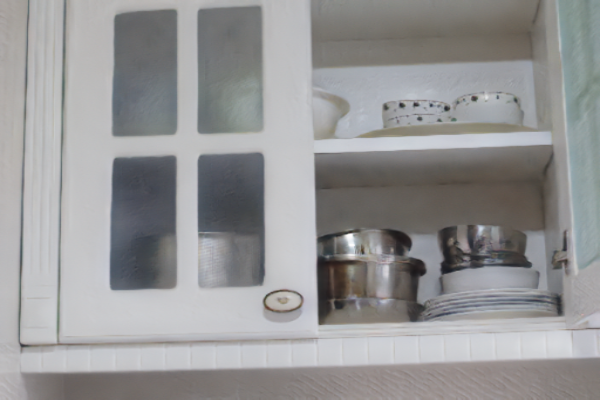}
      \caption{NightEnhance}
      \label{fig:lolv1-6-h}
  \end{subfigure}
  \hfill
  \begin{subfigure}{0.196\linewidth}
    \includegraphics[width=1\linewidth]{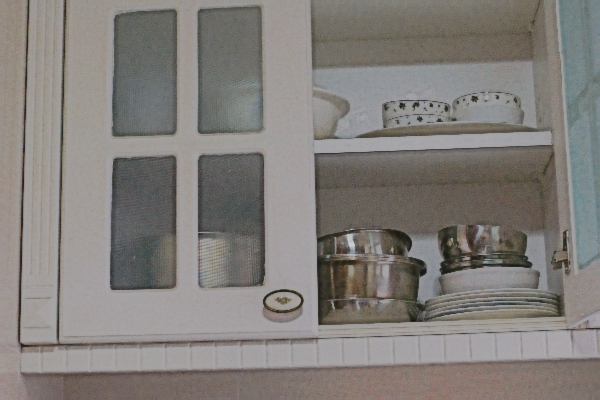}
      \caption{PairLIE}
      \label{fig:lolv1-6-b}
  \end{subfigure}
  \hfill
  \begin{subfigure}{0.196\linewidth}
    \includegraphics[width=1\linewidth]{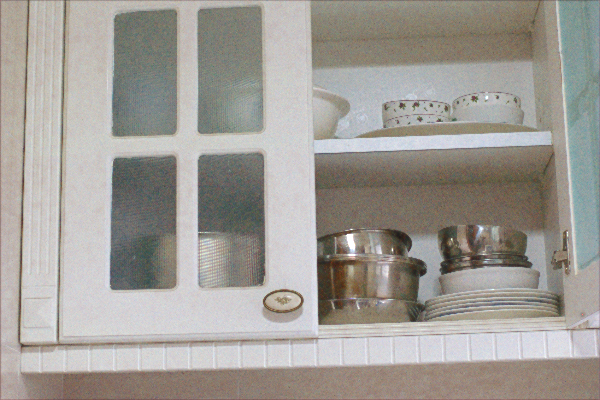}
      \caption{Ours}
      \label{fig:lolv1-6-i}
  \end{subfigure}
  \hfill
  \begin{subfigure}{0.196\linewidth}
    \includegraphics[width=1\linewidth]{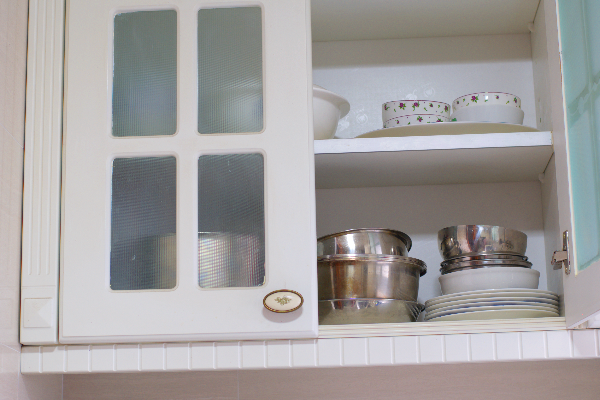}
      \caption{GT}
      \label{fig:lolv1-6-j}
  \end{subfigure}
\setlength{\abovecaptionskip}{-3pt} 
\setlength{\belowcaptionskip}{-3pt}
  \caption{{\colorRevision A visual comparison of enhancement results on LOL-v1. Please zoom in for better visualization.} }
  \label{fig:lolv1-6}
\end{figure*}

\subsection{Reverse Degradation Loss}

Though the enhancement based on Eq.~\ref{eq::newllie_model} has a non-negligible offset, the reverse degradation process transforming $\I_h$ to $\I_l$ can be approximated to a linear proportion. We can similarly obtain
\begin{align}\label{eq::reverseDegradation}
    &\I_l \approx \alpham' \I_h + \betam', \\
    \text{where} &\left\{
    \begin{small}
    \begin{aligned}
        \alpham' &= \left(\L_l \oslash \L_h \right)^\gamma\\
        \betam' &= \mu + \deltam_l + \Delta\I_l - \alpham' \prodm \left(\mu + \deltam_h + \Delta\I_h\right).
    \end{aligned}
    \end{small}
    \right.
\end{align}
We apply Gamma transformation to Eq.~\ref{eq::reverseDegradation} and can obtain $\I_l^{\frac{1}{\gamma}} \approx \left(\L_l \oslash \L_h \right) \I_h^{\frac{1}{\gamma}}$. The detailed discussion is attached in the appendix.
Therefore, we propose a masked reverse degradation loss as follows:
\begin{equation}\label{eq::reverseDegradationloss}
    \mathcal{L_{RD}} = \mathcal{M}\left(\tilde \I_h\right) \|\I_l^\frac{1}{\gamma} - \r'\tilde \I_h^\frac{1}{\gamma}\|,
\end{equation}
where $\tilde \I_h$ is the resulting enhanced image based on Eq.~\ref{eq::contrastbrightness3}, $\r'= \left(\frac{\overline{\I_l}}{E}\right)^{1/\gamma}$ with an exposure factor $E\sim \mathcal{N}(\eta,\sigma^2)$. Based on the finding that the average exposure of a normally exposed image is nearly $0.5$~\citep{mertens2007exposure,mertens2009exposure}, $\eta$ is set to $0.5$. The variance of Gaussian $\sigma^2$ specifies the divergence of targeted exposures, which will be determined in experimental settings. The power of $1/\gamma$ can suppress the magnitude of $\betam'$ and its details are discussed in the appendix. 
We aso plot the joint histogram of $\I_h$ vs. $\I_l$ as shown in the last plot of Fig.~\ref{fig::plot_joint_hist_lite}. 
A linear regression of $\I_l=\alpham \I_h + \betam'$ is done for the dots $\I_h \in (0,k)$. 
The regression is fitted well by the linear model with a large $R^2$. As shown, $\betam'$ has a small magnitude and is a negligible offset.

The mask function $\mathcal{M}$ is defined as
\begin{equation}
    \mathcal{M}\left(x\right) = 
    \left\{
    \begin{split}
        &1 \text{, if $x<k$} \\
        &0 \text{, otherwise}. 
    \end{split}
    \right.
\end{equation}
The mask function is introduced because the approximate linear proportion between $\I_l$ and $\I_h$ is limited in regions without dynamic range compression.

\subsection{Variance Suppression loss}

\begin{figure*}[htbp]
  \centering
  \begin{subfigure}{0.196\linewidth}
    \includegraphics[width=1\linewidth]{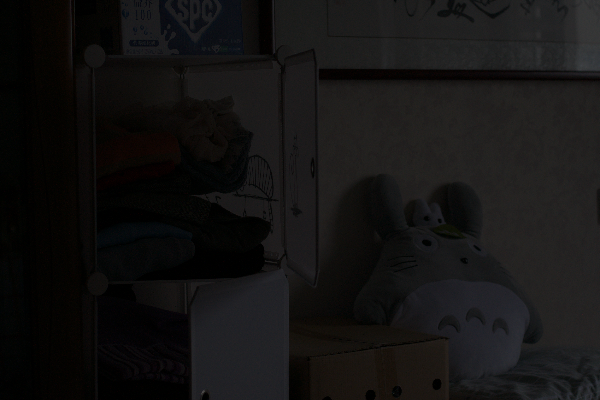}
      \caption{Input}
      \label{fig:lolv1-3-a}
  \end{subfigure}
  \hfill
  \begin{subfigure}{0.196\linewidth}
    \includegraphics[width=1\linewidth]{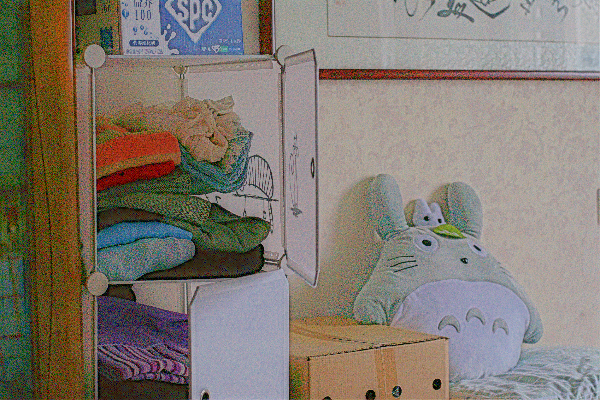}
      \caption{RetinexNet}
      \label{fig:lolv1-3-c}
  \end{subfigure}
  \hfill
  \begin{subfigure}{0.196\linewidth}
    \includegraphics[width=1\linewidth]{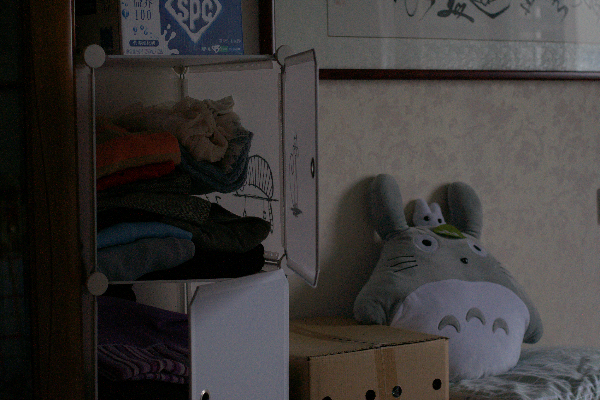}
      \caption{RRDNet}
      \label{fig:lolv1-3-d}
  \end{subfigure}
  \hfill
  \begin{subfigure}{0.196\linewidth}
    \includegraphics[width=1\linewidth]{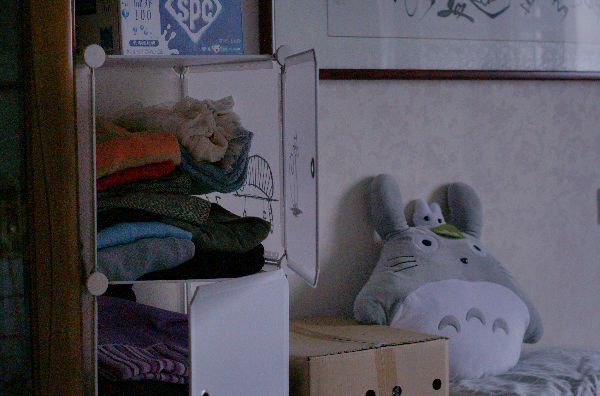}
      \caption{ZeroDCE++}
      \label{fig:lolv1-3-e}
  \end{subfigure}
  \hfill
  \begin{subfigure}{0.196\linewidth}
    \includegraphics[width=1\linewidth]{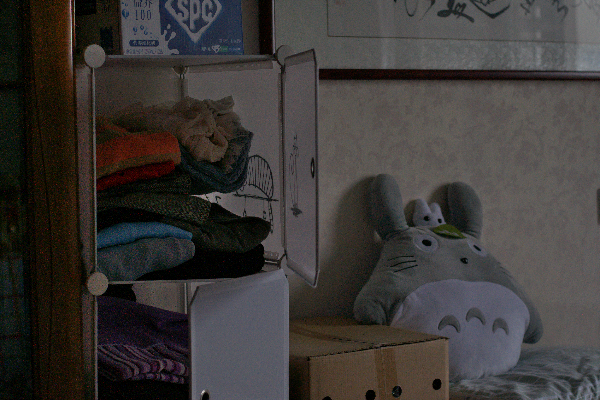}
      \caption{RetinexDIP}
      \label{fig:lolv1-3-f}
  \end{subfigure}
  
  \centering
  \begin{subfigure}{0.196\linewidth}
    \includegraphics[width=1\linewidth]{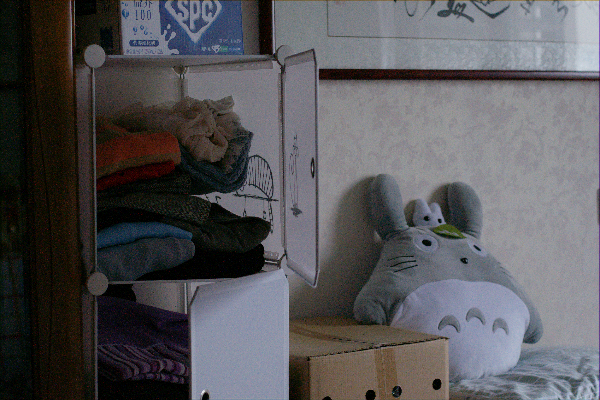}
      \caption{SCI}
      \label{fig:lolv1-3-g}
  \end{subfigure}
  \hfill
  \begin{subfigure}{0.196\linewidth}
    \includegraphics[width=1\linewidth]{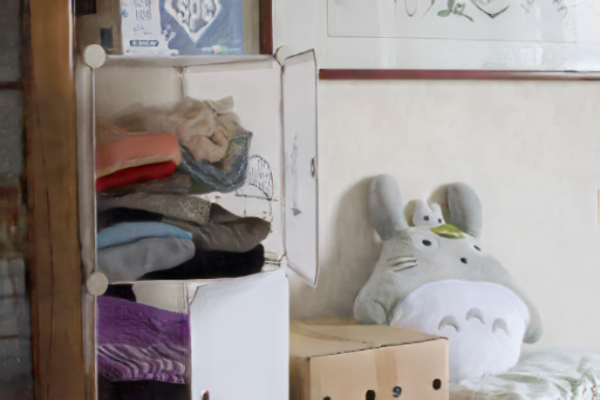}
      \caption{NightEnhance}
      \label{fig:lolv1-3-h}
  \end{subfigure}
  \hfill
  \begin{subfigure}{0.196\linewidth}
    \includegraphics[width=1\linewidth]{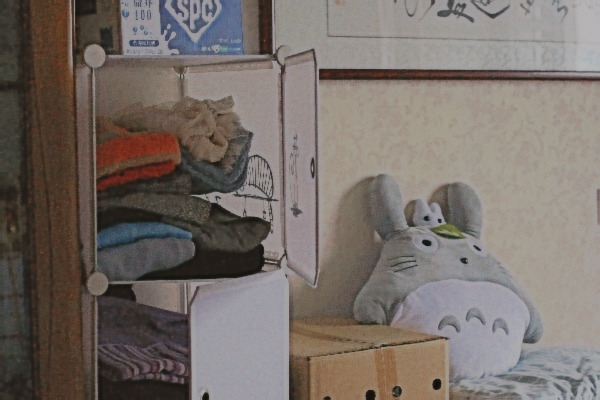}
      \caption{PairLIE}
      \label{fig:lolv1-3-b}
  \end{subfigure}
  \hfill
  \begin{subfigure}{0.196\linewidth}
    \includegraphics[width=1\linewidth]{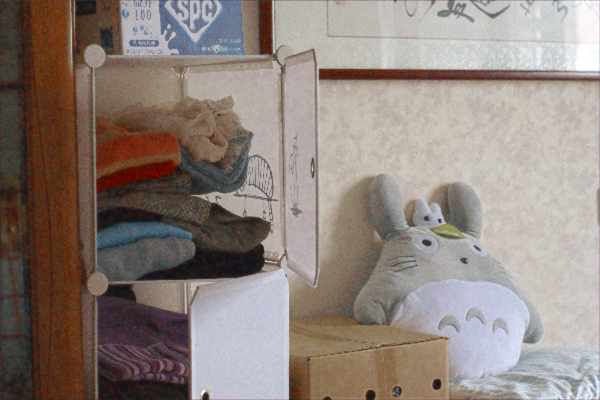}
      \caption{Ours}
      \label{fig:lolv1-3-i}
  \end{subfigure}
  \hfill
  \begin{subfigure}{0.196\linewidth}
    \includegraphics[width=1\linewidth]{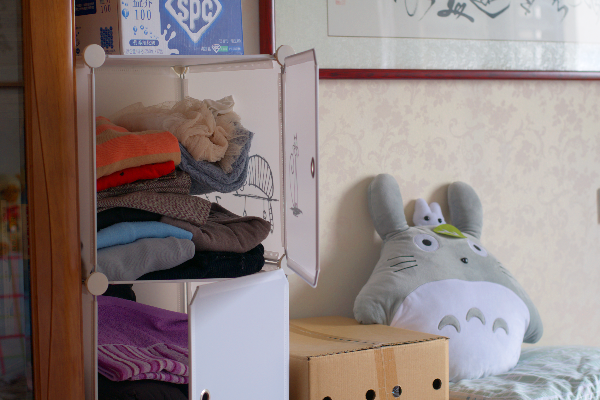}
      \caption{GT}
      \label{fig:lolv1-3-j}
  \end{subfigure}
\setlength{\abovecaptionskip}{-3pt} 
\setlength{\belowcaptionskip}{-3pt}
  \caption{{\colorRevision A visual comparison of enhancement results on LOL-v1. Please zoom in for better visualization.} }
  \label{fig:lolv1-3}
\end{figure*}
\begin{figure*}[htbp]
  \centering
  \begin{subfigure}{0.196\linewidth}
    \includegraphics[width=1\linewidth]{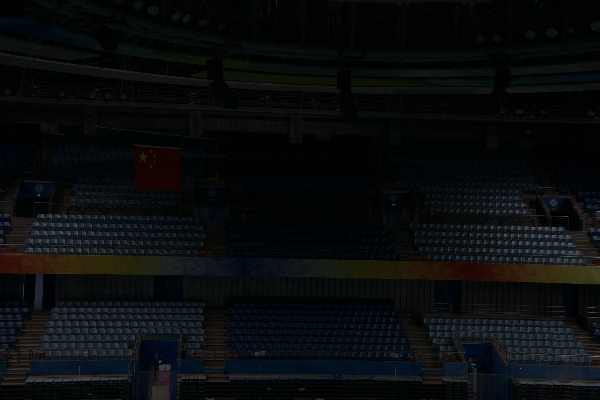}
      \caption{Input}
      \label{fig:lolv2-2-a}
  \end{subfigure}
  \hfill
  \begin{subfigure}{0.196\linewidth}
    \includegraphics[width=1\linewidth]{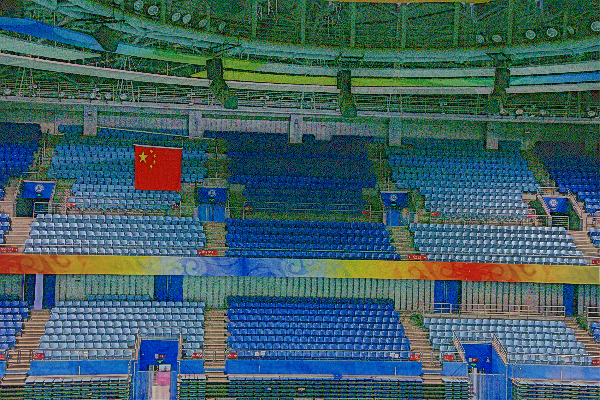}
      \caption{RetinexNet}
      \label{fig:lolv2-2-c}
  \end{subfigure}
  \hfill
  \begin{subfigure}{0.196\linewidth}
    \includegraphics[width=1\linewidth]{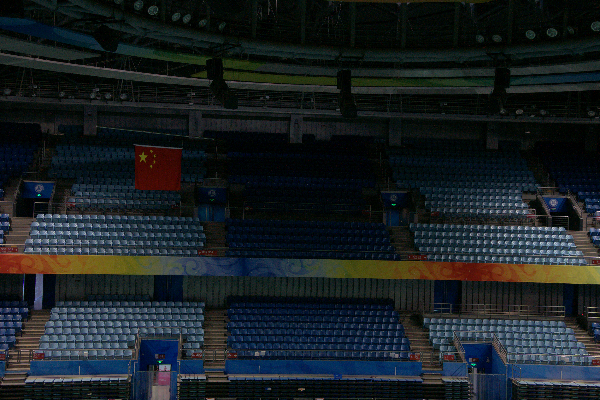}
      \caption{RRDNet}
      \label{fig:lolv2-2-d}
  \end{subfigure}
  \hfill
  \begin{subfigure}{0.196\linewidth}
    \includegraphics[width=1\linewidth]{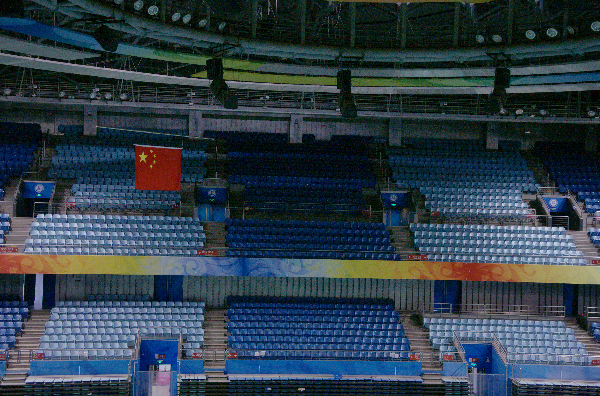}
      \caption{ZeroDCE++}
      \label{fig:lolv2-2-e}
  \end{subfigure}
  \hfill
  \begin{subfigure}{0.196\linewidth}
    \includegraphics[width=1\linewidth]{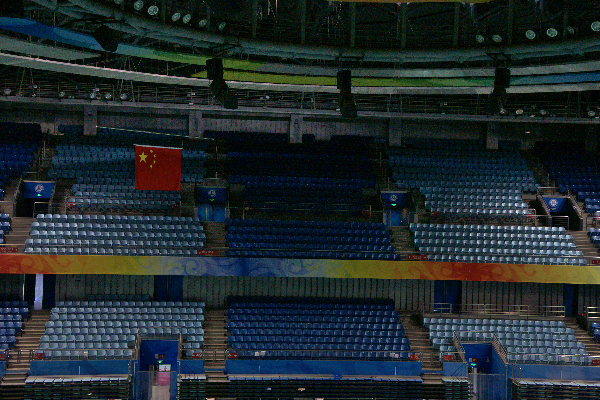}
      \caption{RetinexDIP}
      \label{fig:lolv2-2-f}
  \end{subfigure}
  
  \centering
  \begin{subfigure}{0.196\linewidth}
    \includegraphics[width=1\linewidth]{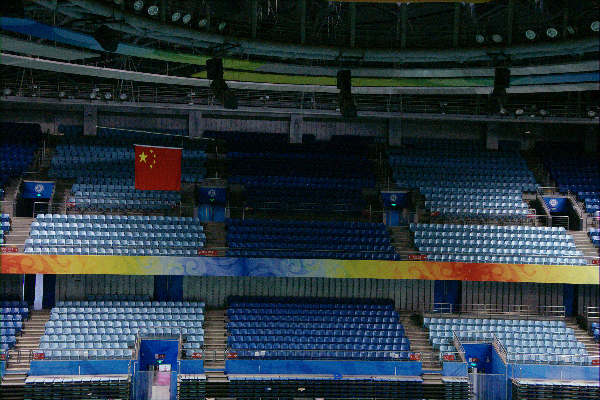}
      \caption{SCI}
      \label{fig:lolv2-2-g}
  \end{subfigure}
  \hfill
  \begin{subfigure}{0.196\linewidth}
    \includegraphics[width=1\linewidth]{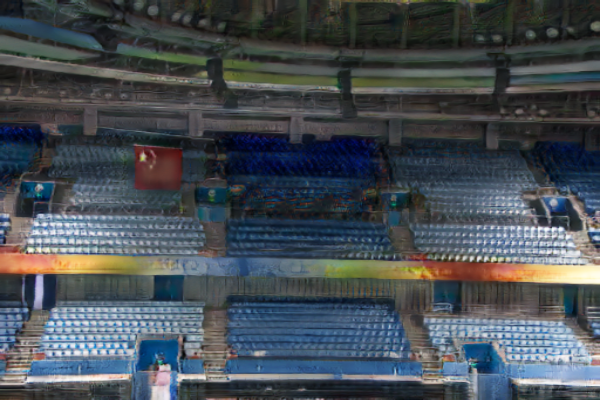}
      \caption{NightEnhance}
      \label{fig:lolv2-2-h}
  \end{subfigure}
  \hfill
  \begin{subfigure}{0.196\linewidth}
    \includegraphics[width=1\linewidth]{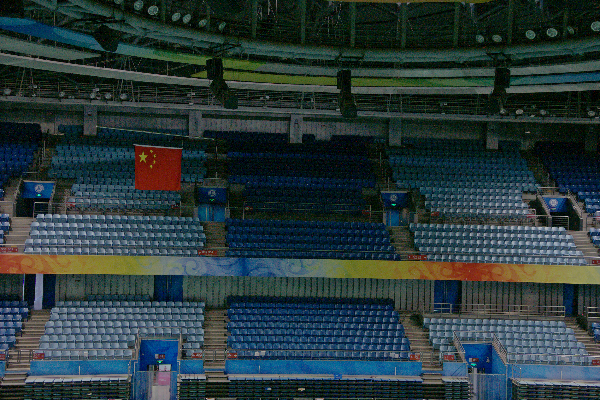}
      \caption{PairLIE}
      \label{fig:lolv2-2-b}
  \end{subfigure}
  \hfill
  \begin{subfigure}{0.196\linewidth}
    \includegraphics[width=1\linewidth]{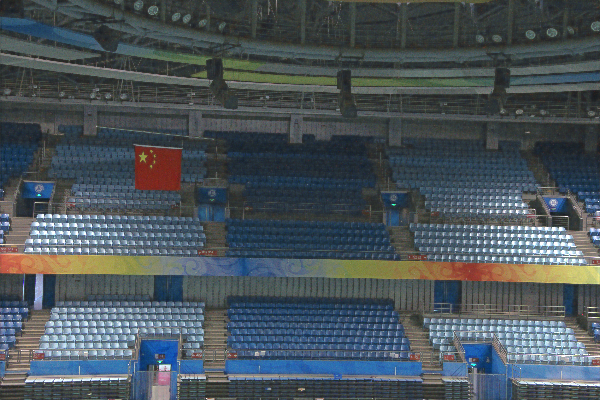}
      \caption{Ours}
      \label{fig:lolv2-2-i}
  \end{subfigure}
  \hfill
  \begin{subfigure}{0.196\linewidth}
    \includegraphics[width=1\linewidth]{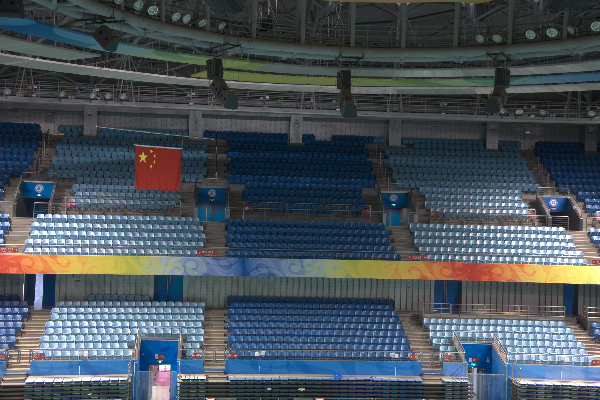}
      \caption{GT}
      \label{fig:lolv2-2-j}
  \end{subfigure}
\setlength{\abovecaptionskip}{-3pt} 
\setlength{\belowcaptionskip}{-3pt}
  \caption{{\colorRevision A visual comparison of enhancement results on LOL-v2. Please zoom in for better visualization.} }
  \label{fig:lolv2-2}
\end{figure*}

We want the second term in Eq.~\ref{eq::contrastbrightness3} consisting of $\a$ and $\b$ to learn the feature of $\betam$ in Eq.~\ref{eq::newllie_model}. We have shown that though $\betam$ has a non-zero mean that yields a necessary offset for modeling enhanced images, it has an amplified variance due to $\alpham > \onem$, which will contribute to an enhanced variance of recovered images. Therefore, we design a variance suppression loss such that its variance approaches zero. The expression of the variance suppression loss is given by 
\begin{equation}\label{eq::varianceSuppressionLoss}
    \mathcal{L}_{VS} = \|Var(\a\b -\a + \b + 1)\|,
\end{equation}
\noindent where $Var$ is computed along each channels. 
As a result, the overall loss of our model is as follows:
\begin{equation}\label{eq::overallLoss}
    \mathcal{L}_{overall} = \mathcal{L}_{RD} + \mathcal{L}_{VS} .
\end{equation}

{\colorRevision
\subsection{Relation between DI-Retinex and the enhancement model}
The traditional Retinex theory, originally borrowed from optics, has served as a fundamental guiding principle for numerous prior low-light enhancement works. 
However, it has come to our attention that the digital imaging process introduces deviations from the optical Retinex theory, resulting in the emergence of what we term the DI-Retinex theory. 
In accordance with this novel theory, we derive Eq.~\ref{eq::newllie_model}, wherein $\betam$ emerges as a significant term. 
In contrast, if we were to rely solely on the conventional Retinex theory, $\betam$ would assume a value of zero. 
This, in turn, renders the subsequently derived enhancement model in Eq.~\ref{eq::contrastbrightness3}, which incorporates an offset term, invalid.
In summary, the proposed DI-Retinex theory ensures the presence of a non-negligible offset term in the relationship between low-light and normal images. 
This, in turn, substantiates the formulation of the presented enhancement model, which features an offset term. 
Furthermore, the DI-Retinex theory establishes a reverse relationship between low-light and normal images, enabling us to derive the reverse degradation loss.
}

\section{Experiments}

\subsection{Experimental Settings}

\noindent\textbf{Datasets.} We adopt the datasets for comparison including 
LOL-v1~\citep{Chen2018Retinex}, LOL-v2~\citep{yang2021sparse}, and 
DARKFACE~\citep{yuan2019ug}. The details and statistics of the datasets are illustrated in the supplemental material.
The number of training and test sets of LOL-v1~\citep{Chen2018Retinex}, LOL-v2-Real~\citep{yang2021sparse} and DARKFACE~\citep{yuan2019ug} are illustrated in Table.~\ref{tab:statsData}. We randomly sample 1000 images for DARKFACE following ~\citep{li2021low,ma2022toward}. Note that LOL-v2 has a synthetic part and a real captured part. Since we are modeling realistic low light degradation by extended Retinex theory, we only experiment on its real part. 
\begin{table}[htbp]
\small
  \centering
  \caption{The statistics of datasets.}
    \begin{tabular}{ccc}
    \toprule
          & Train & Test \\
    \midrule
    LOL-v1 & 485   & 15 \\
    LOL-v2-Real & 689   & 100 \\
    \midrule
    \midrule
    \multirow{2}[2]{*}{DARKFACE} & \# of Images & \# of Faces \\
          & 1000  & 37040 \\
    \bottomrule
    \end{tabular}%
  \label{tab:statsData}%
\end{table}%

\begin{figure*}[htbp]
  \centering
  \begin{subfigure}{0.196\linewidth}
    \includegraphics[width=1\linewidth]{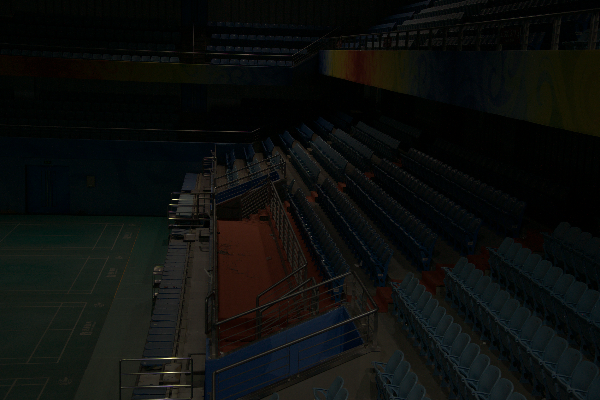}
      \caption{Input}
      \label{fig:lolv2-3-a}
  \end{subfigure}
  \hfill
  \begin{subfigure}{0.196\linewidth}
    \includegraphics[width=1\linewidth]{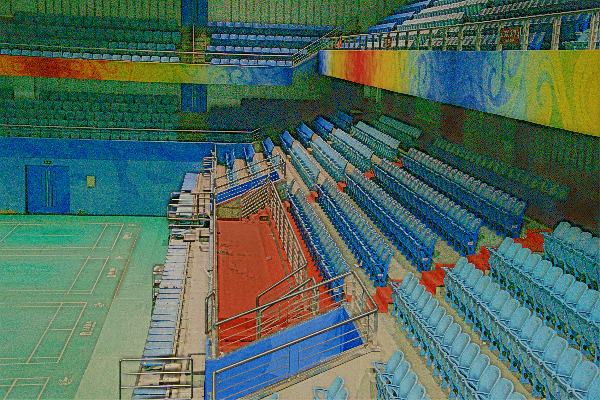}
      \caption{RetinexNet}
      \label{fig:lolv2-3-c}
  \end{subfigure}
  \hfill
  \begin{subfigure}{0.196\linewidth}
    \includegraphics[width=1\linewidth]{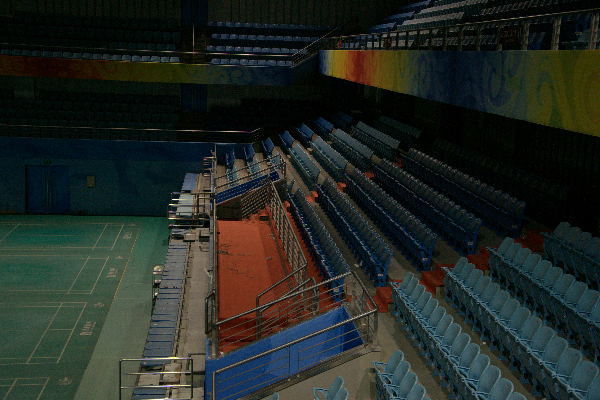}
      \caption{RRDNet}
      \label{fig:lolv2-3-d}
  \end{subfigure}
  \hfill
  \begin{subfigure}{0.196\linewidth}
    \includegraphics[width=1\linewidth]{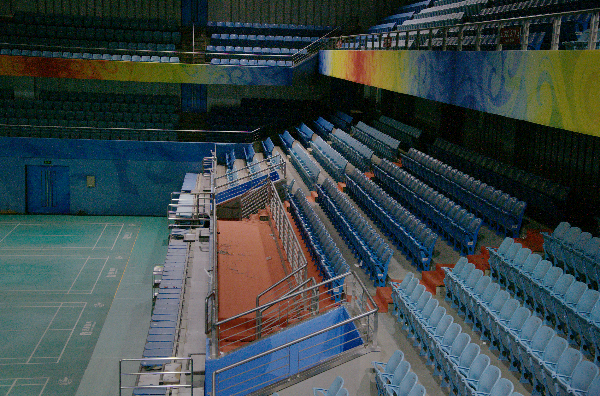}
      \caption{ZeroDCE++}
      \label{fig:lolv2-3-e}
  \end{subfigure}
  \hfill
  \begin{subfigure}{0.196\linewidth}
    \includegraphics[width=1\linewidth]{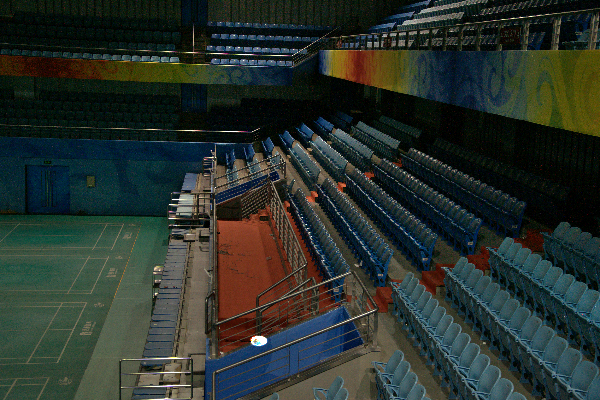}
      \caption{RetinexDIP}
      \label{fig:lolv2-3-f}
  \end{subfigure}
  
  \centering
  \begin{subfigure}{0.196\linewidth}
    \includegraphics[width=1\linewidth]{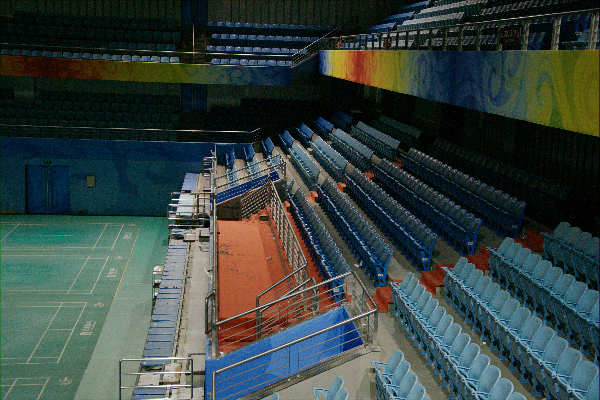}
      \caption{SCI}
      \label{fig:lolv2-3-g}
  \end{subfigure}
  \hfill
  \begin{subfigure}{0.196\linewidth}
    \includegraphics[width=1\linewidth]{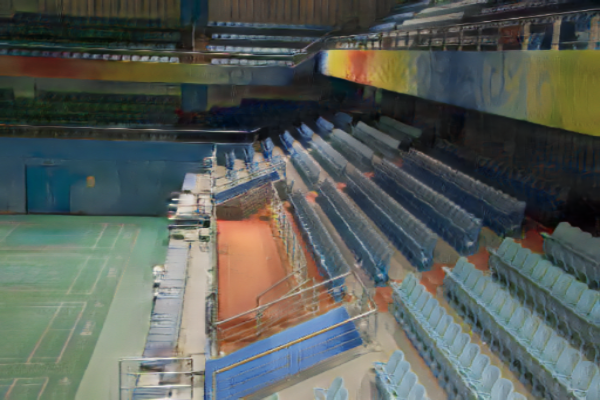}
      \caption{NightEnhance}
      \label{fig:lolv2-3-h}
  \end{subfigure}
  \hfill
  \begin{subfigure}{0.196\linewidth}
    \includegraphics[width=1\linewidth]{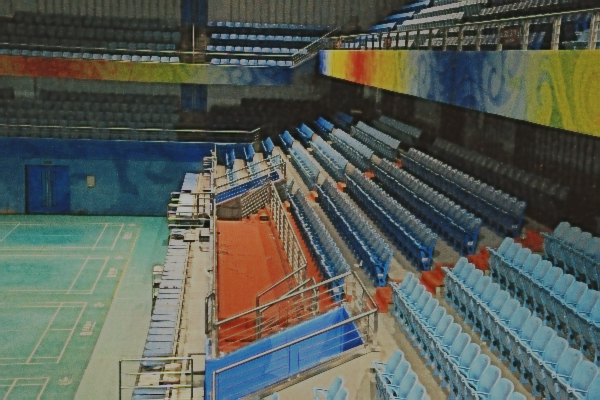}
      \caption{PairLIE}
      \label{fig:lolv2-3-b}
  \end{subfigure}
  \hfill
  \begin{subfigure}{0.196\linewidth}
    \includegraphics[width=1\linewidth]{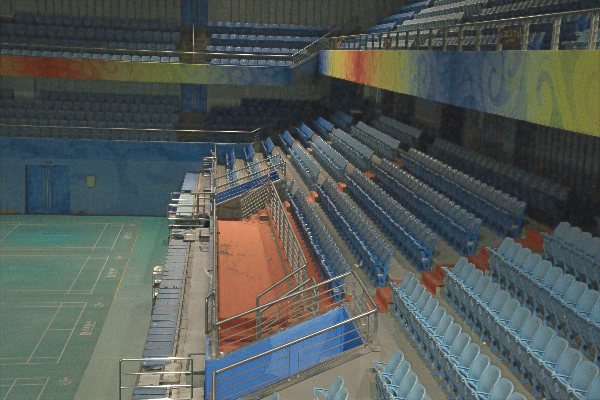}
      \caption{Ours}
      \label{fig:lolv2-3-i}
  \end{subfigure}
  \hfill
  \begin{subfigure}{0.196\linewidth}
    \includegraphics[width=1\linewidth]{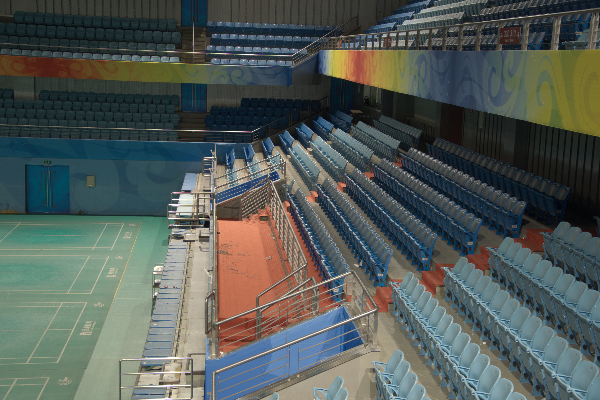}
      \caption{GT}
      \label{fig:lolv2-3-j}
  \end{subfigure}
\setlength{\abovecaptionskip}{-3pt} 
\setlength{\belowcaptionskip}{-3pt}
  \caption{{\colorRevision A visual comparison of enhancement results on LOL-v2. Please zoom in for better visualization.} }
  \label{fig:lolv2-3}
\end{figure*}
\begin{figure*}[htbp]
  \centering
  \begin{subfigure}{0.196\linewidth}
    \includegraphics[width=1\linewidth]{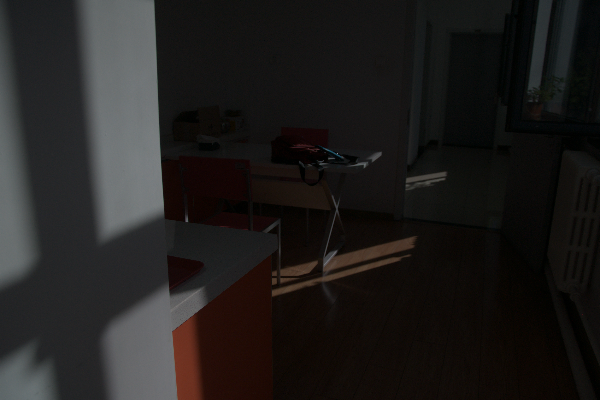}
      \caption{Input}
      \label{fig:lolv2-4-a}
  \end{subfigure}
  \hfill
  \begin{subfigure}{0.196\linewidth}
    \includegraphics[width=1\linewidth]{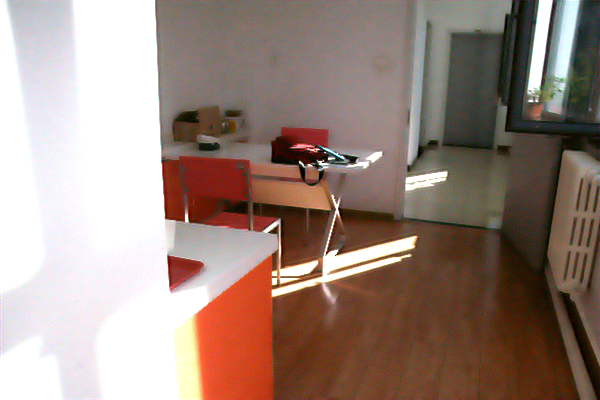}
      \caption{RUAS}
      \label{fig:lolv2-4-c}
  \end{subfigure}
  \hfill
  \begin{subfigure}{0.196\linewidth}
    \includegraphics[width=1\linewidth]{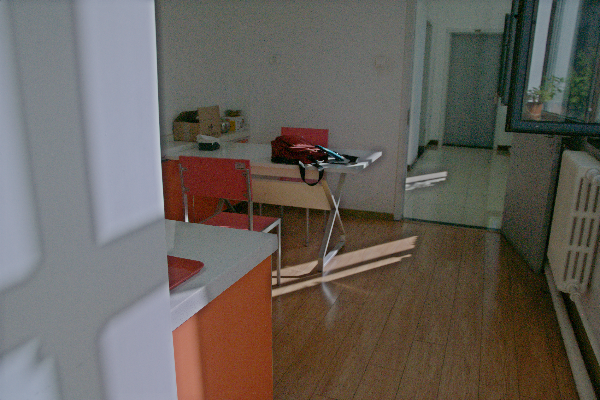}
      \caption{ZeroDCE}
      \label{fig:lolv2-4-d}
  \end{subfigure}
  \hfill
  \begin{subfigure}{0.196\linewidth}
    \includegraphics[width=1\linewidth]{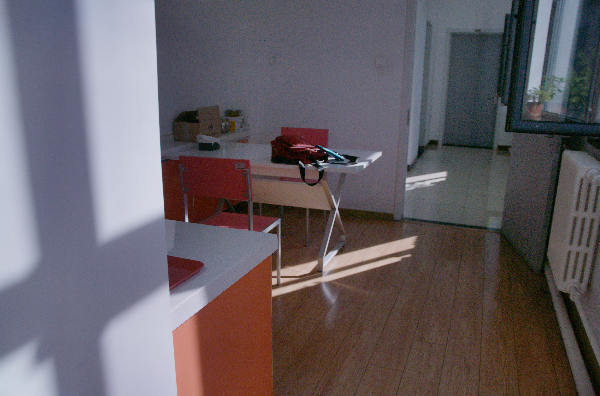}
      \caption{ZeroDCE++}
      \label{fig:lolv2-4-e}
  \end{subfigure}
  \begin{subfigure}{0.196\linewidth}
    \includegraphics[width=1\linewidth]{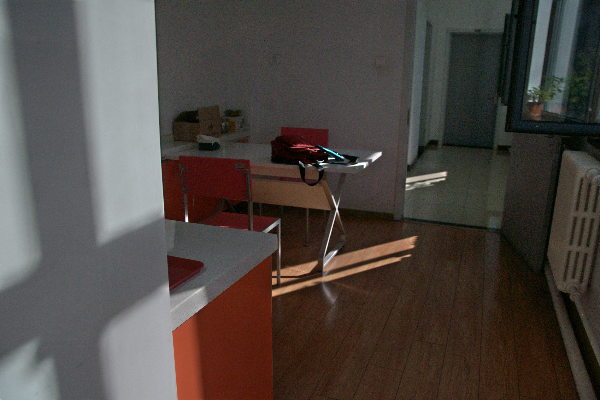}
      \caption{RetinexDIP}
      \label{fig:lolv2-4-f}
  \end{subfigure}
  
  \centering
  \begin{subfigure}{0.196\linewidth}
    \includegraphics[width=1\linewidth]{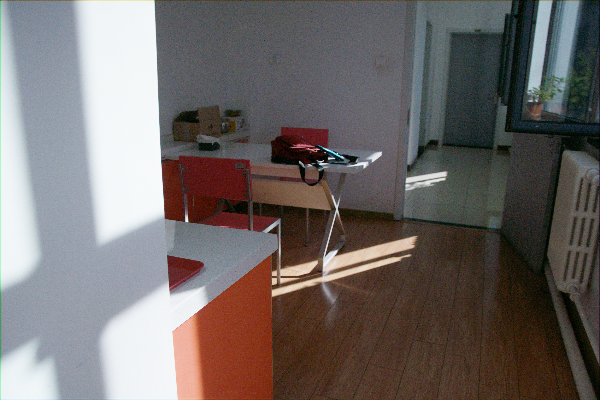}
      \caption{SCI}
      \label{fig:lolv2-4-g}
  \end{subfigure}
  \hfill
  \begin{subfigure}{0.196\linewidth}
    \includegraphics[width=1\linewidth]{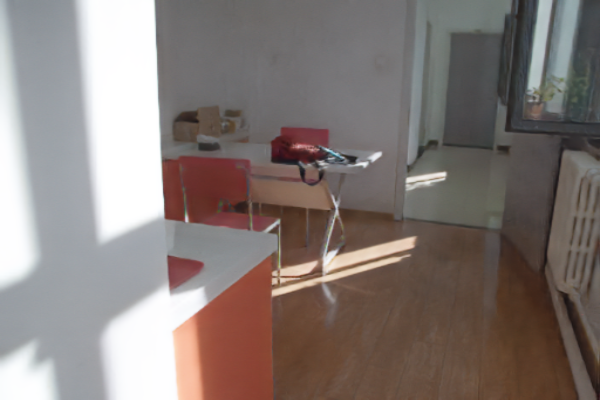}
      \caption{NightEnhance}
      \label{fig:lolv2-4-h}
  \end{subfigure}
  \hfill
  \begin{subfigure}{0.196\linewidth}
    \includegraphics[width=1\linewidth]{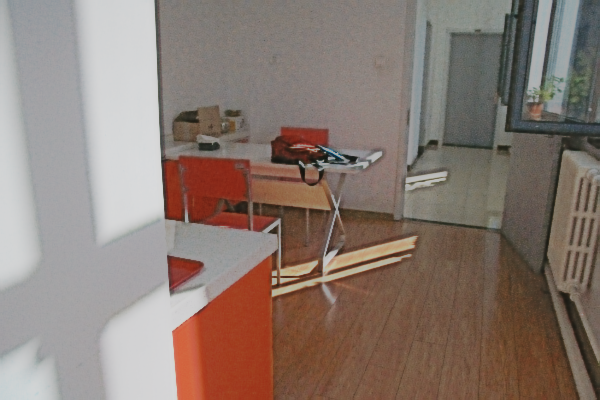}
      \caption{PairLIE}
      \label{fig:lolv2-4-b}
  \end{subfigure}
  \hfill
  \begin{subfigure}{0.196\linewidth}
    \includegraphics[width=1\linewidth]{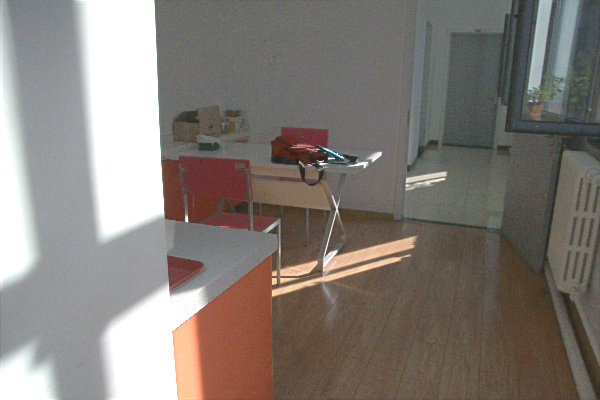}
      \caption{Ours}
      \label{fig:lolv2-4-i}
  \end{subfigure}
  \hfill
  \begin{subfigure}{0.196\linewidth}
    \includegraphics[width=1\linewidth]{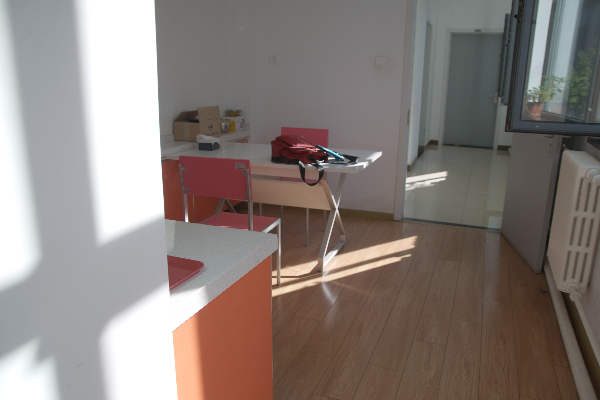}
      \caption{GT}
      \label{fig:lolv2-4-j}
  \end{subfigure}
  \caption{{\colorRevision A visual comparison of enhancement results on LOL-v2. Please zoom in for better visualization.} }
  \label{fig:lolv2-4}
\end{figure*}
\begin{figure*}[htbp]
  \centering
  \begin{subfigure}{0.196\linewidth}
    \includegraphics[width=1\linewidth]{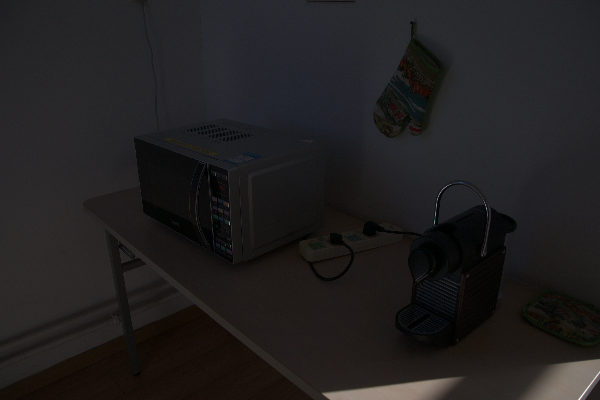}
      \caption{Input}
      \label{fig:lolv2-6-a}
  \end{subfigure}
  \hfill
  \begin{subfigure}{0.196\linewidth}
    \includegraphics[width=1\linewidth]{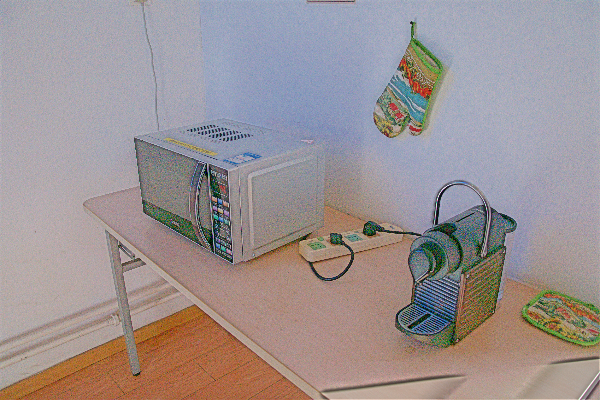}
      \caption{RetinexNet}
      \label{fig:lolv2-6-c}
  \end{subfigure}
  \hfill
  \begin{subfigure}{0.196\linewidth}
    \includegraphics[width=1\linewidth]{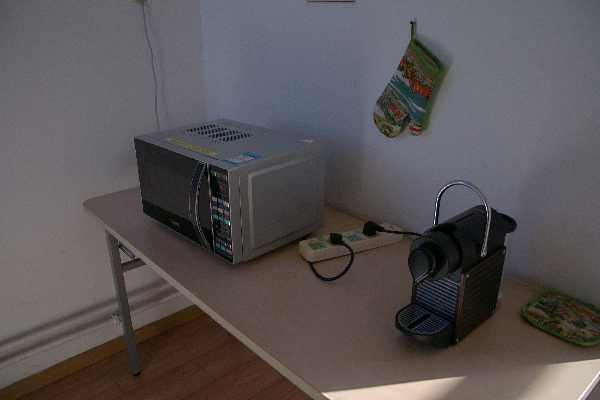}
      \caption{RRDNet}
      \label{fig:lolv2-6-d}
  \end{subfigure}
  \hfill
  \begin{subfigure}{0.196\linewidth}
    \includegraphics[width=1\linewidth]{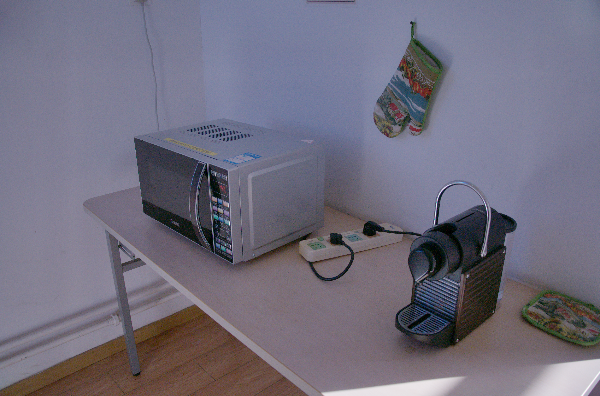}
      \caption{ZeroDCE++}
      \label{fig:lolv2-6-e}
  \end{subfigure}
  \hfill
  \begin{subfigure}{0.196\linewidth}
    \includegraphics[width=1\linewidth]{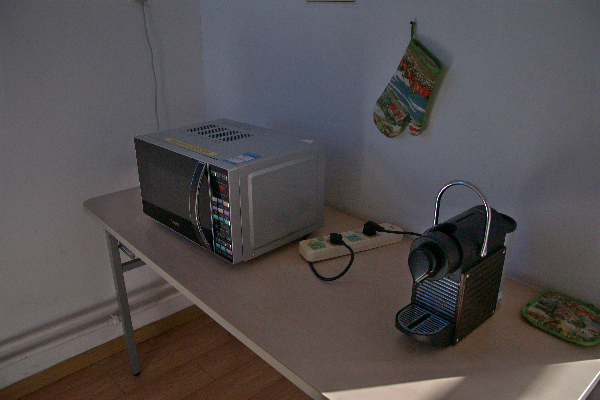}
      \caption{RetinexDIP}
      \label{fig:lolv2-6-f}
  \end{subfigure}

  \centering
  \begin{subfigure}{0.196\linewidth}
    \includegraphics[width=1\linewidth]{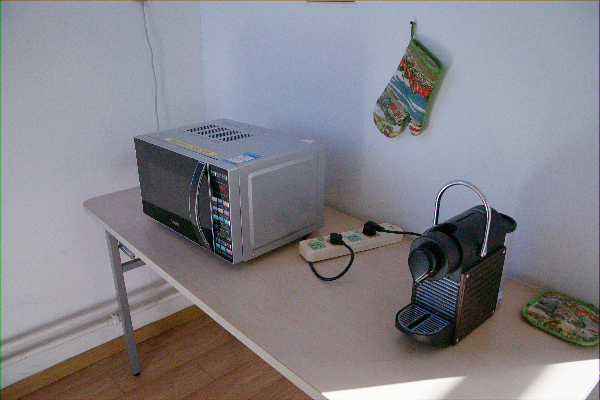}
      \caption{SCI}
      \label{fig:lolv2-6-g}
  \end{subfigure}
  \hfill
  \begin{subfigure}{0.196\linewidth}
    \includegraphics[width=1\linewidth]{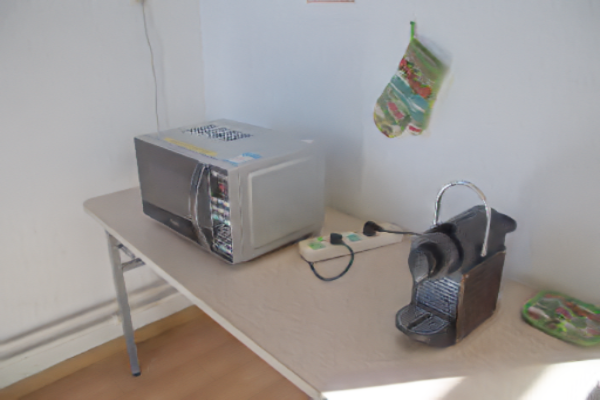}
      \caption{NightEnhance}
      \label{fig:lolv2-6-h}
  \end{subfigure}
  \hfill
  \begin{subfigure}{0.196\linewidth}
    \includegraphics[width=1\linewidth]{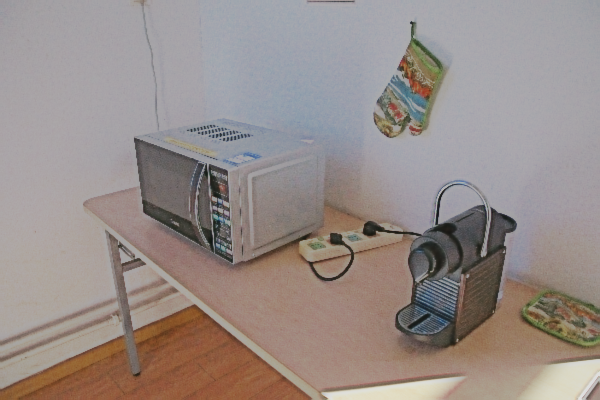}
      \caption{PairLIE}
      \label{fig:lolv2-6-b}
  \end{subfigure}
  \hfill
  \begin{subfigure}{0.196\linewidth}
    \includegraphics[width=1\linewidth]{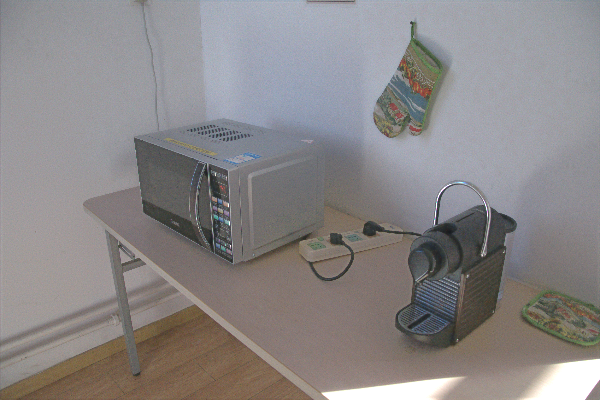}
      \caption{Ours}
      \label{fig:lolv2-6-i}
  \end{subfigure}
  \hfill
  \begin{subfigure}{0.196\linewidth}
    \includegraphics[width=1\linewidth]{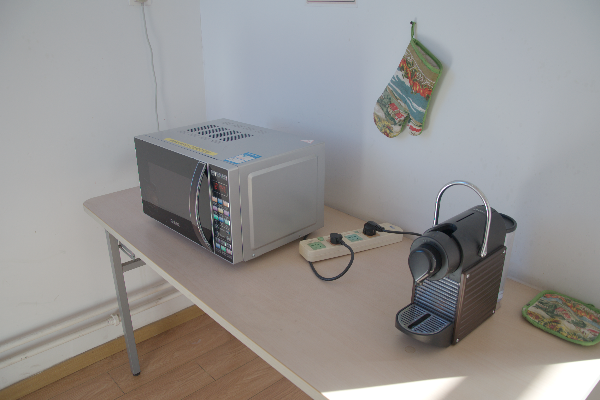}
      \caption{GT}
      \label{fig:lolv2-6-j}
  \end{subfigure}
\setlength{\abovecaptionskip}{-3pt} 
\setlength{\belowcaptionskip}{-3pt}
  \caption{{\colorRevision A visual comparison of enhancement results on LOL-v2. Please zoom in for better visualization.} }
  \label{fig:lolv2-6}
  \vspace{-0.1cm}
\end{figure*}

\noindent\textbf{Baselines.} We compare our methods with model-based method including LIME~\citep{guo2016lime}, supervised learning methods including RetinexNet~\citep{Chen2018Retinex}, KinD~\citep{zhang2019kindling}, and RUAS~\citep{liu2021retinex}, semi-supervised learning methods including DRBN~\citep{yang2021band}, unpaired supervised learning methods including EnlightenGAN~\citep{jiang2021enlightengan} and NightEnhance~\citep{jin2022unsupervised}, and zero-shot learning methods including RRDNet~\citep{zhu2020zero}, Zero-DCE~\citep{guo2020zero}, Zero-DCE++~\citep{li2021learning}, RetinexDIP~\citep{zhao2021retinexdip}, SCI~\citep{ma2022toward} and PairLIE~\citep{fu2023pairlie}. Since our method belongs to zero-shot learning, we focus on the comparison with the existing zero-shot learning methods.

\begin{table}[bp]
\tabcolsep=0.8mm
  \small
  \centering
  \caption{{\colorRevision The quantitative results of LLIE methods on LOL-v1~\citep{Chen2018Retinex} in terms of MSE [$\times 10^3$], PSNR in dB, SSIM~\citep{wang2004image} and LPIPS~\citep{zhang2018unreasonable}. }The best, second best and third best results are in {\color{red} red}, {\color{blue} blue} and \textcolor[RGB]{84,181,53}{ green} respectively.}
  \vspace{-3mm}
    \begin{tabular}{lcccc}
    \toprule
          & MSE$\downarrow$ & PSNR$\uparrow$ & SSIM$\uparrow$ & LPIPS$\downarrow$ \\
    \midrule
    Input & 12.622 & 7.771 & 0.181 & 0.560 \\
    LIME~\citep{guo2016lime}  & 2.269 & 16.760 & 0.560 & 0.350 \\
    RetinexNet~\citep{Chen2018Retinex} & 1.656 & 16.774 & 0.462 & 0.474 \\
    KinD~\citep{zhang2019kindling}  & 1.431 & 17.648 & {\color{red} 0.779} & {\color{red} 0.175} \\
    RUAS~\citep{liu2021retinex}  & 3.920 & 16.398 & 0.537 & 0.350 \\
    DRBN~\citep{yang2021band}  & 2.359 & 15.125 & 0.472 & 0.316 \\
    EnlightenGAN~\citep{jiang2021enlightengan} & 1.998 & 17.478 & 0.677 & 0.322 \\
    RRDNet~\citep{zhu2020zero} & 6.313 & 11.384 & 0.470 & 0.361 \\
    RetinexDIP~\citep{zhao2021retinexdip} & 6.050 & 11.646 & 0.501 & 0.317 \\
    ZeroDCE~\citep{guo2020zero} & 3.282 & 14.857 & 0.589 & 0.335 \\
    ZeroDCE++~\citep{li2021learning} & 3.035 & 15.342 & 0.603 & 0.316 \\
    SCI~\citep{li2021learning}   & 3.496 & 14.780 & 0.553 & 0.332 \\
    NightEnhance~\citep{jin2022unsupervised} & {\color{blue} 1.070} & {\color{blue} 21.521} & \textcolor[RGB]{84,181,53}{ 0.763} & \textcolor[RGB]{84,181,53}{ 0.235} \\
    PairLIE~\citep{fu2023pairlie} & \textcolor[RGB]{84,181,53}{ 1.419} & \textcolor[RGB]{84,181,53}{ 18.463} & 0.749 & 0.290\\
    \midrule
    Ours & {\color{red} 0.784} & {\color{red} 21.542} & {\color{blue} 0.766} & {\color{blue} 0.219} \\
    Ours (small) &   1.706    & 18.448  & 0.641  & 0.312 \\
    \bottomrule
    \end{tabular}%
  \label{tab:lolv1}%
  \vspace{-4mm}
\end{table}%

\noindent\textbf{Evaluation Criteria.} We employ four full-reference metrics, i.e., MSE, PSNR, SSIM~\citep{wang2004image} and LPIPS~\citep{zhang2018unreasonable}, 
and two metrics indicating efficiency, i.e., model size and inference time. For the dataset DARKFACE~\citep{yuan2019ug}, a Precision-Recall curve is plotted for performance indication.

\noindent\textbf{Implementations.} We train our model on the training set of each dataset. We use ADAM as the optimizer with an initial learning rate of $0.001$ and weight decay of $0.0001$. The variance of Gaussian distribution in the reverse degradation loss is set to $0.001$ and $\tau$ in Eq.~\ref{eq::mapping} is set to 0.2. The network is trained for $1000$ epochs. Since the network is small and learns mapping coefficients $\a$ and $\b$ globally, we do not crop patches during training. Batch size is set to $1$. All experiments are conducted on an NVIDIA GeForce GTX 1080 GPU and implemented by PyTorch~\citep{paszke2019pytorch}. Our codes will be released.

\begin{figure}[htbp]
\small
  \centering
  \begin{subfigure}{0.243\linewidth}
    \includegraphics[width=1\linewidth]{figs/lolv1/input/1.png}
      \caption{Input]}
      \label{fig:var_loss_show_input}
  \end{subfigure}
  \hfill
  \begin{subfigure}{0.243\linewidth}
    \includegraphics[width=1\linewidth]{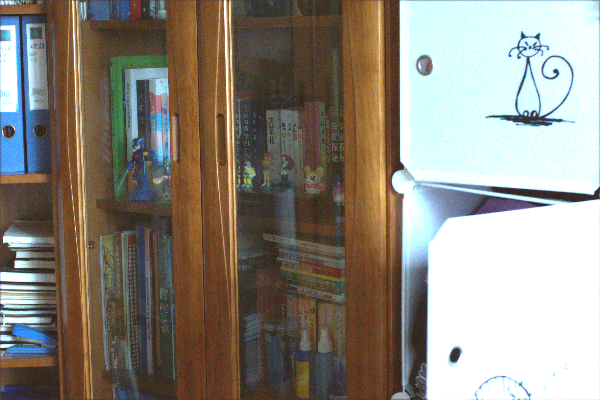}
      \caption{w/o $\mathcal{L}_{VS}$}
      \label{fig:var_loss_show_wovar}
  \end{subfigure}
  \hfill
  \begin{subfigure}{0.243\linewidth}
    \includegraphics[width=1\linewidth]{figs/lolv1/ours_/1.png}
      \caption{w/$\mathcal{L}_{VS}$}
      \label{fig:var_loss_show_ours}
  \end{subfigure}
  \hfill
  \begin{subfigure}{0.243\linewidth}
    \includegraphics[width=1\linewidth]{figs/lolv1/GT/1.png}
      \caption{Ground-truth}
      \label{fig:var_loss_show_gt}
  \end{subfigure}
  \caption{{\colorRevision The visual ablation comparison regarding $\mathcal{L}_{VS}$.}}
  \label{fig::var_loss_show}
  \vspace{-0.2cm}
\end{figure}

\begin{figure*}[htbp]
  \centering
  \begin{subfigure}{0.196\linewidth}
    \includegraphics[width=1\linewidth]{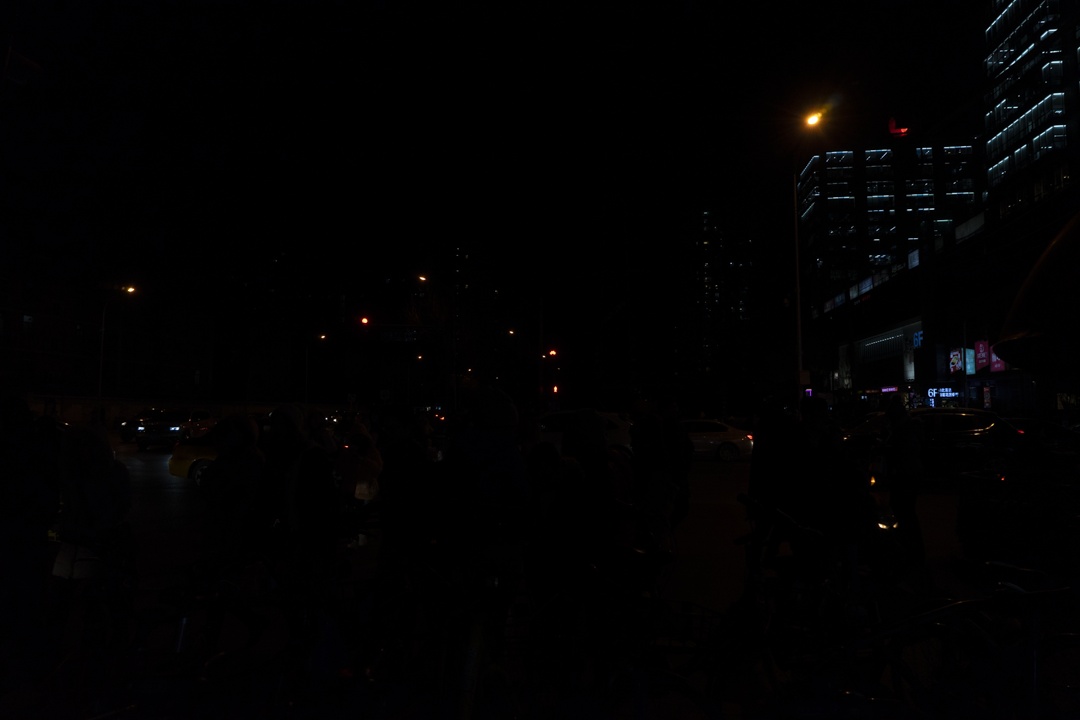}
      \caption{Input}
      \label{fig:darkface-1-a}
  \end{subfigure}
  \hfill
  \begin{subfigure}{0.196\linewidth}
    \includegraphics[width=1\linewidth]{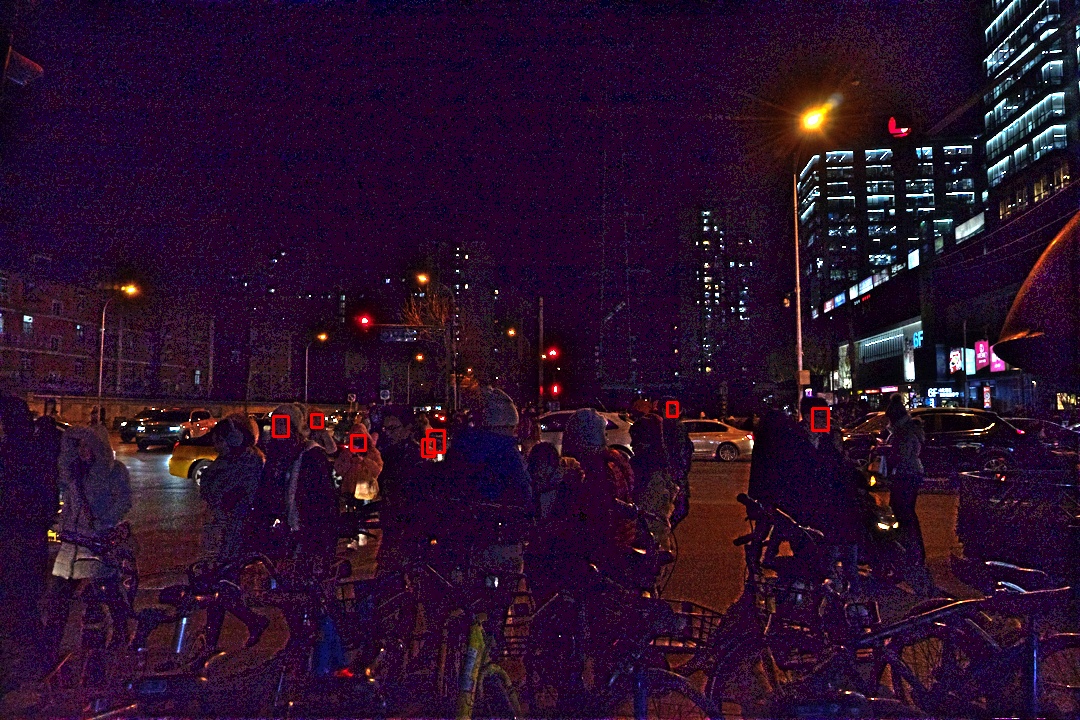}
      \caption{LIME}
      \label{fig:darkface-1-b}
  \end{subfigure}
  \hfill
  \begin{subfigure}{0.196\linewidth}
    \includegraphics[width=1\linewidth]{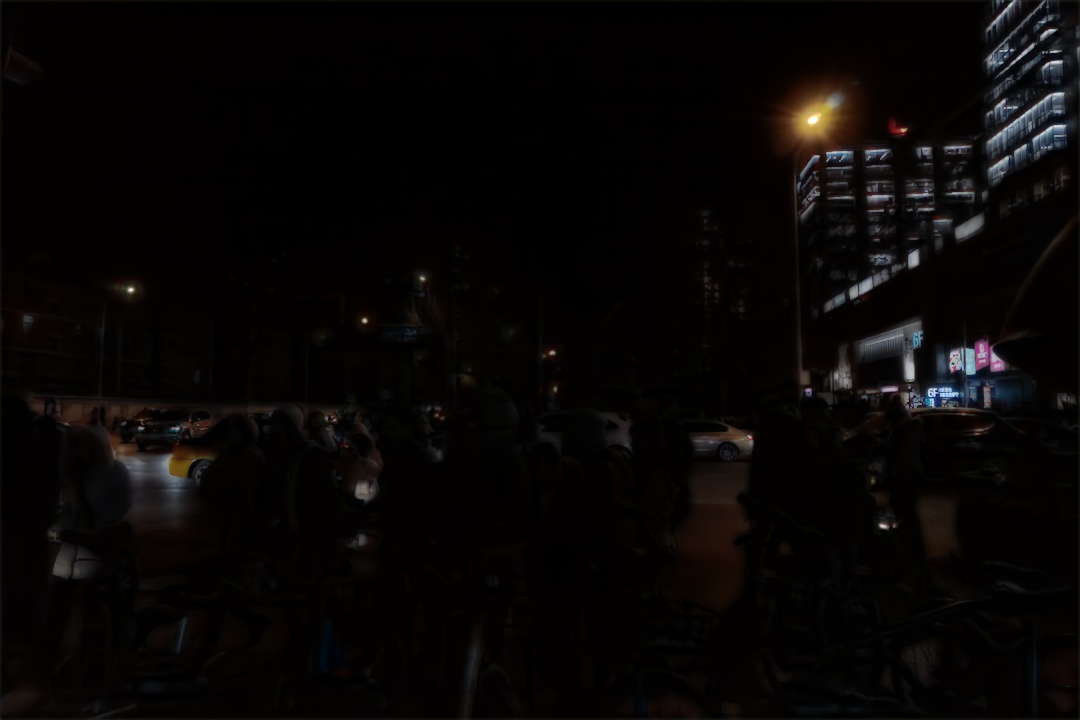}
      \caption{KinD}
      \label{fig:darkface-1-c}
  \end{subfigure}
  \hfill
  \begin{subfigure}{0.196\linewidth}
    \includegraphics[width=1\linewidth]{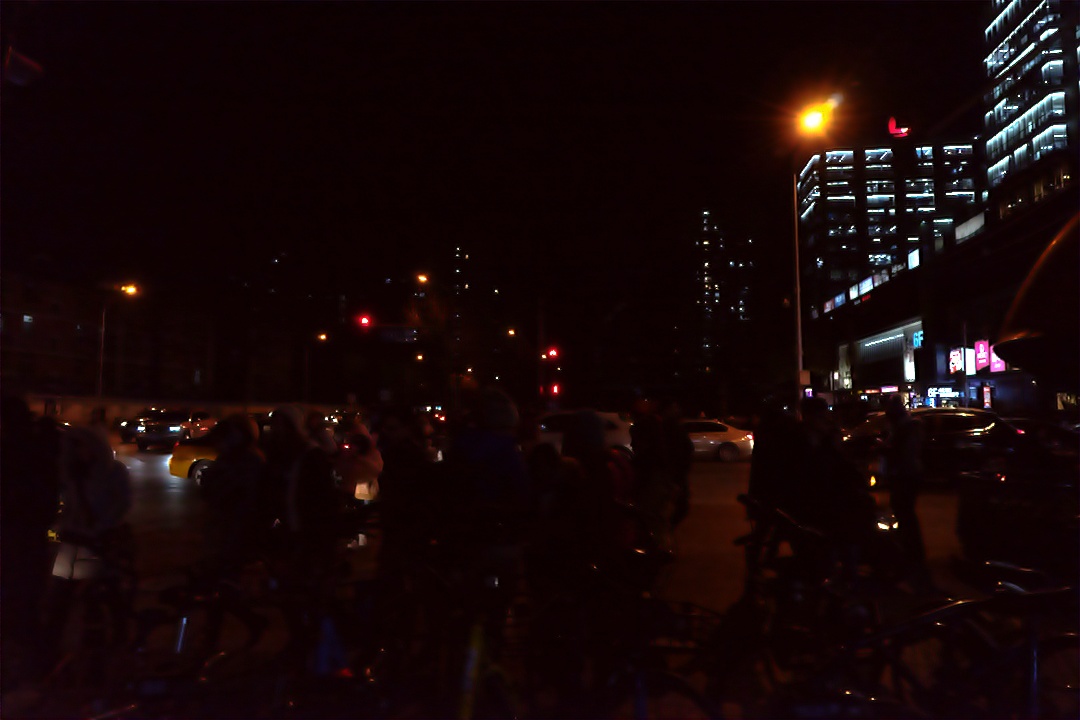}
      \caption{RUAS}
      \label{fig:darkface-1-d}
  \end{subfigure}
  \hfill
  \begin{subfigure}{0.196\linewidth}
    \includegraphics[width=1\linewidth]{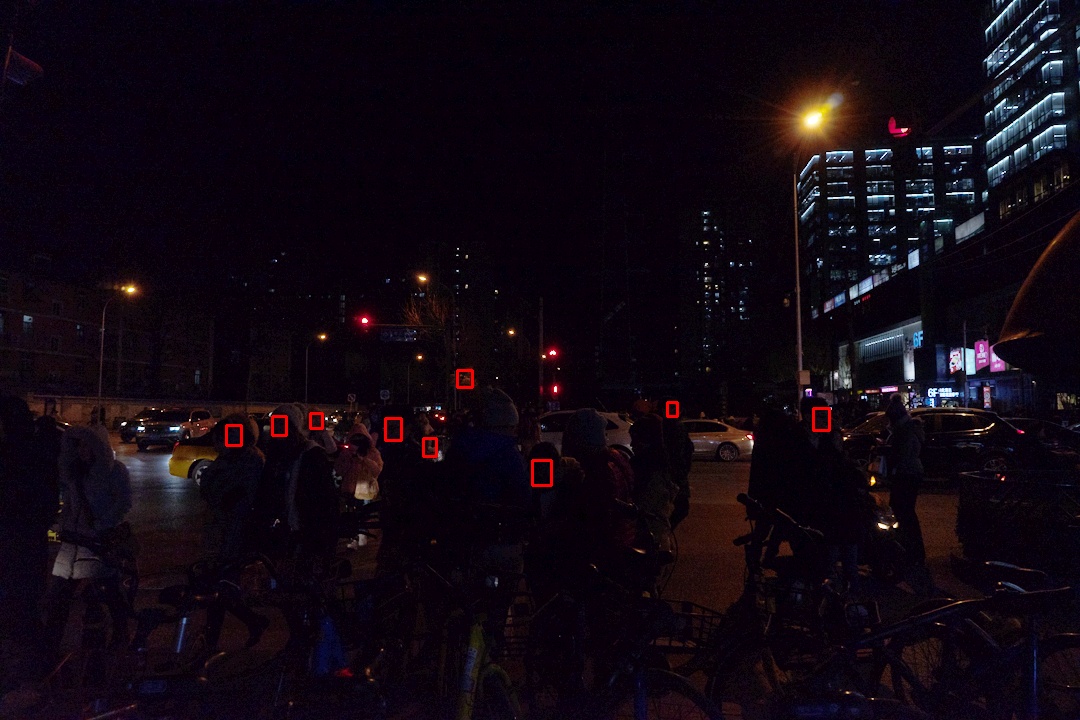}
      \caption{ZeroDCE++}
      \label{fig:darkface-1-f}
  \end{subfigure}
  
  \centering
  \centering
  \begin{subfigure}{0.196\linewidth}
    \includegraphics[width=1\linewidth]{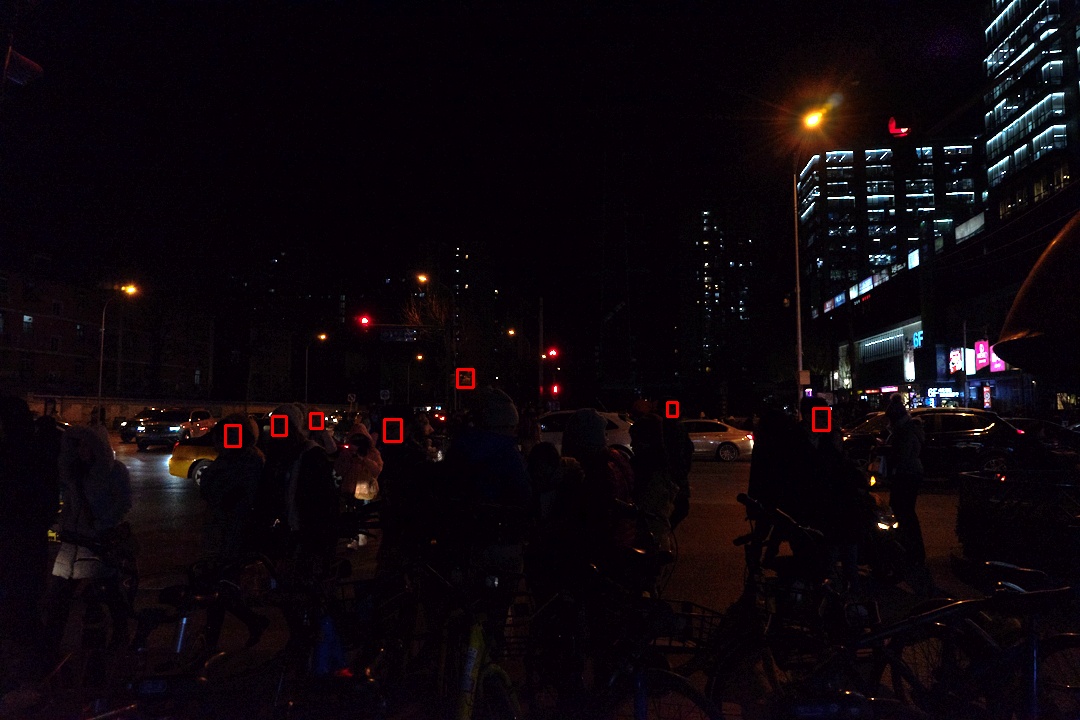}
      \caption{RetinexDIP}
      \label{fig:darkface-1-g}
  \end{subfigure}
  \hfill
  \begin{subfigure}{0.196\linewidth}
    \includegraphics[width=1\linewidth]{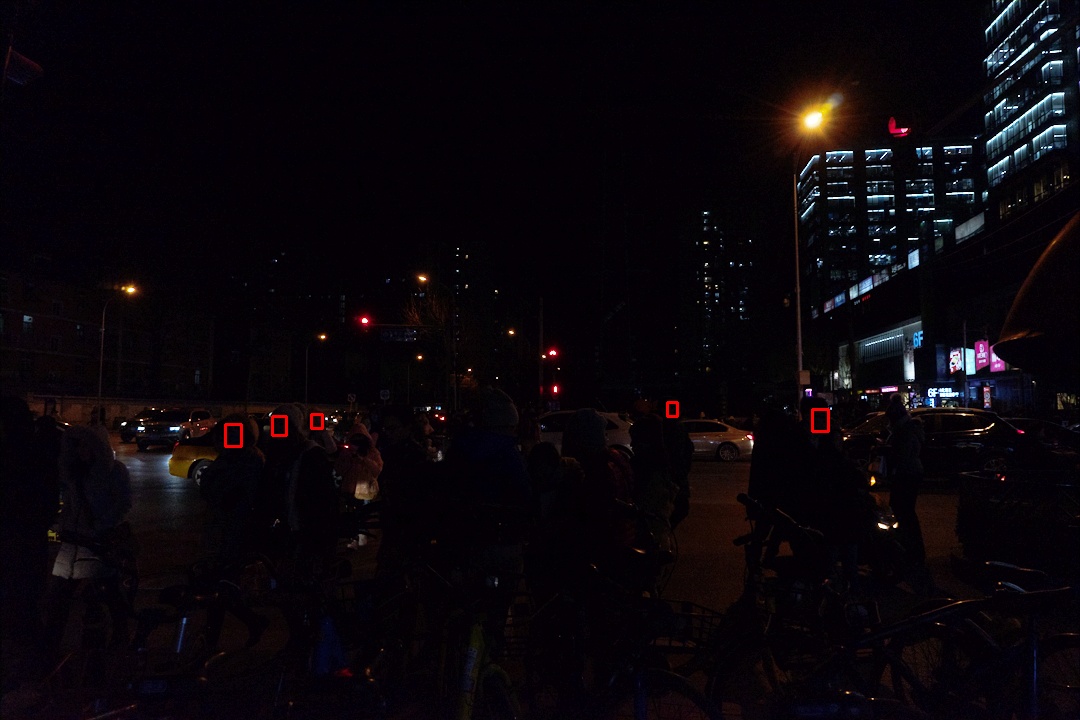}
      \caption{SCI}
      \label{fig:darkface-1-h}
  \end{subfigure}
  \hfill
  \begin{subfigure}{0.196\linewidth}
    \includegraphics[width=1\linewidth]{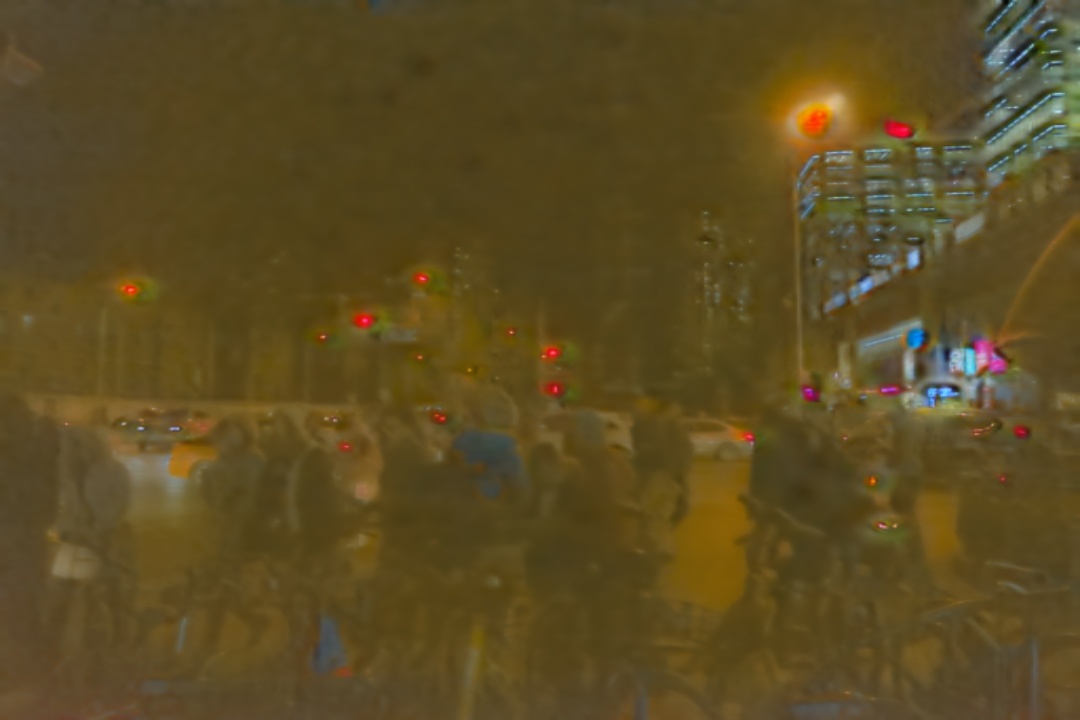}
      \caption{NightEnhance}
      \label{fig:darkface-1-i}
  \end{subfigure}
  \hfill
  \begin{subfigure}{0.196\linewidth}
    \includegraphics[width=1\linewidth]{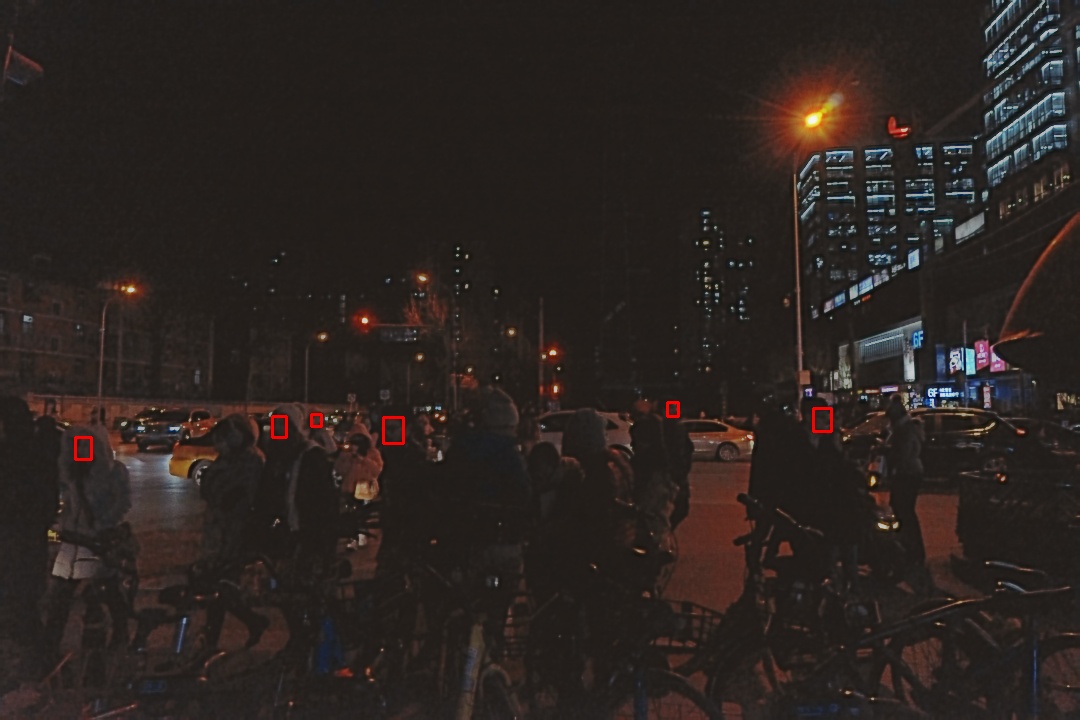}
      \caption{PairLIE}
      \label{fig:darkface-1-e}
  \end{subfigure}
  \hfill
  \begin{subfigure}{0.196\linewidth}
    \includegraphics[width=1\linewidth]{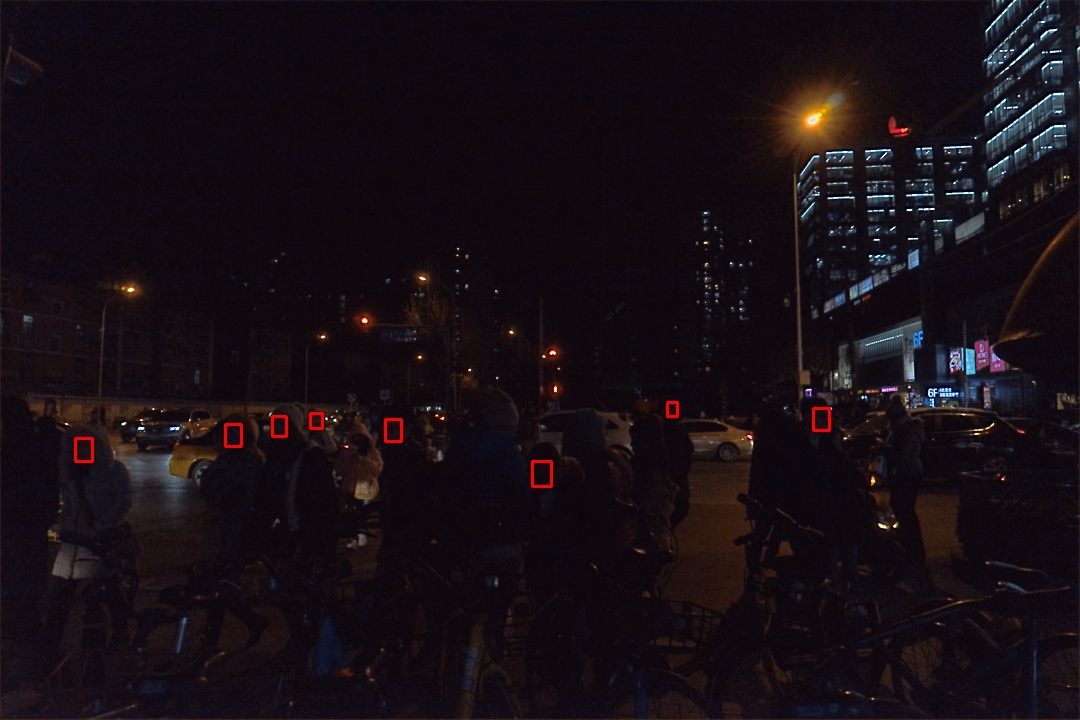}
      \caption{Ours}
      \label{fig:darkface-1-j}
  \end{subfigure}
\setlength{\abovecaptionskip}{-3pt} 
\setlength{\belowcaptionskip}{-3pt}
  \caption{{\colorRevision A visual comparison of enhancement and face detection results on DARKFACE. Please zoom in for better view.} }
  \label{fig:darkface-1}
\end{figure*}

\subsection{Results on LOL-v1}

We first evaluate our method on LOL-v1~\citep{Chen2018Retinex} by comparing its LLIE performance with other existing methods. The quantitative results are shown in Table.~\ref{tab:lolv1}. We can see that our model has the best performance in terms of MSE and PSNR. Furthermore, our method has the second-best value of SSIM and LPIPS. We also train another small version of our model by reducing internal (64 to 4) and output channels (6 to 2). The performance of our small model can still outperform most existing methods. Note that the number of parameters in our model is only $1/31$ of PairLIE~\citep{fu2023pairlie}, $1/251$ of NightEnhance~\citep{jin2022unsupervised} and $1/202$ of KinD~\citep{zhang2019kindling}.

We then give several examples for the visual comparison among the existing methods and ours in Fig.~\ref{fig:lolv1-2} to Fig.~\ref{fig:lolv1-3}. We can see in Fig.~\ref{fig:lolv1-2} that only our model restores the contents in the bookshelf. In addition, in the pure white region in the bottom right corner, other methods are ``afraid'' of enhancing the white pixels, which are meant to be overexposed in photography, towards overexposure. However, our proposed mask function $\mathcal{M}$ in $\mathcal{L}_{RD}$ deals with the dynamic range overflow and enables the successful enhancement of the white pixels. In Fig.~\ref{fig:lolv1-6}, most methods cannot recover the brightness of the imaging scene properly except NightEnhance~\citep{jin2022unsupervised} and ours. Compared with NightEnhance~\citep{jin2022unsupervised}, our method better enhances the shadow inside the cupboard. In Fig.~\ref{fig:lolv1-3}, still only ours and NightEnhance~\citep{jin2022unsupervised} can achieve the closest results to the ground-truth. However, our method restores the color of the wall with more vivid fidelity.

\subsection{Results on LOL-v2}
\vspace{-1mm}
\begin{table}[htbp]
\tabcolsep=0.8mm
  \small
  \centering
  \caption{{\colorRevision The quantitative results of LLIE methods on LOL-v2~\citep{yang2021sparse} in terms of MSE [$\times 10^3$], PSNR in dB, SSIM~\citep{wang2004image} and LPIPS~\citep{zhang2018unreasonable}. }The best, second best and third best results are in {\color{red} red}, {\color{blue} blue} and \textcolor[RGB]{84,181,53}{ green} respectively.}
  \vspace{-3mm}
    \begin{tabular}{lcccc}
    \toprule
          & MSE$\downarrow$ & PSNR$\uparrow$ & SSIM$\uparrow$ & LPIPS$\downarrow$ \\
    \midrule
    Input & 7.634 & 9.718 & 0.190 & 0.333 \\
    LIME~\citep{guo2016lime}  & 1.484 & 15.240 & 0.470 & 0.360 \\
    RetinexNet~\citep{Chen2018Retinex} & 1.719 & 15.470 & 0.560 & 0.421 \\
    KinD~\citep{zhang2019kindling}  & 4.029 & 14.740 & 0.641 & 0.302 \\
    RUAS~\citep{liu2021retinex}  & 2.540 & 15.330 & 0.520 & 0.322 \\
    DRBN~\citep{yang2021band}  & 1.843 & 19.600 &   \textcolor[RGB]{84,181,53}{ 0.764} & {\color{blue} 0.246} \\
    EnlightenGAN~\citep{jiang2021enlightengan} & 1.209 & 18.230 & 0.610 & 0.309 \\
    RRDNet~\citep{zhu2020zero} & 3.594 & 14.850 & 0.560 & 0.265 \\
    RetinexDIP~\citep{zhao2021retinexdip} & 3.157 & 14.513 & 0.546 & 0.274 \\
    ZeroDCE~\citep{guo2020zero} & 1.777 & 18.059 & 0.605 & 0.298 \\
    ZeroDCE++~\citep{li2021learning} & 1.569 & 18.491 & 0.617 & 0.290 \\
    SCI~\citep{ma2022toward}   & 2.132 & 17.304 & 0.565 & 0.286 \\
    NightEnhance~\citep{jin2022unsupervised} & {\color{red} 0.819} & {\color{blue} 20.850} &0.724 & 0.329 \\
    PairLIE~\citep{fu2023pairlie} & \textcolor[RGB]{84,181,53}{ 0.916} & 19.885 & {\color{blue}0.778} & 0.282 \\
    \midrule
    Ours  & {\color{blue} 0.827} & {\color{red} 21.362} & {\color{red} 0.795} & {\color{red} 0.225} \\
    Ours (small) & 1.064 & \textcolor[RGB]{84,181,53}{ 20.631} & 0.739 & \textcolor[RGB]{84,181,53}{ 0.252} \\
    \bottomrule
    \end{tabular}%
  \label{tab:lolv2}%
  \vspace{-4mm}
\end{table}%

Similar to LOL-v1~\citep{Chen2018Retinex}, we evaluate our method on LOL-v2~\citep{yang2021sparse}. The quantitative results are shown in Table.~\ref{tab:lolv2}. We can see that our model has the best performance in terms of PSNR, SSIM, and LPIPS. 
The performance of our model with smaller size is the third best in terms of all the metrics.

We also provide several illustrative examples for visually comparing our method and others in Fig.~\ref{fig:lolv2-2} to Fig.~\ref{fig:lolv2-6} sampled from LOL-v2~\citep{yang2021sparse}. The non-uniform brightness in the grandstand of stadium is challenging for all LLIE methods. Among them, ours and NightEnhance~\citep{jin2022unsupervised} can generate the closest results to the ground-truth. However, NightEnhance~\citep{jin2022unsupervised} makes the red flag fade, while ours restore the color better. In Fig~\ref{fig:lolv2-4}, only our solution yilelds the correct brightness of the white wall. In Fig.~\ref{fig:lolv2-3} and Fig.~\ref{fig:lolv2-6}, our method gives the closest results to the ground-truth.

\begin{figure*}[htbp]
\scriptsize
  \centering
  \begin{subfigure}{0.196\linewidth}
    \includegraphics[width=1\linewidth]{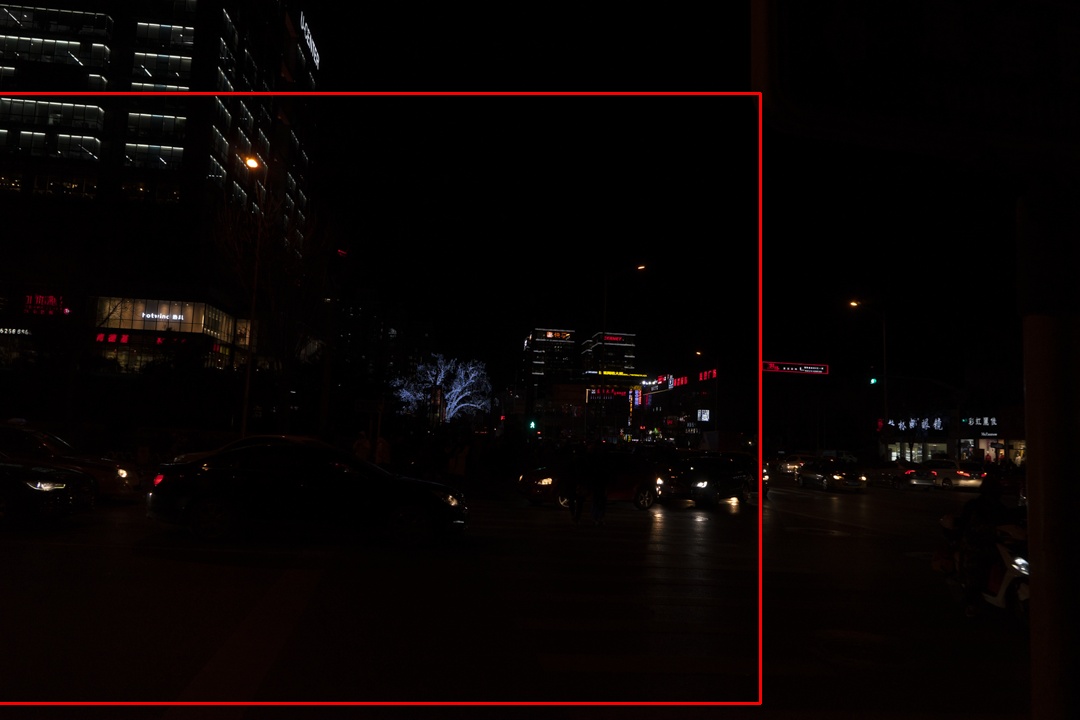}
      \caption{Input}
      \label{fig:darkface-2-a}
  \end{subfigure}
  \hfill
  \begin{subfigure}{0.196\linewidth}
    \includegraphics[width=1\linewidth]{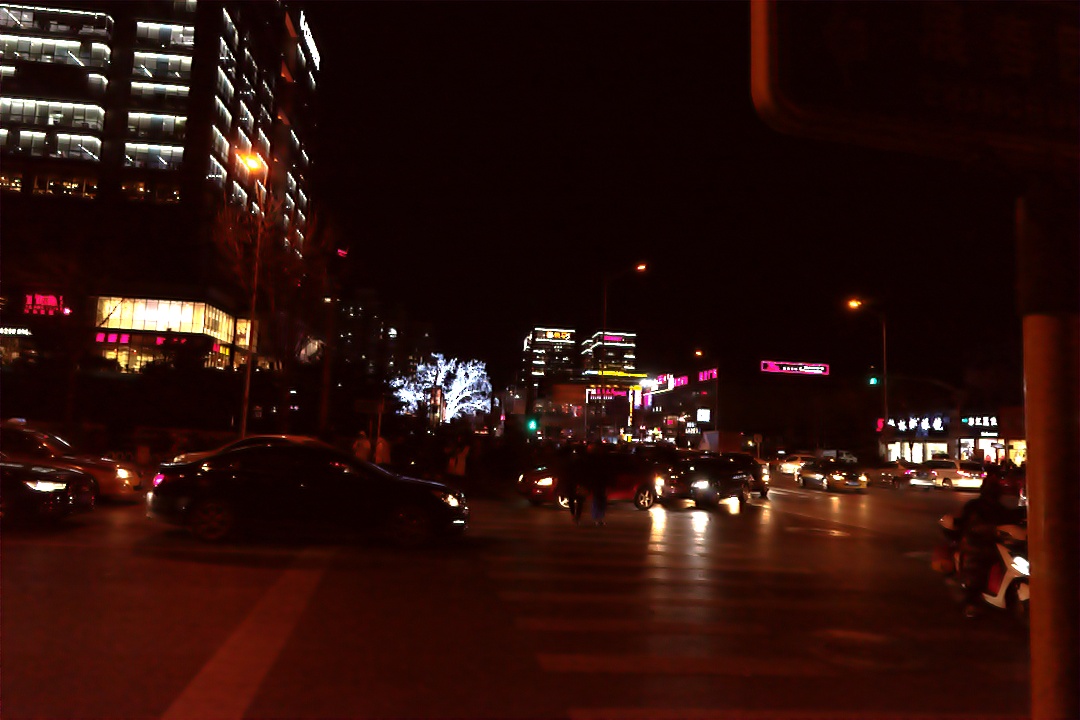}
      \caption{RUAS}
      \label{fig:darkface-2-b}
  \end{subfigure}
  \hfill
  \begin{subfigure}{0.196\linewidth}
    \includegraphics[width=1\linewidth]{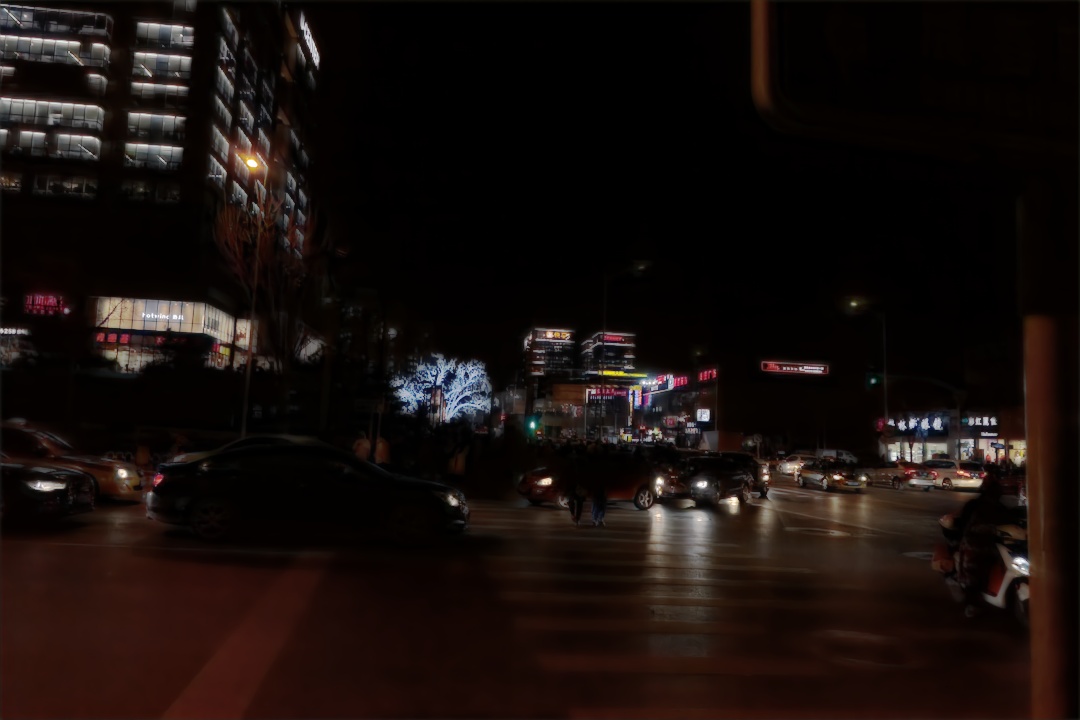}
      \caption{KinD}
      \label{fig:darkface-2-c}
  \end{subfigure}
  \hfill
  \begin{subfigure}{0.196\linewidth}
    \includegraphics[width=1\linewidth]{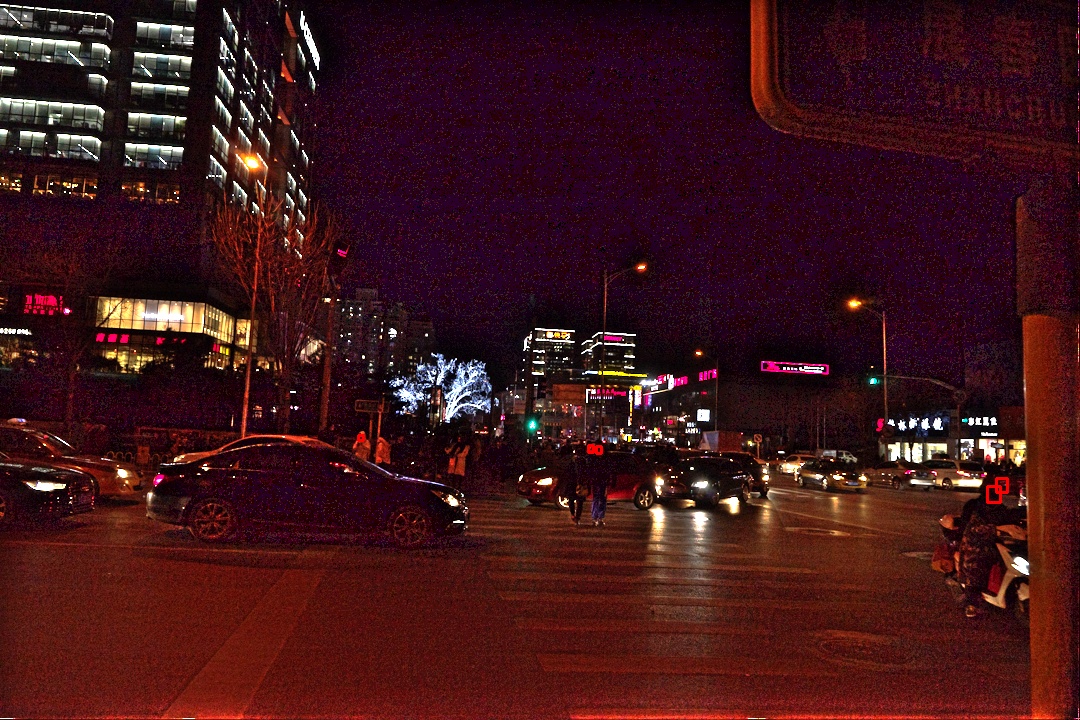}
      \caption{LIME}
      \label{fig:darkface-2-d}
  \end{subfigure}
  \hfill
  \begin{subfigure}{0.196\linewidth}
    \includegraphics[width=1\linewidth]{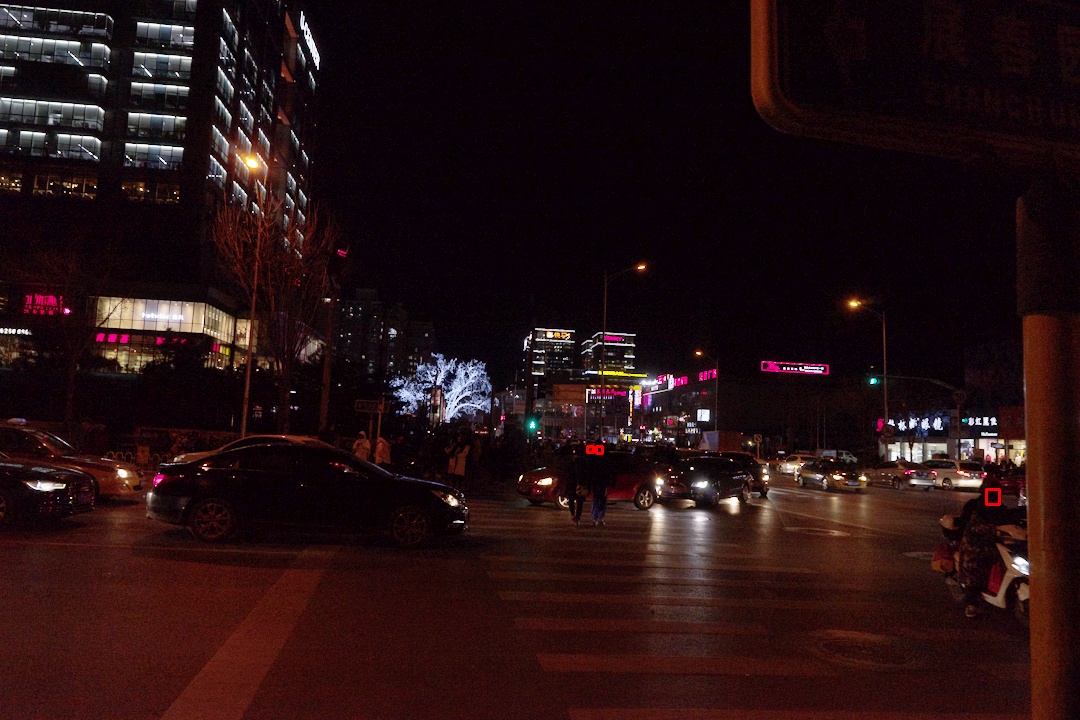}
      \caption{ZeroDCE++}
      \label{fig:darkface-2-f}
  \end{subfigure}
  
  \centering
  \begin{subfigure}{0.196\linewidth}
    \includegraphics[width=1\linewidth]{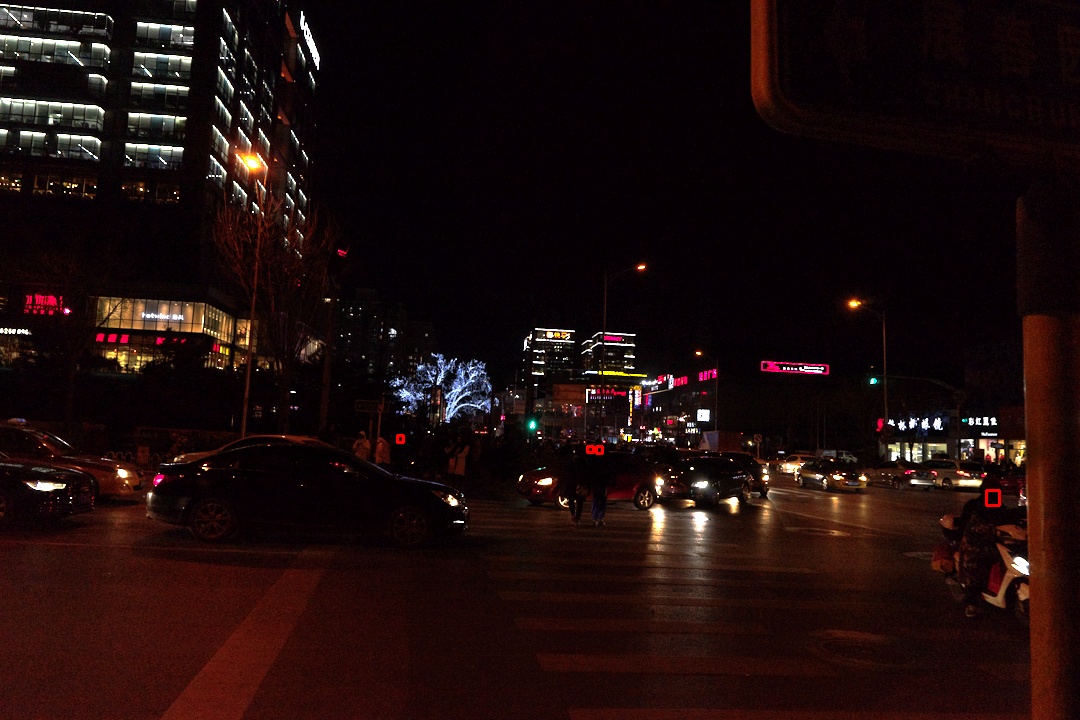}
      \caption{RetinexDIP}
      \label{fig:darkface-2-g}
  \end{subfigure}
  \hfill
  \begin{subfigure}{0.196\linewidth}
    \includegraphics[width=1\linewidth]{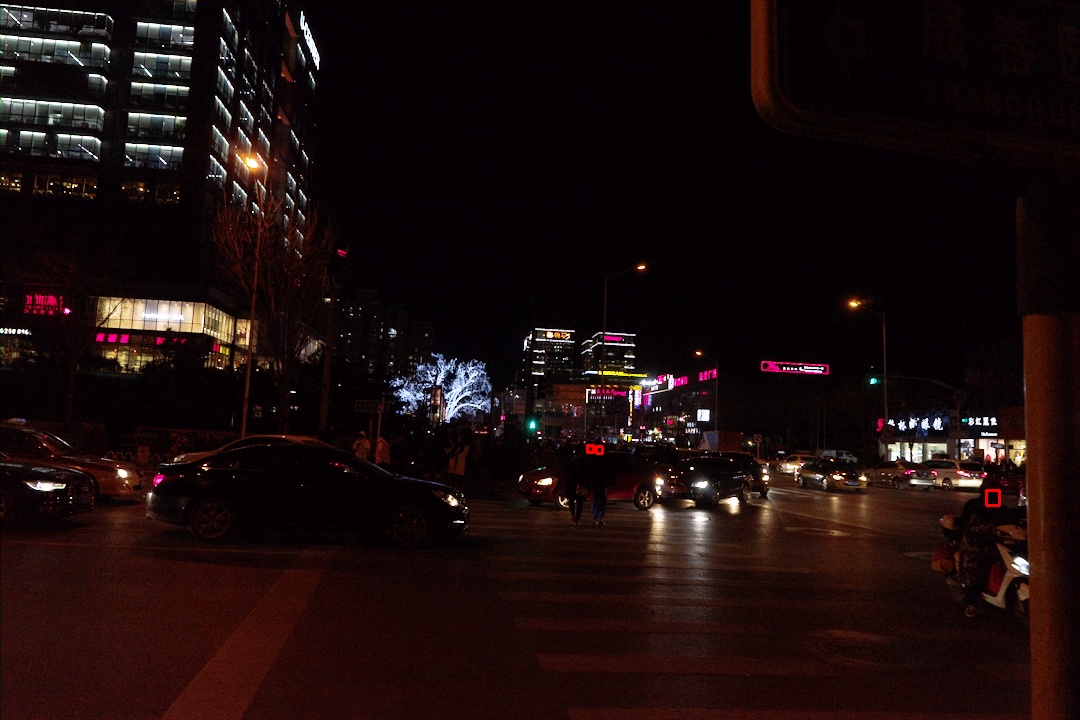}
      \caption{SCI}
      \label{fig:darkface-2-h}
  \end{subfigure}
  \hfill
  \begin{subfigure}{0.196\linewidth}
    \includegraphics[width=1\linewidth]{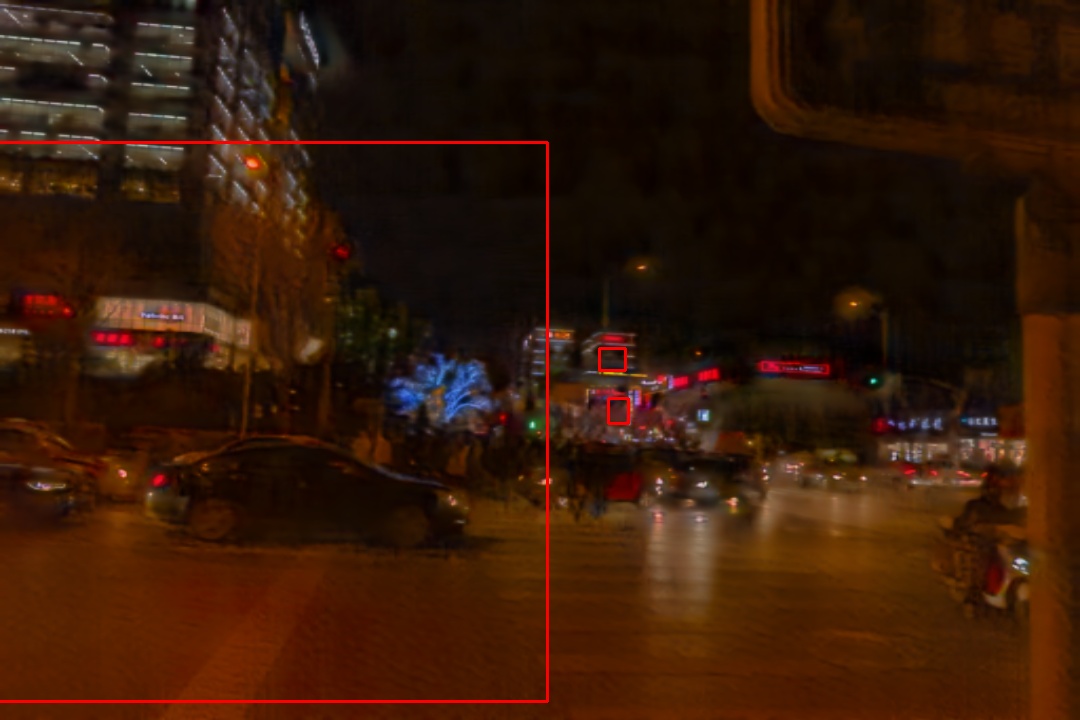}
      \caption{NightEnhance}
      \label{fig:darkface-2-i}
  \end{subfigure}
  \hfill
  \begin{subfigure}{0.196\linewidth}
    \includegraphics[width=1\linewidth]{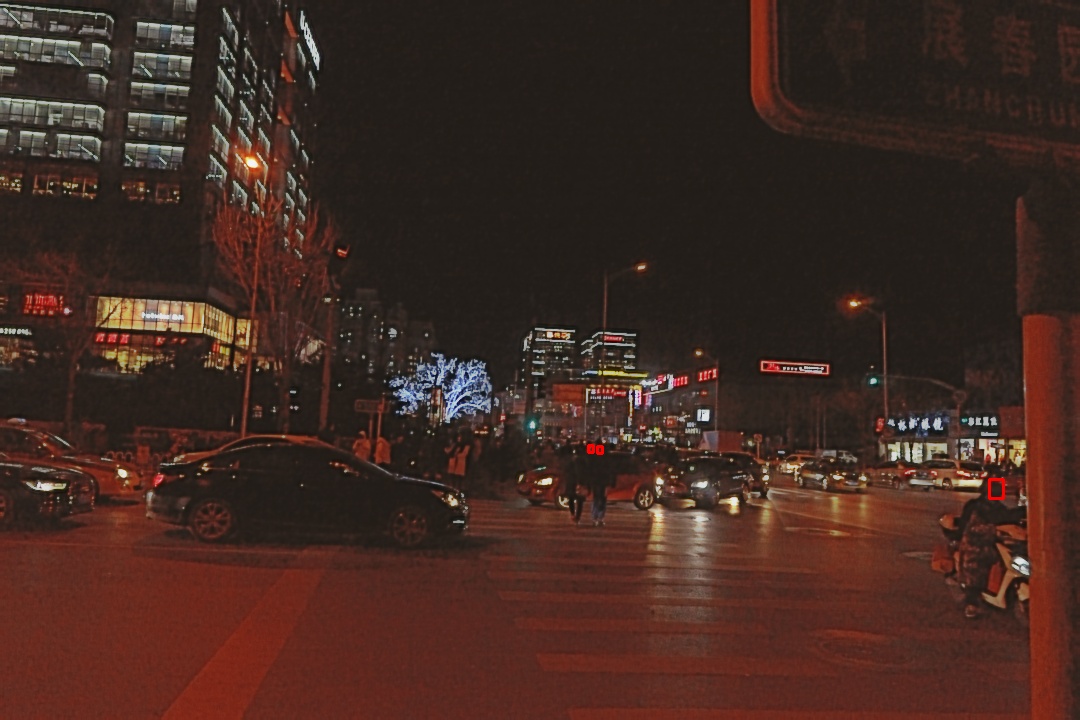}
      \caption{PairLIE}
      \label{fig:darkface-2-e}
  \end{subfigure}
  \hfill
  \begin{subfigure}{0.196\linewidth}
    \includegraphics[width=1\linewidth]{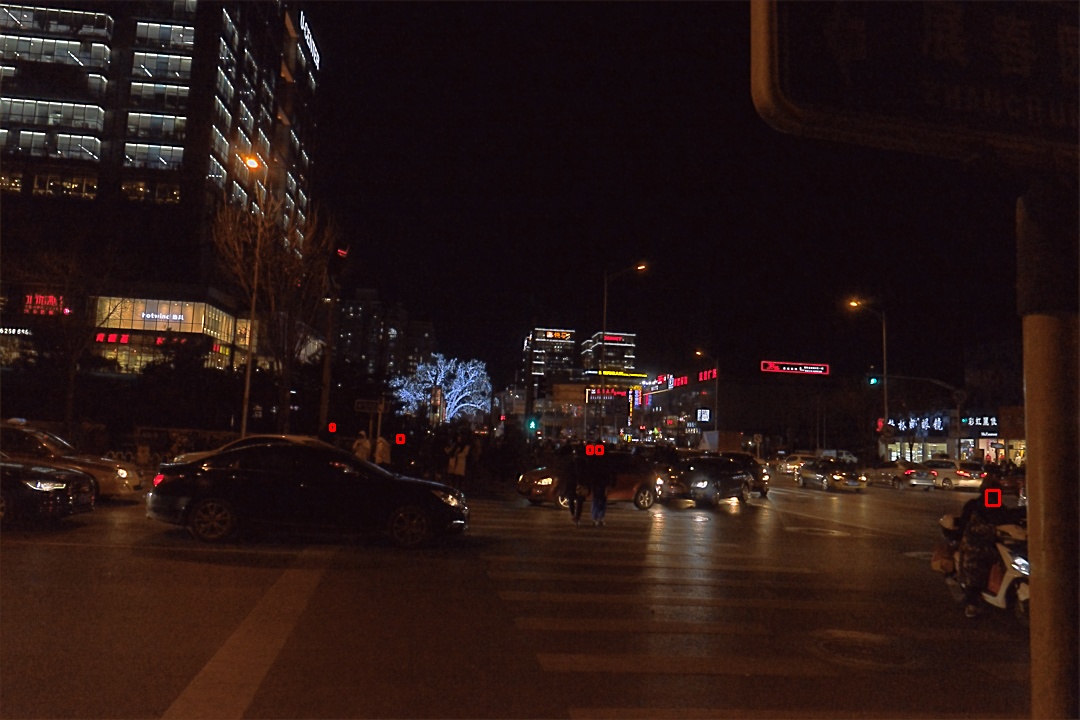}
      \caption{Ours}
      \label{fig:darkface-2-j}
  \end{subfigure}
\setlength{\abovecaptionskip}{-3pt} 
\setlength{\belowcaptionskip}{-3pt}
  \caption{{\colorRevision A visual comparison of enhancement and face detection results on DARKFACE. Please zoom in for better view. }}
  \label{fig:darkface-2}
\end{figure*}
\begin{figure*}[htbp]
  \centering
  \begin{subfigure}{0.196\linewidth}
    \includegraphics[width=1\linewidth]{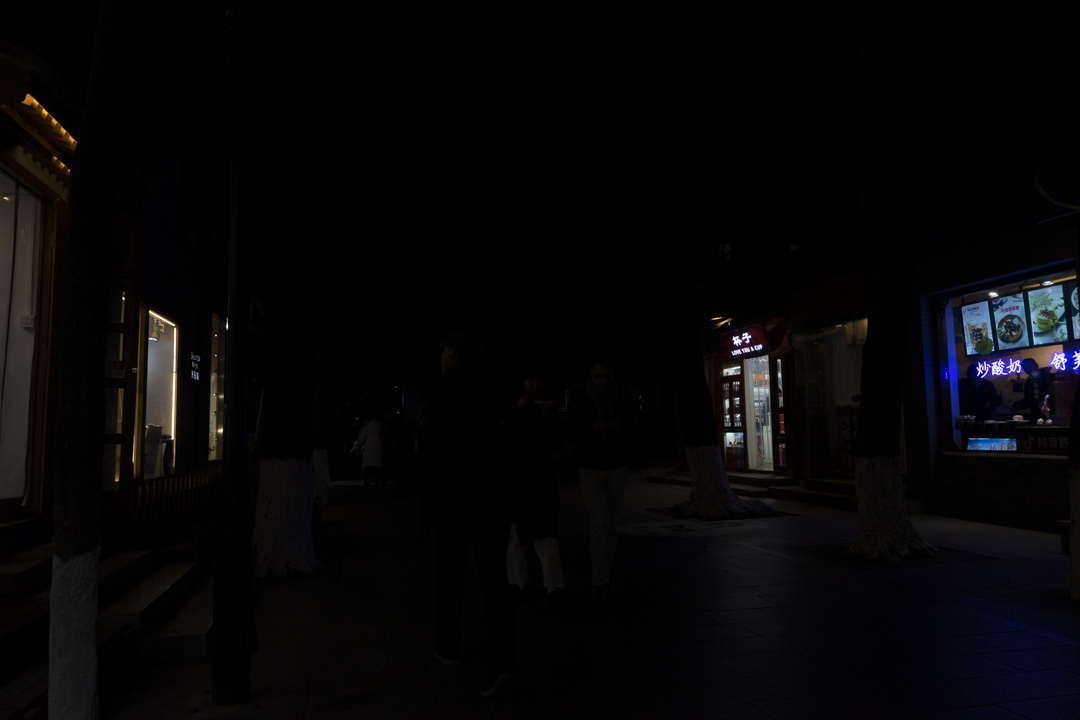}
      \caption{Input}
      \label{fig:darkface-3-a}
  \end{subfigure}
  \hfill
  \begin{subfigure}{0.196\linewidth}
    \includegraphics[width=1\linewidth]{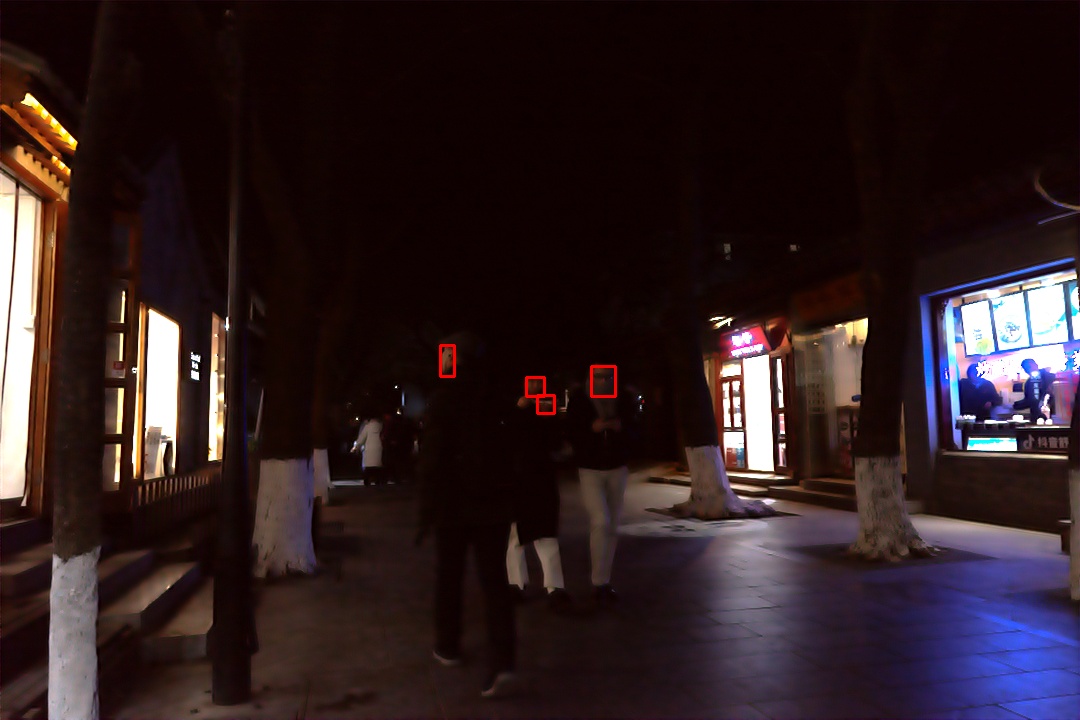}
      \caption{RUAS}
      \label{fig:darkface-3-b}
  \end{subfigure}
  \hfill
  \begin{subfigure}{0.196\linewidth}
    \includegraphics[width=1\linewidth]{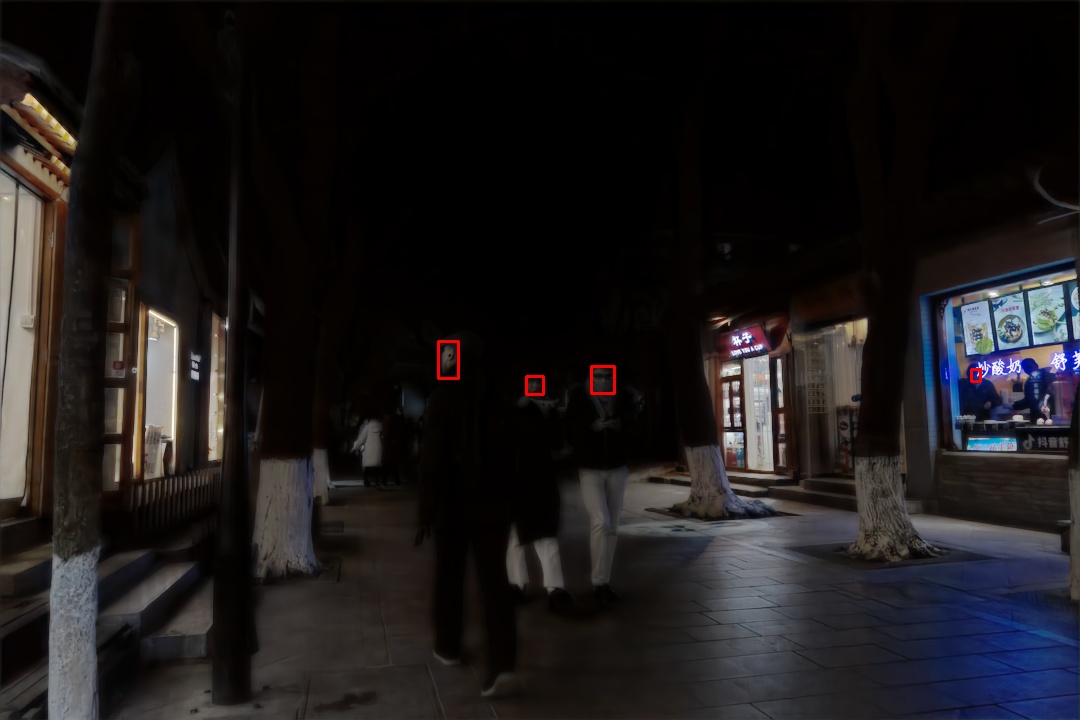}
      \caption{KinD}
      \label{fig:darkface-3-c}
  \end{subfigure}
  \hfill
  \begin{subfigure}{0.196\linewidth}
    \includegraphics[width=1\linewidth]{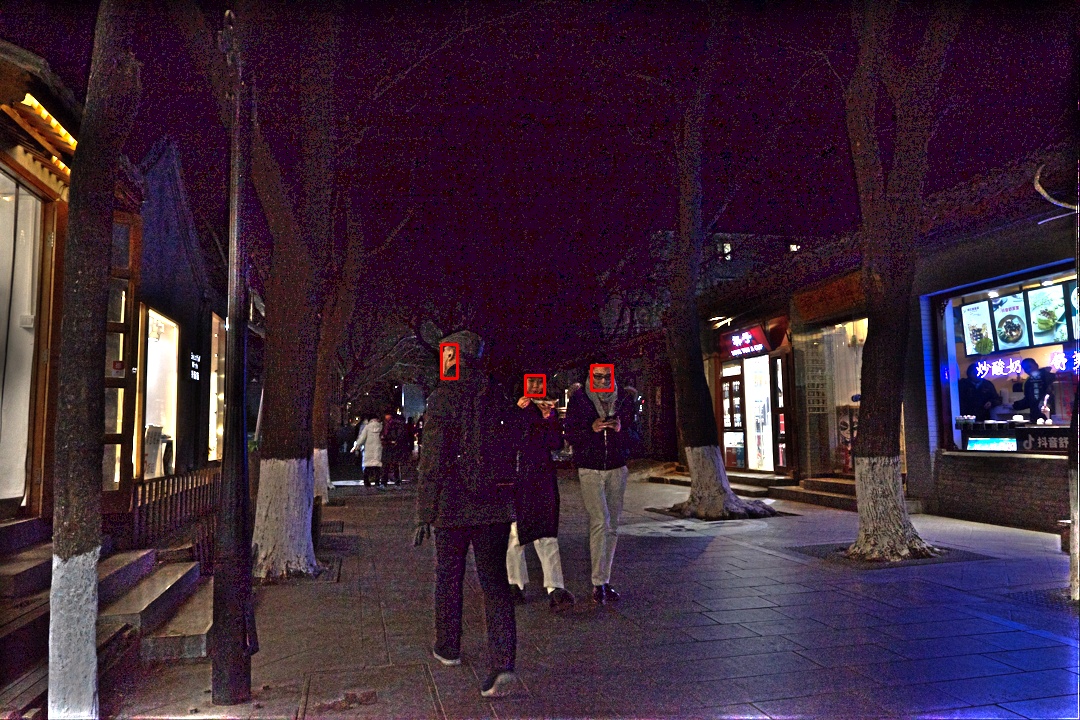}
      \caption{LIME}
      \label{fig:darkface-3-d}
  \end{subfigure}
  \hfill
  \begin{subfigure}{0.196\linewidth}
    \includegraphics[width=1\linewidth]{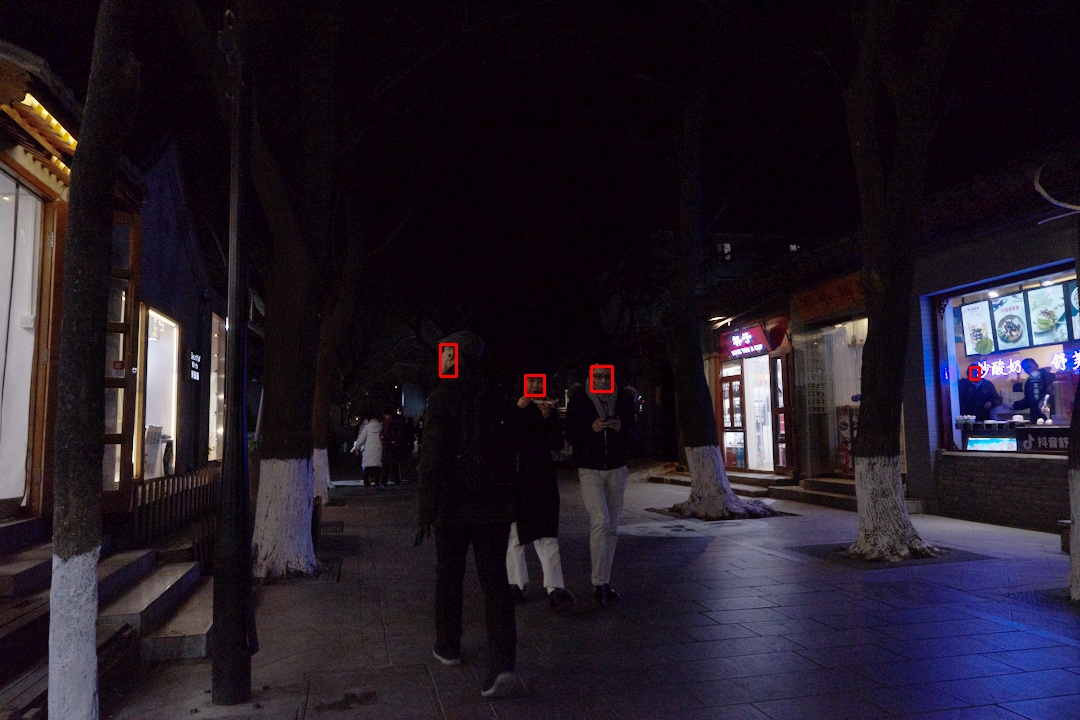}
      \caption{ZeroDCE++}
      \label{fig:darkface-3-f}
  \end{subfigure}
  
  \centering
  \begin{subfigure}{0.196\linewidth}
    \includegraphics[width=1\linewidth]{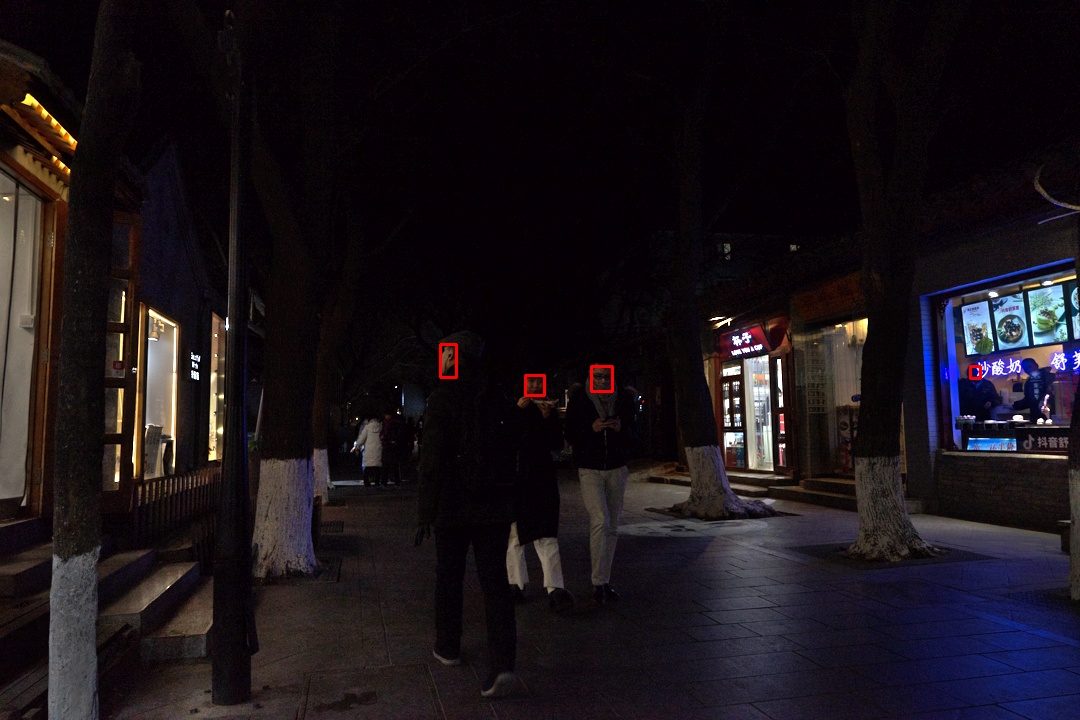}
      \caption{RetinexDIP}
      \label{fig:darkface-3-g}
  \end{subfigure}
  \hfill
  \begin{subfigure}{0.196\linewidth}
    \includegraphics[width=1\linewidth]{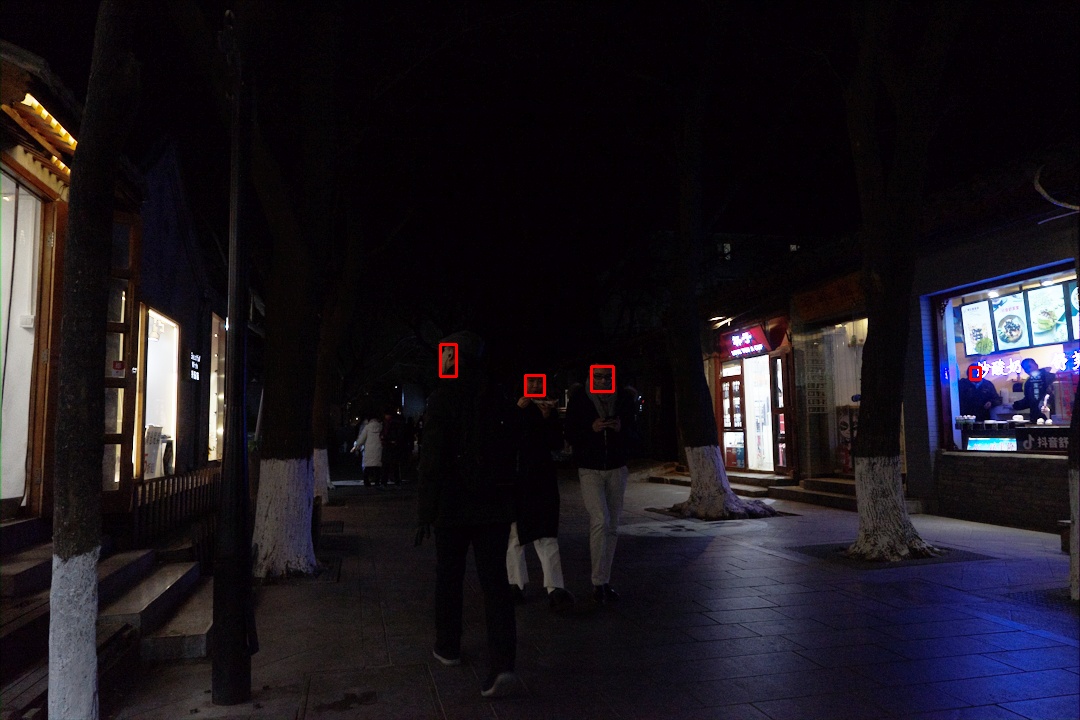}
      \caption{SCI}
      \label{fig:darkface-3-h}
  \end{subfigure}
  \hfill
  \begin{subfigure}{0.196\linewidth}
    \includegraphics[width=1\linewidth]{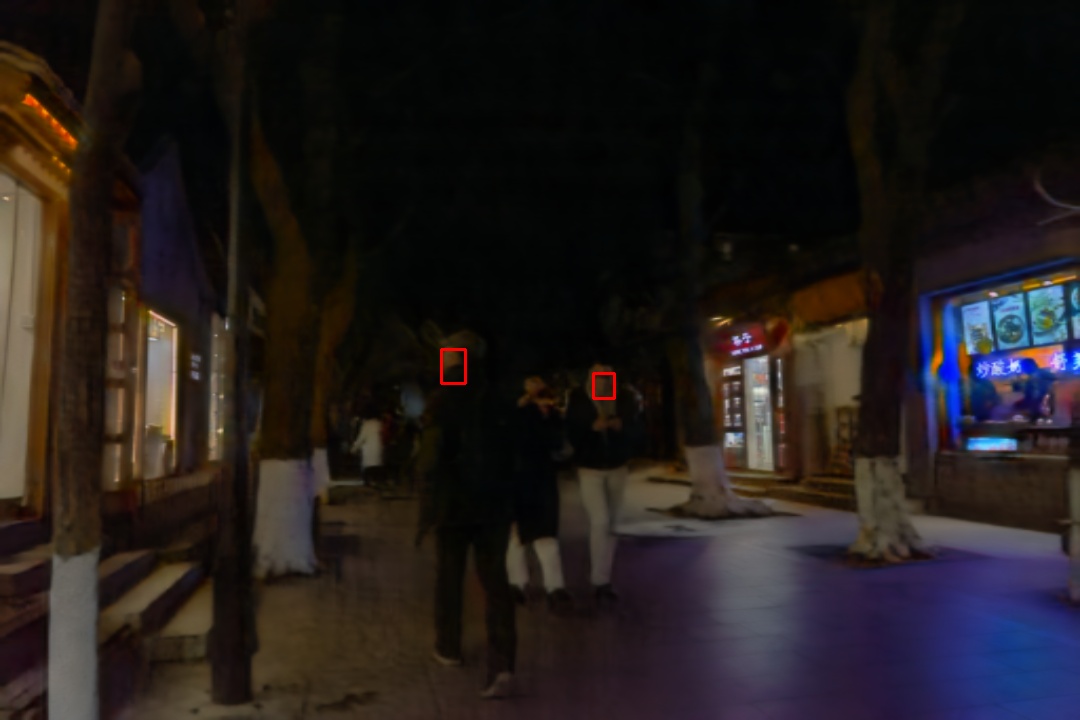}
      \caption{NightEnhance}
      \label{fig:darkface-3-i}
  \end{subfigure}
  \hfill
  \begin{subfigure}{0.196\linewidth}
    \includegraphics[width=1\linewidth]{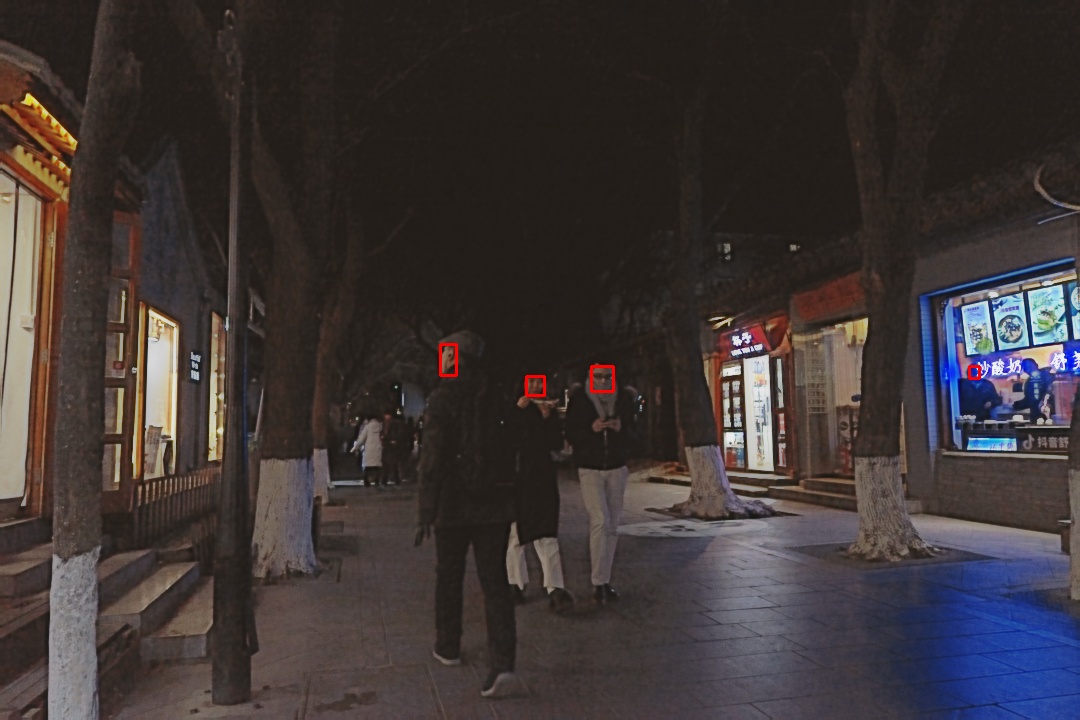}
      \caption{PairLIE}
      \label{fig:darkface-3-e}
  \end{subfigure}
  \hfill
  \begin{subfigure}{0.196\linewidth}
    \includegraphics[width=1\linewidth]{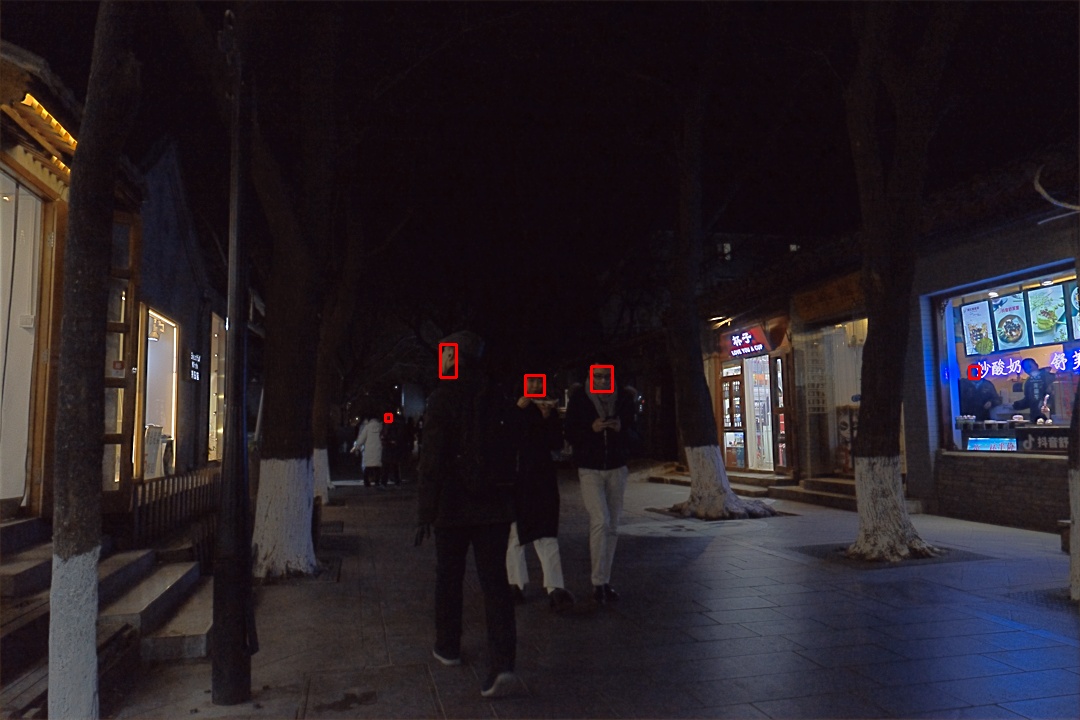}
      \caption{Ours}
      \label{fig:darkface-3-j}
  \end{subfigure}
\setlength{\abovecaptionskip}{-3pt} 
\setlength{\belowcaptionskip}{-3pt}
  \caption{{\colorRevision A visual comparison of enhancement and face detection results on DARKFACE. Please zoom in for better view. }}
  \label{fig:darkface-3}
\end{figure*}
\begin{figure*}[htbp]
  \centering
  \begin{subfigure}{0.196\linewidth}
    \includegraphics[width=1\linewidth]{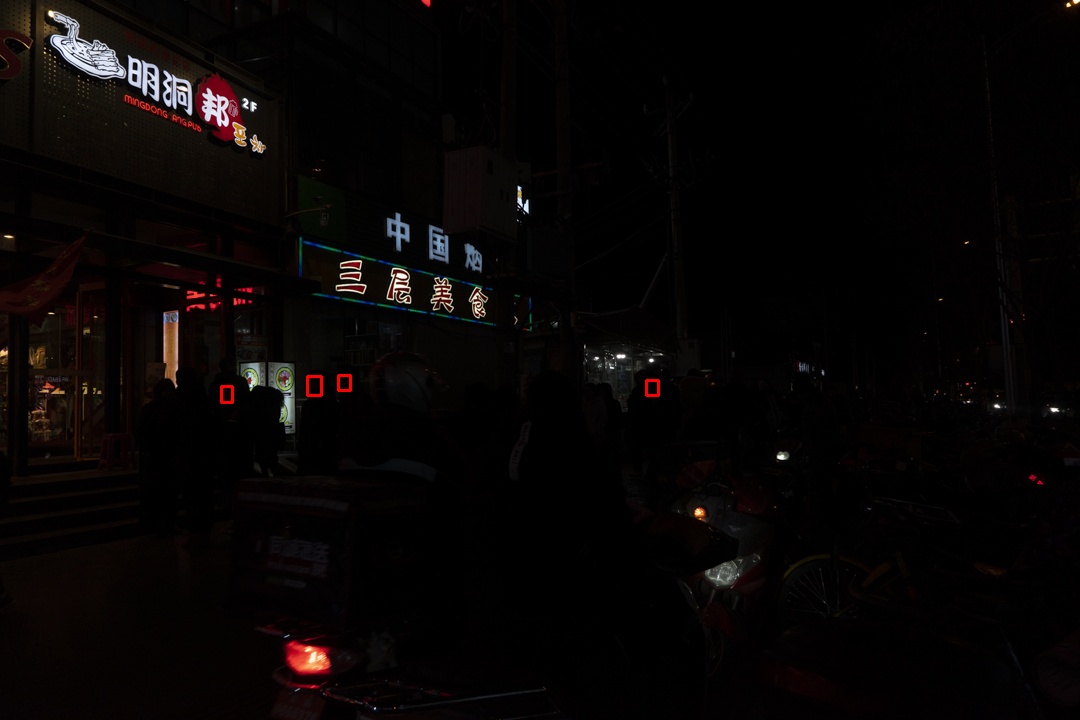}
      \caption{Input}
      \label{fig:darkface-4-a}
  \end{subfigure}
  \hfill
  \begin{subfigure}{0.196\linewidth}
    \includegraphics[width=1\linewidth]{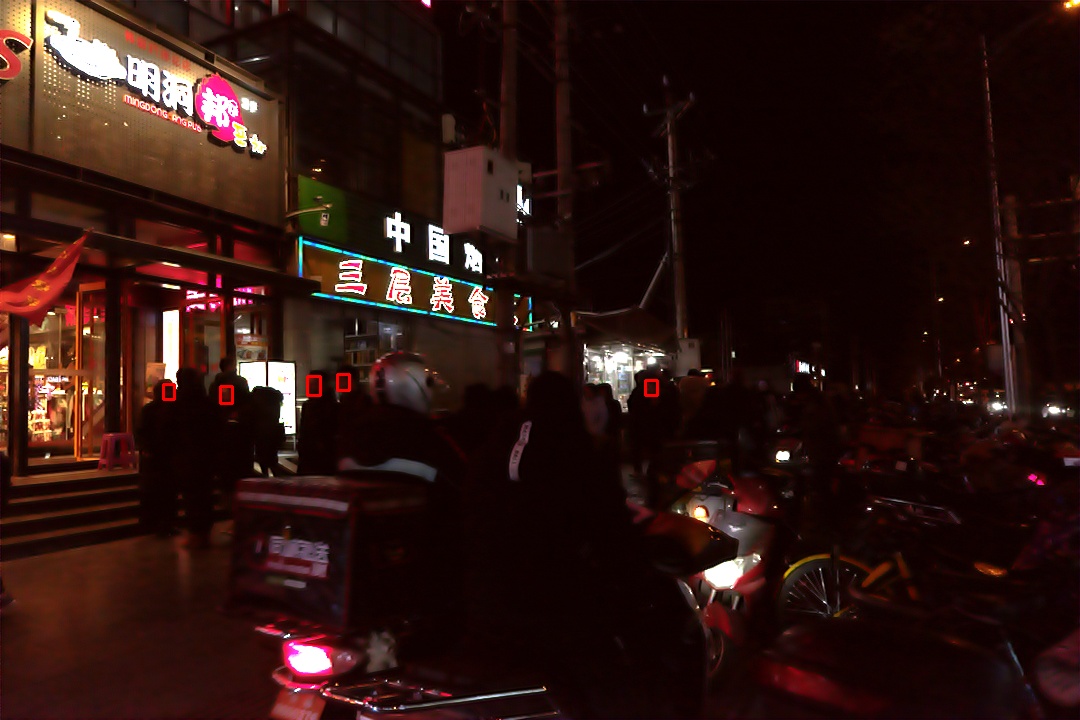}
      \caption{RUAS}
      \label{fig:darkface-4-b}
  \end{subfigure}
  \hfill
  \begin{subfigure}{0.196\linewidth}
    \includegraphics[width=1\linewidth]{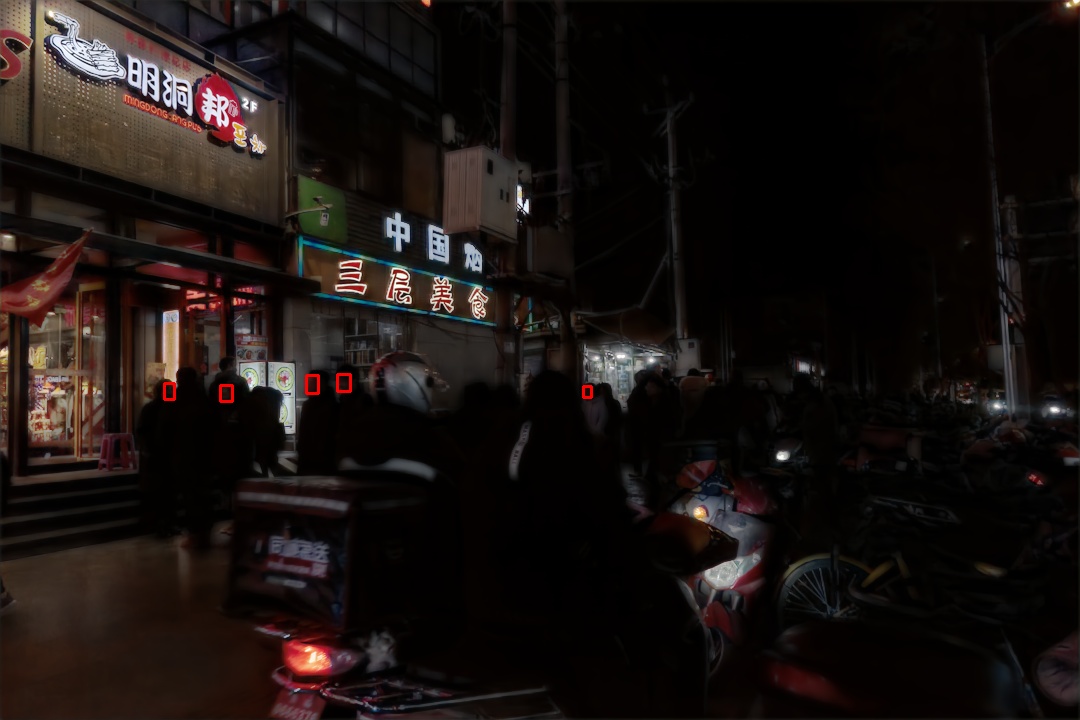}
      \caption{KinD}
      \label{fig:darkface-4-c}
  \end{subfigure}
  \hfill
  \begin{subfigure}{0.196\linewidth}
    \includegraphics[width=1\linewidth]{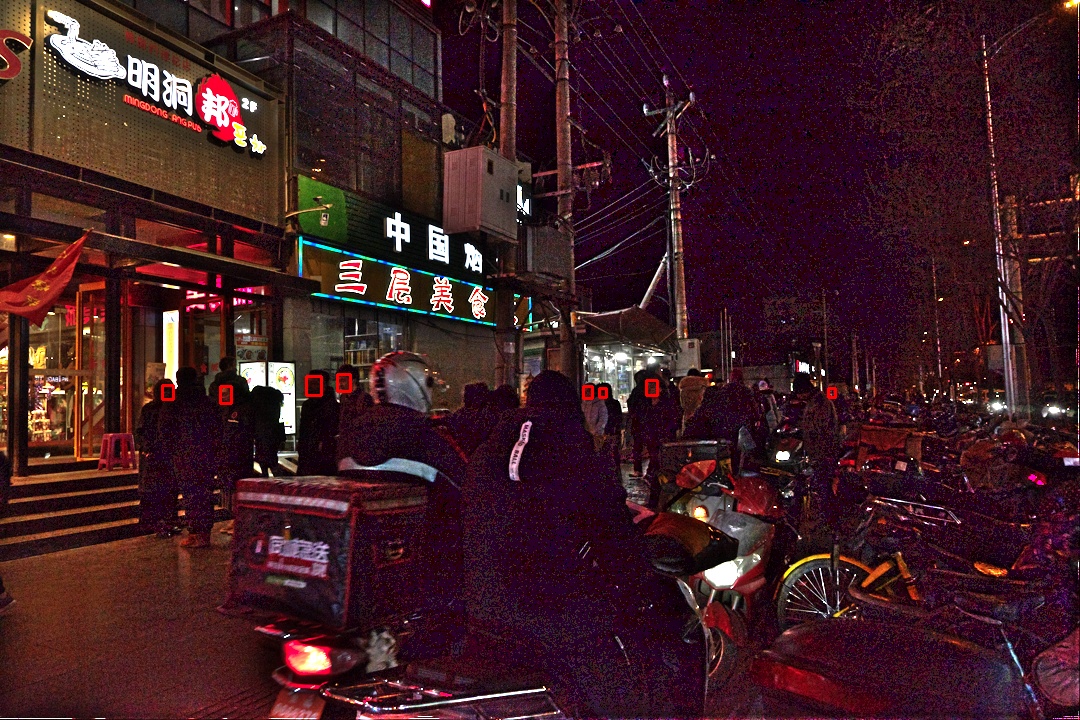}
      \caption{LIME}
      \label{fig:darkface-4-d}
  \end{subfigure}  
  \hfill
  \begin{subfigure}{0.196\linewidth}
    \includegraphics[width=1\linewidth]{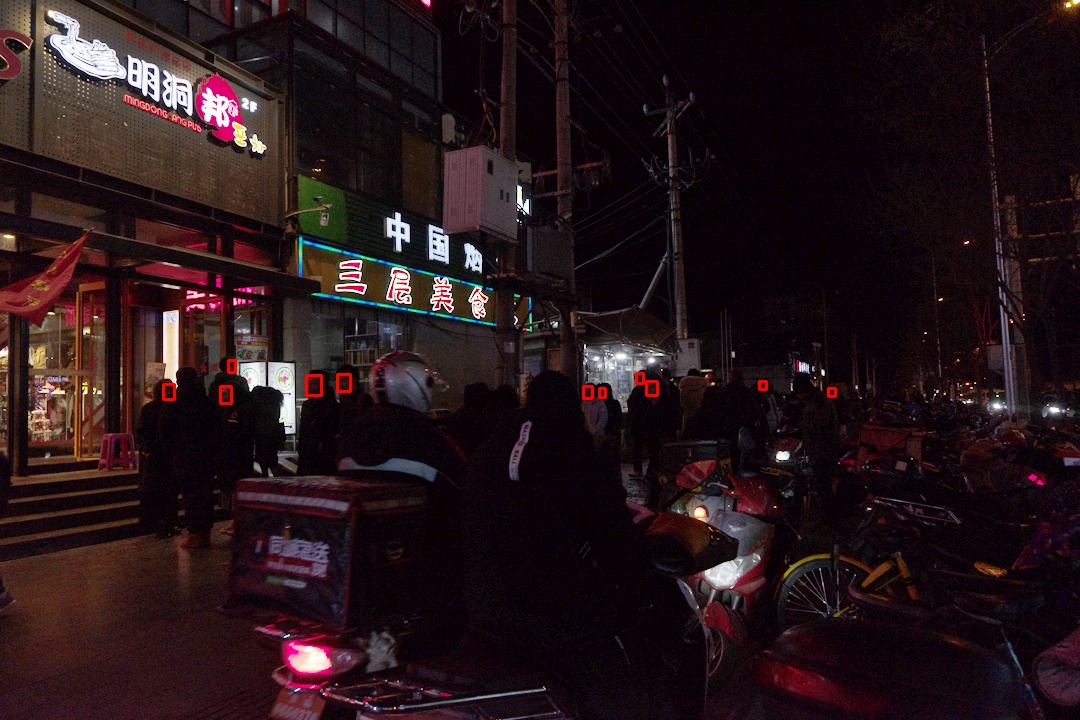}
      \caption{ZeroDCE++}
      \label{fig:darkface-4-f}
  \end{subfigure}
  
  \centering
  \begin{subfigure}{0.196\linewidth}
    \includegraphics[width=1\linewidth]{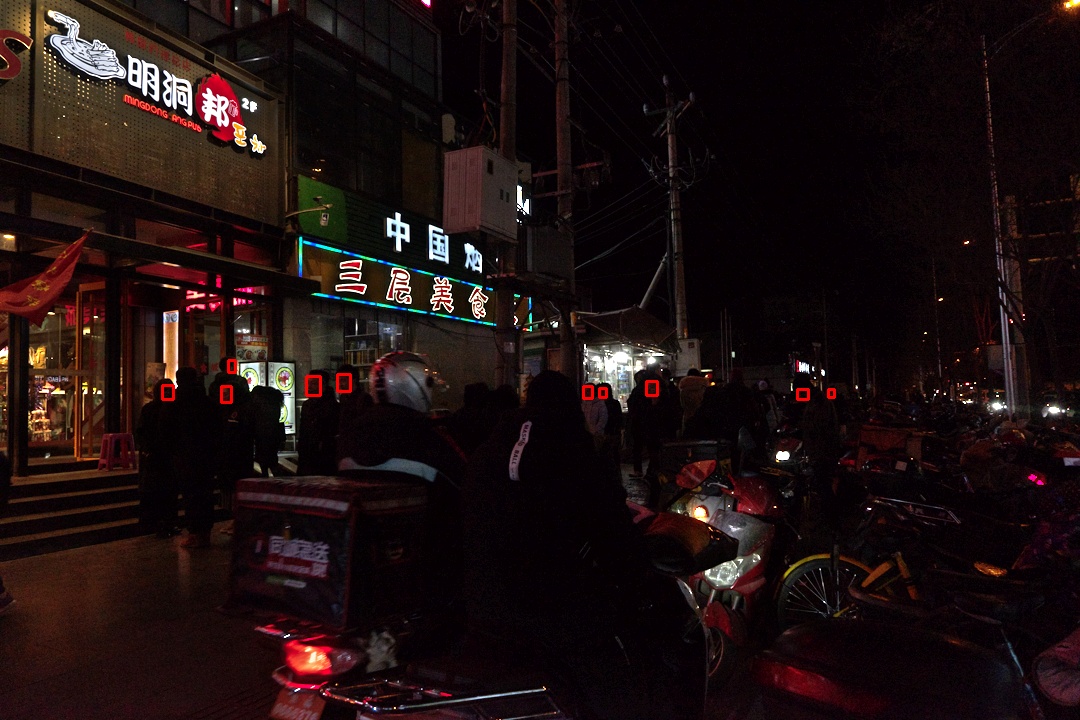}
      \caption{RetinexDIP}
      \label{fig:darkface-4-g}
  \end{subfigure}
  \hfill
  \begin{subfigure}{0.196\linewidth}
    \includegraphics[width=1\linewidth]{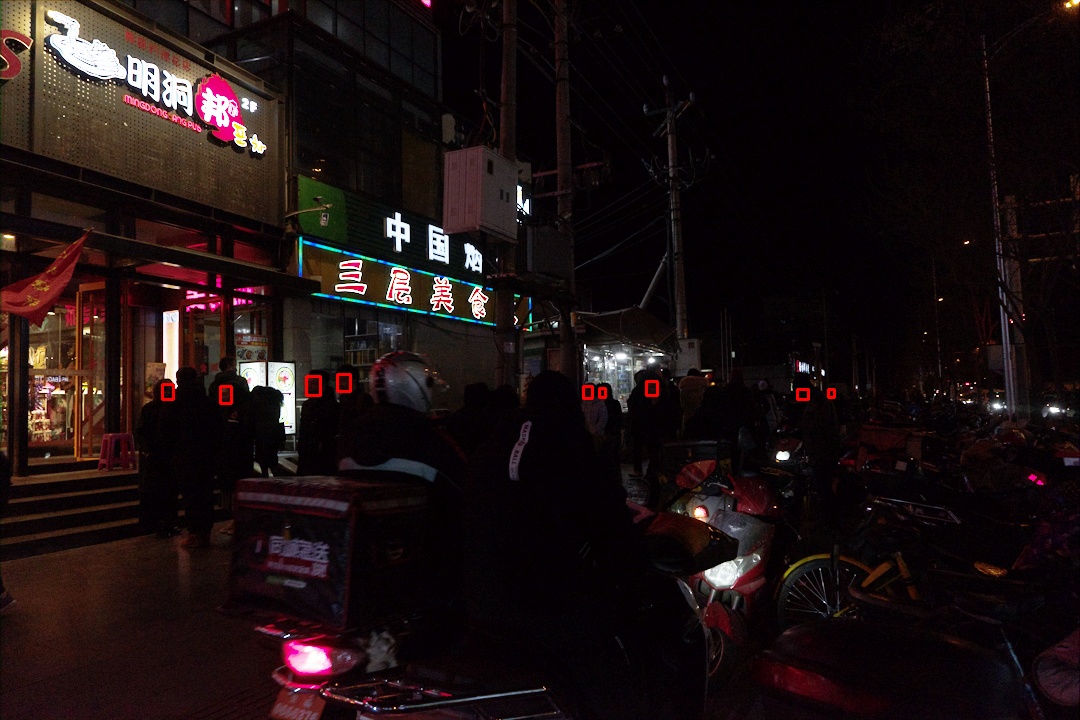}
      \caption{SCI}
      \label{fig:darkface-4-h}
  \end{subfigure}
  \hfill
  \begin{subfigure}{0.196\linewidth}
    \includegraphics[width=1\linewidth]{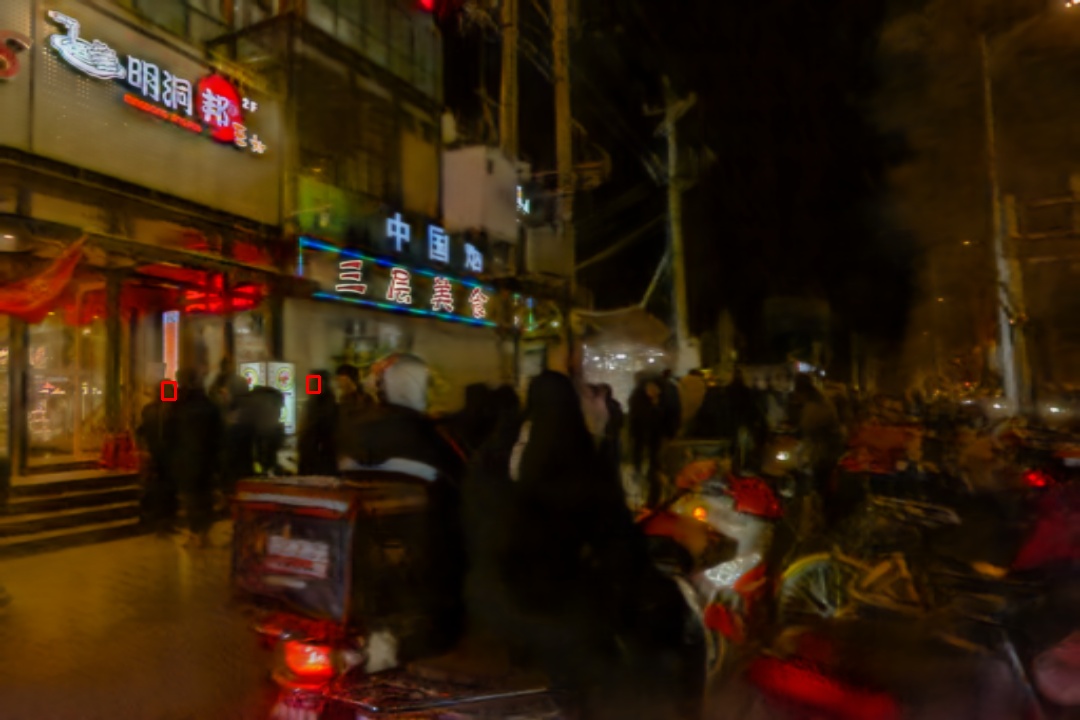}
      \caption{NightEnhance}
      \label{fig:darkface-4-i}
  \end{subfigure}
  \hfill
  \begin{subfigure}{0.196\linewidth}
    \includegraphics[width=1\linewidth]{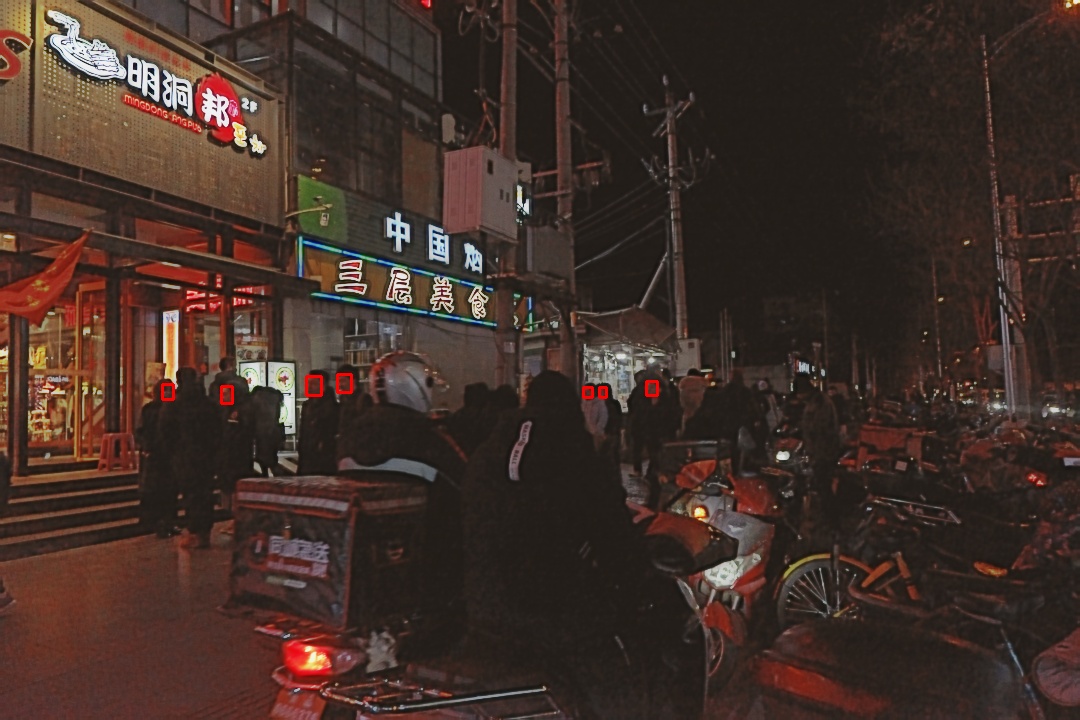}
      \caption{PairLIE}
      \label{fig:darkface-4-e}
  \end{subfigure}
  \hfill
  \begin{subfigure}{0.196\linewidth}
    \includegraphics[width=1\linewidth]{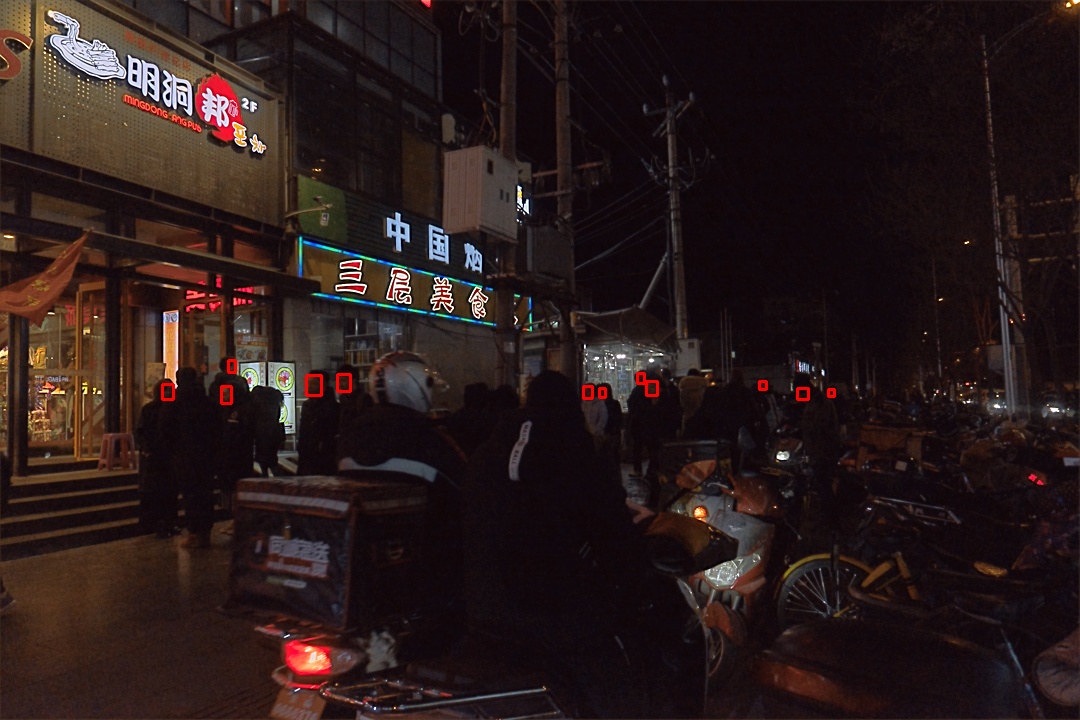}
      \caption{Ours}
      \label{fig:darkface-4-j}
  \end{subfigure}
\setlength{\abovecaptionskip}{-3pt} 
\setlength{\belowcaptionskip}{-3pt}
  \caption{{\colorRevision A visual comparison of enhancement and face detection results on DARKFACE. Please zoom in for better view.} }
  \label{fig:darkface-4}
\end{figure*}

\begin{figure}[htbp]
  \centering
\includegraphics[trim={14mm 0 0 0},clip,width=1\linewidth]{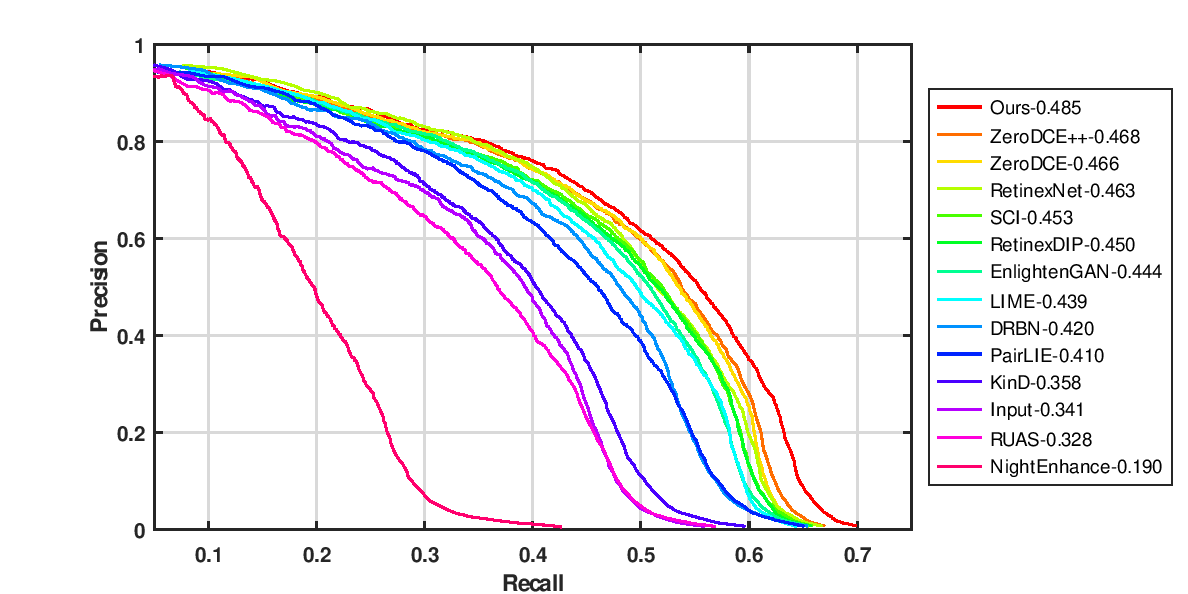}
  \caption{{\colorRevision The Precision-Recall Curve of performing detection on DARKFACE~\citep{yuan2019ug} by DSFD~\citep{li2018dsfd} for the LLIE methods.}}
  \label{fig:pr-curve}
\end{figure}

\subsection{Ablation Study}

We evaluate the effect of the proposed offset $\betam$, the variance suppression loss $\mathcal{L}_{VS}$, the reverse degradation loss $\mathcal{L}_{RD}$, and the mask function $\mathcal{M}$ in $\mathcal{L}_{RD}$. 
{\colorRevision We also conduct ablation studies on the choice of scalar or matrix of predicted parameters. }
We also trial other formulations of the mapping function $g$ besides Eq.~\ref{eq::mapping}. The expressions are listed below in Eq.~\ref{eq::othermapping}. 
\begin{small} 
\begin{equation}\label{eq::othermapping}
    \begin{split}
        g_1 (c) &= \frac{1+c}{1-c + e} \\ 
        g_2 (c) &= 3\tanh^{-1}\left(\frac{1}{2} + \frac{1}{2}c\right) = \frac{3}{2}\log(\frac{3+c}{1-c + e}) \\
        g_3 (c) &= 3 \log(g(c) + 1), \\
    \end{split}
\end{equation}
\end{small}
where $e=10^{-8}$ is a small value preventing error in program. The explanation for choosing the formulations are discussed in the appendix.

The results are shown in Table.~\ref{tab:ablation_normal}. We can conclude that each of $\betam$, $\mathcal{L}_{VS}$, $\mathcal{L}_{RD}$, and $\mathcal{M}$ leads to better enhancement. 
{\colorRevision The constant parameters cannot fully encode the complex expressions of the coefficients. We show an example in Fig.~\ref{fig::var_loss_show} and the heatmap of decomposed parameters in Fig.~\ref{fig::decomp} in the Appendix. }
For different choices of the mapping function, $g_3$ has the comparable performance to $g$. However, the double non-linearity of $\log$ and $\tan$ in $g_3$ yields slower speed and thus we choose $g$ in Eq.~\ref{eq::mapping}.

\begin{table}[htbp]
\small
  \centering
  \caption{{\colorRevision The results of our model with various settings on LOL-v1. The frames per second (FPS) are used for speed comparison.}}
    \begin{tabular}{ccccccccc}
    \toprule
         & w/o $\betam$ & w/o $\mathcal{L}_{VS}$ & w/o $\mathcal{L}_{RD}$ & w/o $\mathcal{M}$ \\
    \midrule
    PSNR  & 10.44 & 20.15 & 17.62 & 21.52 \\
    SSIM  & 0.476 & 0.721 & 0.656 & 0.757 \\
    \midrule
    \midrule
          & $g_1$    & $g_2$    & $g_3$    & Ours \\
    \midrule
    PSNR  & 17.43 & 20.34 & 21.62 & 21.54 \\
    SSIM  & 0.679 & 0.724 & 0.768 & 0.766 \\
    FPS & 986 & 891 & 831 & 954 \\
    \midrule
    \midrule
          & $b\&c$    & $\bb\&c$    & $b\&\cc$    & $\bb\&\cc$ \\
    \midrule
    PSNR  & 14.29 & 17.78 & 19.40 & 21.54 \\
    SSIM  & 0.547 & 0.653 & 0.692 & 0.766 \\
    \bottomrule
    \end{tabular}%
  \label{tab:ablation_normal}%
\end{table}%


\vspace{-1.5mm}
\subsection{Extentions}
\vspace{-1.5mm}
\noindent\textbf{Results on DARKFACE.}
Image enhancement can serve as a preprocessing for high-level downstream tasks. We thus evaluate our method on a face detection dataset in low-light conditions, i.e., DARKFACE~\citep{yuan2019ug}. The detector of Dual Shot Face Detector (DSFD)~\citep{li2018dsfd} pretrained on WIDERFACE~\citep{yang2016wider} is used for predicting face location. For the enhanced data through each LLIE method, a Precision-Recall curve (PR curve) can be drawn. The stacked PR curves for all methods are shown in Fig.~\ref{fig:pr-curve}. We can see our method has the best performance. Four samples for visual comparison is illustrated in Fig.~\ref{fig:darkface-1} to Fig.~\ref{fig:darkface-4}, where only the enhancement by our method detects the most faces correctly. Note that though some methods like NightEnhance~\citep{jin2022unsupervised} could generate brighter scenes, the introduction of more artifacts and noise reduces the face detection accuracy, which is not consistently in line with human intuition.

\noindent\textbf{Efficiency.}
As shown in Fig.~\ref{fig::space_time_perf}, our methods achieve the best overall balance between visual quality and model size/run time consistently on LOL-v1~\citep{Chen2018Retinex} and LOL-v2~\citep{yang2021sparse}. Compared to NightEnhance~\citep{jin2022unsupervised}, our models are much faster and small. Compared to SCI~\citep{ma2022toward} and ZeroDCE++~\citep{li2021learning}, ours have much higher PSNR.

\begin{figure}[htbp]
\small
  \centering
  \begin{subfigure}{0.495\linewidth}
    \includegraphics[trim={0 30mm 0 30mm},clip,width=1\linewidth]{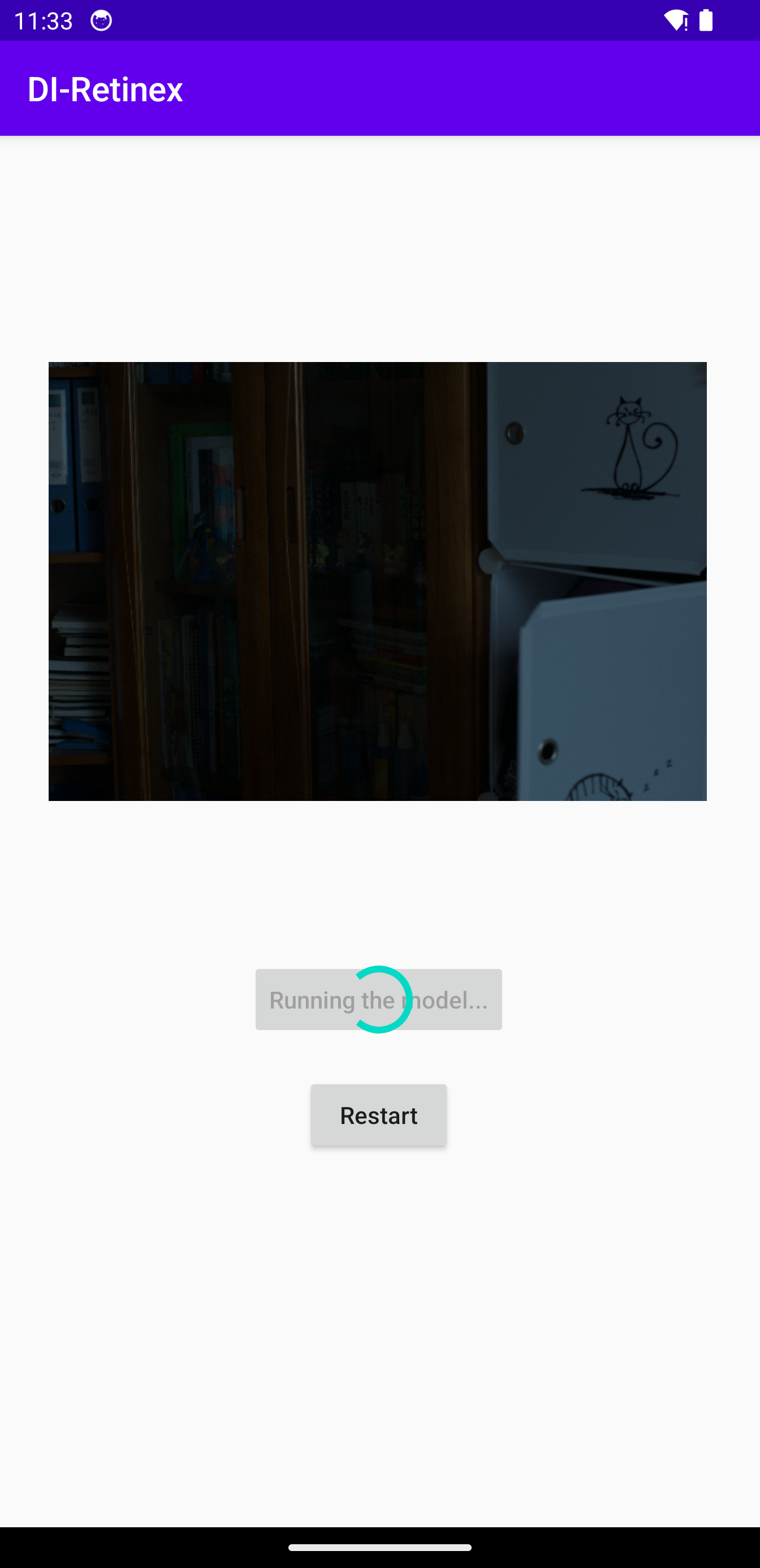}
      \caption{Screenshot during inference}
      \label{fig:mobile_input}
  \end{subfigure}
  \hfill
  \begin{subfigure}{0.495\linewidth}
    \includegraphics[trim={0 30mm 0 30mm},clip,width=1\linewidth]{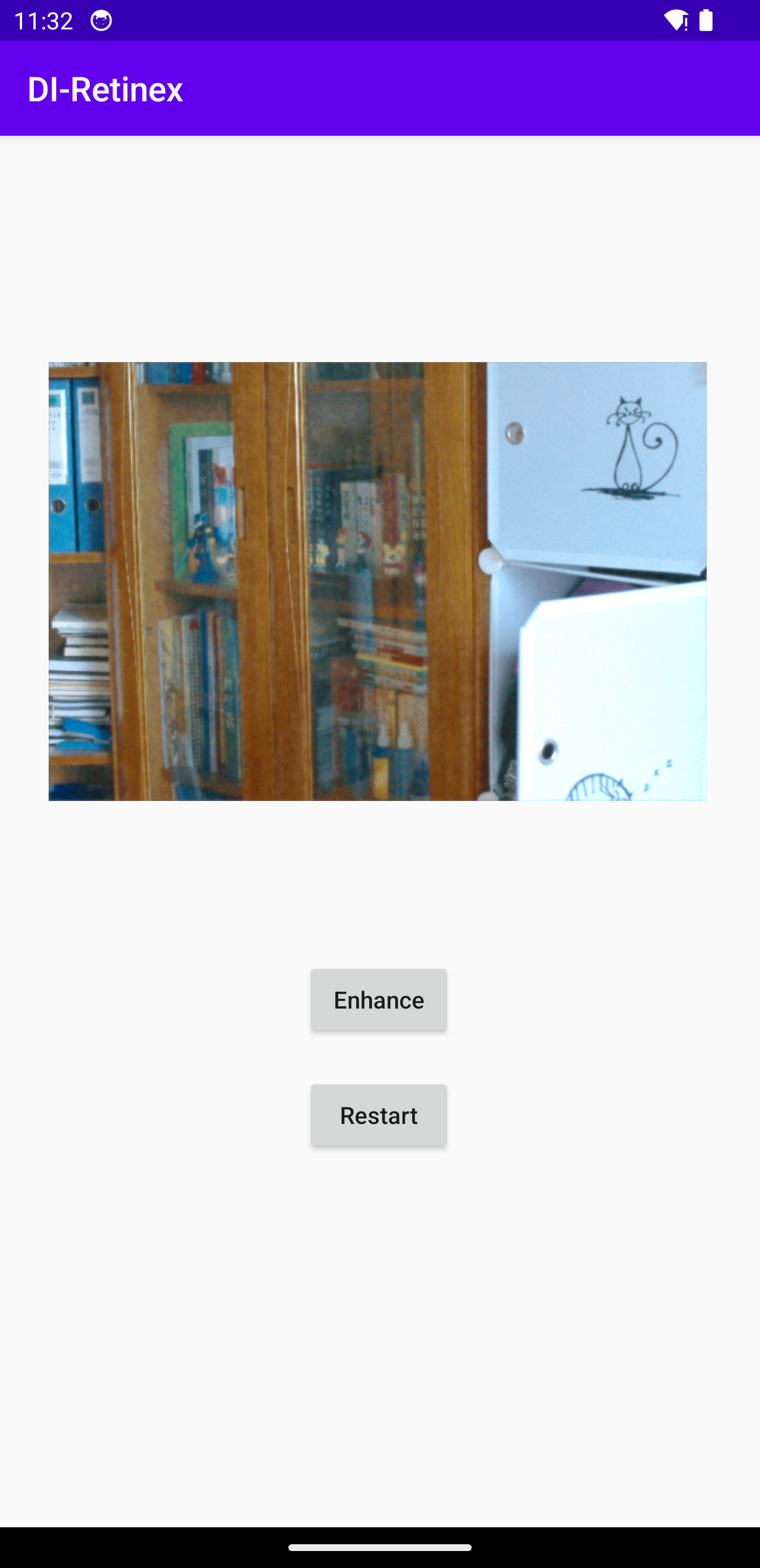}
      \caption{Screenshot of the result}
      \label{fig:mobile_result}
  \end{subfigure}
  \caption{{\colorRevision The mobile implementation based on Android.}}
  \label{fig::mobile}
  \vspace{-0.2cm}
\end{figure}

\noindent\textbf{Mobile Implementation.}
We implement out method and create an application on Android system by Android Studio based on the repository of PyTorch on Android \footnote{\href{https://github.com/pytorch/android-demo-app/}{https://github.com/pytorch/android-demo-app/}}. As seen in Fig.~\ref{fig::mobile}, our model can achieve an accurate enhancement result on a type of mobile device, Huawei Mate 20. The runtime on the device is 0.439 second per frame. The experiment shows the feasibility of applying our lightweight model to mobile application. We also trial implementing several other lightweight methods on the mobile device and test their inference time. For NightEnhance~\citep{jin2022unsupervised} and PairLIE~\citep{fu2023pairlie}, the buffer usage exceeds the device limit during converting datatype. The results are shown in Table~\ref{tab:mobile_time}. Note that the mobile implementation of PyTorch is on CPU.
\begin{table}[htbp]
  \centering
  \caption{{\colorRevision The average runtime [s] and PSNR of the lightweight LLIE models on the mobile device, Huawei Mate 20.}}
    \begin{tabular}{ccccc}
    \toprule
    & ZeroDCE & ZeroDCE++ & SCI   & Ours \\
    \midrule
    runtime & 0.822 & 0.701 & 0.762 & 0.439 \\
    PSNR & 14.86 & 15.34 & 14.78 & 21.54 \\
    \bottomrule
    \end{tabular}%
  \label{tab:mobile_time}%
\end{table}%

\vspace{-1.5mm}
\section{Conclusion}
\vspace{-1.5mm}
In this work, we extend the classic Retinex theory, targeted initially at formulating scene radiance reaching human eyes, to a complex form taking noise, quantization error, non-linearity response function, and dynamic range limitation into consideration.
By incorporating these factors, we demonstrate that the LLIE problem can be modeled through a pixel-wise linear restoration function that includes an offset with non-zero mean, which was previously ignored in other works.
%
%
Besides we introduce a contrast brightness adjustment function for enhancing low-light images in a zero-shot learning manner. A masked reverse degradation loss in Gamma space and a variance suppression loss for regulating the offset feature are derived from the proposed LLIE model. 
The reverse degradation process darkens an enhanced image back to the input low-light image, thereby reducing the offset. Our method can achieve the best performance with the fewest parameters and the least runtime.
\end{sloppypar}


  


\bibliographystyle{abbrvnat}

{\footnotesize
\bibliography{main}
}

  \section*{Appendix} 
    
\subsection*{Notation}
We first briefly summarise the notations in the paper.

\begin{equation*}
    \begin{split}
        \I&: \text{Image} \\
        \L&: \text{Illuminance} \\
        \R&: \text{Reflectance} \\
        (\cdot)_l&: \text{Component for the low light image} \\
        (\cdot)_h&: \text{Component for the normally exposed image} \\
        \epsilonm &: \text{Noise term} \\
        \deltam &: \text{Quantization error} \\
        \Delta \I &: \text{Offset term of the dynamic range compression} \\
        \oslash &: \text{Element-wise division} \\
        \prodm &: \text{Element-wise multiplication} \\
        \alpham &: \text{The ratio term we want to derive in Eq.~\ref{eq::newllie_model}} \\
        \betam &: \text{The offset term we want to derive in Eq.~\ref{eq::newllie_model}} \\
        \alpham' &: \text{The ratio term in reverse degradation process Eq.~\ref{eq::reverseDegradation}} \\
        \betam' &: \text{The offset term in reverse degradation process Eq.~\ref{eq::reverseDegradation}} \\
        k &: \text{The maximum value of intensity} \\
        q &: \text{The least significant bit} \\
        g &: \text{The mapping function we use} \\
    \end{split}
\end{equation*}

\subsection{Derivation of Eq.~\ref{eq::newllie_model}}\label{sec::derivationNewLLIE}

We have the expressions of DI-Retinex theory for both low light and normally exposed images as follows:
\begin{align*}
        \left\{
            \begin{aligned}
                \I_l &= \mathcal{G}\left(\L_l \prodm \R + \epsilonm_l\right) + \deltam_l + \Delta\I_l\\
                \I_h &= \mathcal{G}\left(\L_h \prodm \R + \epsilonm_h\right) + \deltam_h + \Delta\I_h.
            \end{aligned}
        \right. 
\end{align*}

Plug the expression of $\mathcal{G}$.

\begin{align*}
        \left\{
            \begin{aligned}
                \I_l &= \mu + \lambda\left(\L_l \prodm \R + \epsilonm_l\right)^{\gamma} + \deltam_l + \Delta\I_l\\
                \I_h &= \mu + \lambda\left(\L_h \prodm \R + \epsilonm_h\right)^{\gamma} + \deltam_h + \Delta\I_h.
            \end{aligned}
        \right. 
\end{align*}

By transforming the first expression, we have
\begin{equation*}
    \R = \left\{ \left[\frac{1}{\lambda}\left(\I_l - \deltam_l - \Delta\I_l - \mu\right) \right]^{\frac{1}{\gamma}} - \epsilonm_l\right\} \oslash \L_l.
\end{equation*}

Plug it into the expression of extended Retinex theory for normally exposed image. Suppose $\r=\L_h \oslash \L_l \ge \onem$. Then we have
\begin{equation*}
\begin{split}
    \I_h = &\lambda\left(\r \prodm  \left[\frac{1}{\lambda}\left(\I_l - \deltam_l - \Delta\I_l - \mu\right) \right]^{\frac{1}{\gamma}} - \r \prodm \epsilonm_l + \epsilonm_h\right)^{\gamma} \\
    & + \mu + \deltam_h + \Delta\I_h.
\end{split}
\end{equation*}

\noindent  According to the approximation that 
\begin{equation}\label{eq::approximation}
    (1+x)^n=1+nx ,
\end{equation}
\noindent when $x\rightarrow 0$ by Taylor expansion, we can get 
\begin{equation*}
\begin{split}
    \I_h \approx &\lambda\r^\gamma \prodm  \left[\frac{1}{\lambda}\left(\I_l - \deltam_l - \Delta\I_l - \mu\right) \right]^{}  \\
    & + \lambda \gamma \r^{\gamma-1} (\epsilonm_h - \r \prodm \epsilonm_l) \left[\frac{1}{\lambda}\left(\I_l - \deltam_l - \Delta\I_l - \mu\right) \right]^{\frac{\gamma-1}{\gamma}} \\
    & + \mu + \deltam_h + \Delta\I_h.
\end{split}
\end{equation*}

\noindent Let the second term be
$$T = \lambda \gamma \r^{\gamma-1} (\epsilonm_h - \r \prodm \epsilonm_l) \left[\frac{1}{\lambda}\left(\I_l - \deltam_l - \Delta\I_l - \mu\right) \right]^{\frac{\gamma-1}{\gamma}},$$ 
and use approximation in Eq.~\ref{eq::approximation}, thus we have 
\begin{equation*}
\begin{split}
    T &= \lambda^{\frac{1}{\gamma}} \gamma \r^{\gamma-1} (\epsilonm_h - \r \prodm \epsilonm_l) \left(\I_l - \deltam_l - \Delta\I_l - \mu\right)^{\frac{\gamma-1}{\gamma}} \\
    &\begin{split}
        \approx &\lambda^{\frac{1}{\gamma}} \gamma \r^{\gamma-1} (\epsilonm_h - \r \prodm \epsilonm_l) \\
        &\left[\I_l^{\frac{\gamma-1}{\gamma}} + {\frac{1-\gamma}{\gamma}}(\deltam_l + \Delta\I_l + \mu) \I_l^{\frac{-1}{\gamma}}\right].
    \end{split}
\end{split}
\end{equation*}

\noindent Note that $\gamma \approx 0.5$, $\r \ge \onem$ and thus $\r^{\gamma-1} \le \onem$. The noise term $(\epsilonm_h - \r \prodm \epsilonm_l)$ is small. When $\I_l\in[1,255]$, we have $\I_l^{\frac{\gamma-1}{\gamma}} \le \onem$ and $\I_l^{\frac{-1}{\gamma}} \le \onem$, which ensures $T\rightarrow 0$ compared to the maximum intensity of $255$ for $\I_h$. When $\I_{l,i,j}=0$ at $(i,j)$-th pixel, $\L_{l,i,j}$ also equals zero and thus $\r_{i,j}=+\infty \Rightarrow \r_{i,j}^{\gamma-1}\rightarrow 0$. Besides, the probability of $\I_{l,i,j}$ exactly equaling to zero is extremely small among pixels in reality. 
The second term $T$ is thus much smaller than the other terms and can be safely omitted, which yields
\begin{equation*}
\begin{split}
    \I_h &\approx \r^\gamma \prodm \left(\I_l - \deltam_l - \Delta\I_l - \mu\right) + \mu + \deltam_h + \Delta\I_h\\
    &= \r^\gamma \prodm \I_l + \left[\mu + \deltam_h + \Delta\I_h - \r^\gamma \prodm \left( \deltam_l + \Delta\I_l + \mu\right)\right].
\end{split}
\end{equation*}

\noindent Therefore, we can obtain
\begin{align*}
    &\I_h \approx \alpham \I_l + \betam. \\
    \text{where} &\left\{
    \begin{aligned}
        \alpham &= \r^{\gamma} = \left(\L_h \oslash \L_l \right)^\gamma \\
        \betam &= \mu + \deltam_h + \Delta\I_h - \r^\gamma \prodm \left( \deltam_l + \Delta\I_l + \mu\right).
    \end{aligned}
    \right.
\end{align*}

\subsection{Derivation of Eq.~\ref{eq::reverseDegradation}}\label{sec::derivationReverseDegradation}
We want to show 
\begin{align*}
    &\I_l \approx \alpham' \I_h + \betam', \\
    \text{where} &\left\{
    \begin{aligned}
        \alpham' &= \r^{-\gamma} = \left(\L_l \oslash \L_h \right)^\gamma \\
        \betam' &= \mu + \deltam_l + \Delta\I_l - \r^{-\gamma} \prodm \left( \deltam_h + \Delta\I_h + \mu\right).
    \end{aligned}
    \right.
\end{align*}
Similar to the derivation of Eq.~\ref{eq::newllie_model},  we start from
\begin{align*}
        \left\{
            \begin{aligned}
                \I_l &= \mathcal{G}\left(\L_l \prodm \R + \epsilonm_l\right) + \deltam_l + \Delta\I_l\\
                \I_h &= \mathcal{G}\left(\L_h \prodm \R + \epsilonm_h\right) + \deltam_h + \Delta\I_h.
            \end{aligned}
        \right. 
\end{align*}
and we can get
\begin{equation*}
\begin{split}
    \I_l = &\lambda\left(\r^{-1} \prodm  \left[\frac{1}{\lambda}\left(\I_h - \deltam_h - \Delta\I_h - \mu\right) \right]^{\frac{1}{\gamma}} - \r^{-1} \prodm \epsilonm_h + \epsilonm_l\right)^{\gamma} \\
    & + \mu + \deltam_l + \Delta\I_l.
\end{split}
\end{equation*}
Then by using approximation in Eq.~\ref{eq::approximation} we have
\begin{equation*}
\begin{split}
    \I_l \approx &\lambda\r^{-\gamma} \prodm  \left[\frac{1}{\lambda}\left(\I_h - \deltam_h - \Delta\I_h - \mu\right) \right]^{}  \\
    & + \lambda \gamma \r^{1-\gamma} (\epsilonm_l - \r^{-1} \prodm \epsilonm_h) \left[\frac{1}{\lambda}\left(\I_h - \deltam_h - \Delta\I_h - \mu\right) \right]^{\frac{\gamma-1}{\gamma}} \\
    & + \mu + \deltam_l + \Delta\I_l.
\end{split}
\end{equation*}
Let the second term be
\begin{equation*}
\begin{split}
    T' &= \lambda^{\frac{1}{\gamma}} \gamma \r^{1-\gamma} (\epsilonm_l - \r^{-1} \prodm \epsilonm_h) \left(\I_h - \deltam_h - \Delta\I_h - \mu\right)^{\frac{\gamma-1}{\gamma}} \\
    &\begin{split}
        \approx &\lambda^{\frac{1}{\gamma}} \gamma \r^{1-\gamma} (\epsilonm_l - \r^{-1} \prodm \epsilonm_h) \\
        &\left[\I_h^{\frac{\gamma-1}{\gamma}} + {\frac{1-\gamma}{\gamma}}(\deltam_h + \Delta\I_h + \mu) \I_h^{\frac{-1}{\gamma}}\right].
    \end{split}
\end{split}
\end{equation*}
Note that $\r \ge \onem$, and thus $\r^{-1} \ge \onem$. The noise term $(\epsilonm_l - \r^{-1} \prodm \epsilonm_h)$ is small. When $\I_h\in[1,255]$, we have $\I_h^{\frac{\gamma-1}{\gamma}} \le \onem$ and $\I_h^{\frac{-1}{\gamma}} \le \onem$, which ensures $T\rightarrow 0$ compared to the maximum intensity. When $\I_{h,i,j}=0$ at $(i,j)$-th pixel, $\L_{h,i,j}$ also equals zero and thus $\r_{i,j}^{1-\gamma}\rightarrow 0$. Besides, the probability of $\I_{h,i,j}$ exactly equaling to zero is extremely small among pixels in reality. 
The second term $T'$ is thus much smaller than the other terms and can be safely omitted, which yields
\begin{equation*}
\begin{split}
    \I_l &\approx \r^{-\gamma} \prodm \left(\I_h - \deltam_h - \Delta\I_h - \mu\right) + \mu + \deltam_l + \Delta\I_l\\
    &= \r^{-\gamma} \prodm \I_h + \left[ \mu + \deltam_l + \Delta\I_l - \r^{-\gamma} \prodm \left( \deltam_h + \Delta\I_h + \mu\right) \right].
\end{split}
\end{equation*}
Therefore, we can obtain
\begin{align*}
    &\I_l \approx \alpham' \I_h + \betam. \\
    \text{where} &\left\{
    \begin{aligned}
        \alpham' &= \r^{-\gamma} = \left(\L_l \oslash \L_h \right)^\gamma \\
        \betam' &= \mu + \deltam_l + \Delta\I_l - \r^{-\gamma} \prodm \left( \deltam_h + \Delta\I_h + \mu\right).
    \end{aligned}
    \right.
\end{align*}

\subsection{Discussion of Linear Model and the Offset $\betam$}\label{sec::discussionBeta}
The expression of $\betam$ is given by
$$\betam = \mu + \deltam_h + \Delta\I_h - \alpham \prodm \left(\mu + \deltam_l + \Delta\I_l\right),$$
\noindent where $\alpham=\left(\L_h \oslash \L_l \right)^\gamma \ge \onem$. We analyse the magnitude of $\betam$ by computing its mean and variance. Given $\mathbb{E}[\alpham] =\overline{\alpham} > 1$ and $\mathbb{E}[\deltam_l]=\mathbb{E}[\deltam_h]=0$ and suppose $\alpham$ is independent of $\Delta\I_h$ and $\Delta\I_l$, we have
$$\mathbb{E}[\betam] = \mu(1-\overline{\alpham} ) + \mathbb{E}[\Delta\I_h] - \overline{\alpham}\mathbb{E}[\Delta\I_l] .$$
A properly exposed has equal probability of overexposure and underexposure. Thus $\mathbb{E}[\Delta\I_h] = 0$. Also a negative overflow exceeding $0$ caused by a negative image irradiance is seldom and thus $\mathbb{E}[\Delta\I_l]\rightarrow 0$. $\mu$ is a negative number such that the Gamma transformation can map $[0, +\infty)$ to $[0,255]$. $(1-\overline{\alpham}) < -1$ (found in the following statistical experiments) makes the magnitude of $\mathbb{E}[\betam]$ significant. Even if $\mu(1-\overline{\alpham} ) = \overline{\alpham}\mathbb{E}[\Delta\I_l]$ in some rare case, $\Delta\I_l$ has non-zero values only in some regions of an image and $\mu(1-\overline{\alpham} )$ has large values in the regions where $\Delta\I_l$ has zero values. Therefore, $\betam$ is not negligible in terms of considerable mean.

We then compute $\betam$'s variance. The variance of $\mu$ is zero as it is a constant. Since the quantization error follows a uniform distribution from $\-q/2$ to $q/2$, its variance is $q^2/12$. Note that with the assumption of flux consistency along time dimension across spatial pixels during receiving radiance in a short period, $\alpham$ can be further treated as a constant matrix. So it can be considered as a scalar $\alpha$ when computing $\betam$'s variance,
\begin{equation*}
\begin{split}
    Var(\betam) &= (q^2/12) + Var(\Delta\I_h) +\alpha^2 \left((q^2/12) + Var(\Delta\I_l) \right) \\
    &= (1+\alpha^2) (q^2/12) + Var(\Delta\I_h) + \alpha^2Var(\Delta\I_l).
\end{split}
\end{equation*}
As shown, the factor of $\alpha^2$ enlarges the variance of $\betam$. The amplified variance also makes $\betam$ non-negligible.





\subsection{Discussion on Reverse Degradation}\label{sec::discussionBetaPrime}


The expression of $\betam'$ is given by
$$\betam' = \mu + \deltam_l + \Delta\I_l - \alpham' \prodm \left(\mu + \deltam_h + \Delta\I_h\right),$$
\noindent where $\alpham'=\left(\L_l \oslash \L_h \right)^\gamma \le \onem$. We analyse the magnitude of $\betam'$ by computing its mean and variance. Given $\mathbb{E}[\alpham'] =\overline{\alpham'} \in [0,1]$ and $\mathbb{E}[\deltam_h]=\mathbb{E}[\deltam_l]=0$ and suppose $\alpham'$ is independent of $\Delta\I_h$ and $\Delta\I_l$, we have
$$\mathbb{E}[\betam'] = \mu(1-\overline{\alpham'} ) + \mathbb{E}[\Delta\I_l] - \overline{\alpham'}\mathbb{E}[\Delta\I_h].$$
A properly exposed has equal probability of overexposure and underexposure. Thus $\mathbb{E}[\Delta\I_h] = 0$. Also a negative overflow exceeding $0$ caused by a negative image irradiance is seldom and thus $\mathbb{E}[\Delta\I_l]\rightarrow 0$. $\mu$ is a negative number such that the Gamma transformation can map $[0, +\infty)$ to $[0,255]$. $1-\overline{\alpham'}\in [0,1]$. This makes $\mathbb{E}[\betam'] \approx \mu(1-\overline{\alpham'} )$ getting small. 
For even more safely removing $\betam'$, we compute reverse degradation loss in Gamma space by taking a power of $\frac{1}{\gamma}$, which suppresses the magnitude of $\betam'$.

{\colorRevision
We then compute the variance of $\betam'$. Similar to Section~\ref{sec::discussionBeta}, we can get
\begin{equation*}
\begin{split}
    Var(\betam) &= (q^2/12) + Var(\Delta\I_l) +\alpha'^2 \left((q^2/12) + Var(\Delta\I_h) \right) \\
    &= (1+\alpha'^2) (q^2/12) + Var(\Delta\I_l) + \alpha'^2Var(\Delta\I_h),
\end{split}
\end{equation*}
where $\alpha'$ represents the scalar form of $\alpham$, assuming flux consistency across spatial pixels over a brief time period when receiving radiance. As depicted, the factor $\alpha'^2$ assumes a small value, given that $\alpha'$ is less than 1 while the max pixel value is 255. The variance of $\betam'$ closely approximates $(q^2/12) + Var(\Delta\I_l)$, which denotes the variance arising from error and offset terms in low-light images. It is important to note that the variance is small and thus approaching zero.
}




\subsection{Discussion on Mapping Function $g$ and Contrast Brightness Algorithm}\label{sec::MappingFunc_g}

We use four nonlinear function mapping $[-1,1]$ to $[0, +\infty)$. According to the finding of statistical experiments in Section~\ref{sec::discussionBeta}. The ratio between the exposures of low light and normally exposed images is mainly within $[1,10]$, we thus make sure that most output values of mapping function lie in the region. The formula of four used functions are given below and their curves are drawn in Fig.~\ref{fig:compare_g}.
\begin{equation*}
    \begin{split}
        g (c) &= \tan\left( \frac{45 + (45 - \tau) c}{180} \pi \right) \\
        g_1 (c) &= \frac{1+c}{1-c + e} \\ 
        g_2 (c) &= 3\tanh^{-1}\left(\frac{1}{2} + \frac{1}{2}c\right) = \frac{3}{2}\log(\frac{3+c}{1-c + e}) \\
        g_3 (c) &= 3 \log(g(c) + 1).
    \end{split}
\end{equation*}
A shifted reciprocal function like $g_1$ is a straightforward and common choice. But we can see that $g_1$ has an extremely steep rise when $c>0.8$ and thus generates the worst result in Table.~\ref{tab:ablation_normal}. To alleviate the steep rise, we apply a logarithm function to shifted $g_1$ and form $g_2$. It coincidentally becomes a shifted inverse hyperbolic tangent function. But it still cannot lead to a good enough performance. Then we choose a tangent function $g$ and its performance is satisfactory. We further try its logarithm $g_3$ with a gentler slope. It has even better effect but the nesting of two nonlinear functions yields more computations. Considering the trade-off, we just use $g$. 

\begin{figure}
  \centering
    \includegraphics[width=0.4\linewidth]{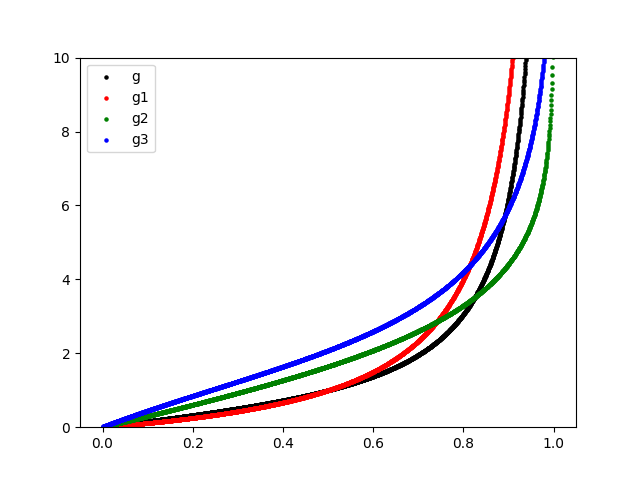}
  \caption{The curves of four used mapping functions.}
  \label{fig:compare_g}
\end{figure}

The majority of existing works about image contrast enhancement are based on histogram equalization~\citep{padmavathy2018image,vijayalakshmi2020comprehensive,Park2018ContrastEF}. Though some variants of Eq.~\ref{eq::contrastbrightness3} is used empirically in industry for adjusting image's contrast, there is no academic paper discussing it to the best of our knowledge. One of our contributions lies in employing it to solve low light enhancement problem with the zero-reference regulation of two losses. Compared with other functions with single coefficient, the formula involving two coefficients is more difficult for a neural network to learn. Our derived reverse degradation loss and variance suppression loss can effectively guide the network to learn the coefficient pair of the adjustment function. We experiment various mapping functions $g$ aforementioned and modified the original formulation with the best choice of $g$. 
{\colorRevision
\subsection{Network Structure}
Our model is extremely simple and its structure is shown in Table~\ref{tab:structure}.
\vspace{-3.3mm}
\begin{table}[htbp]
\renewcommand{\arraystretch}{0.4}
\captionsetup{font={footnotesize}}
\scriptsize
  \centering
  \caption{The structure of the network. $c_{int}$ and $c_{out}$ are the internal and output channel and are specified in paper. The output is split for $\mathbf{b}$ and $\mathbf{c}$}
  \vspace{-3.5mm}
    \begin{tabular}{cccccc}
    \toprule
    layer & channel\_in & channel\_out & kernel & stride & padding \\
    \midrule
    Conv+ReLU & 3     & $c_{int}$    & 3     & 1     & 1 \\
    Conv+ReLU & $c_{int}$    & $c_{int}$    & 3     & 1     & 1 \\
    Conv+Tanh & $c_{int}$    & $c_{out}$ & 3     & 1     & 1 \\
    \bottomrule
    \end{tabular}%
  \label{tab:structure}%
  \vspace{-4mm}
\end{table}%

\subsection{Decomposition Visualization}
We showcase an visual example of decomposition of our enhancement model in Fig.~\ref{fig::decomp}.
\begin{figure}[hpt]
\small
  \centering
  \begin{minipage}[b]{0.125\linewidth}
    \centering
  \begin{subfigure}{1.0\linewidth}
    \includegraphics[width=1\linewidth]{figs/lolv2/input/low00776.png}
      \caption{\(\I_l\)}
  \end{subfigure}
  \hfill
  \begin{subfigure}{1.0\linewidth}
    \includegraphics[width=1\linewidth]{figs/lolv2/ours_/low00776.png}
      \caption{\(\tilde \I_h\)}
  \end{subfigure}
  \end{minipage}
  \hfill
  \begin{subfigure}{0.280\linewidth}
    \includegraphics[width=1\linewidth]{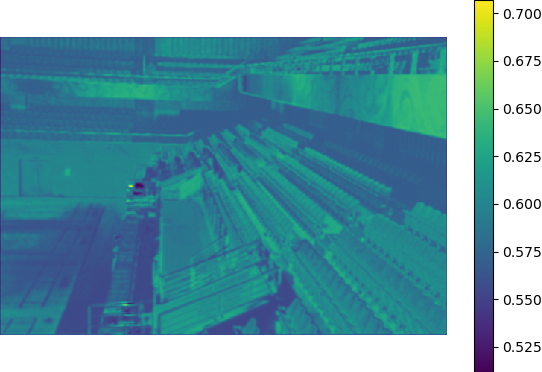}
      \caption{Heatmap of \(\bb\)}
  \end{subfigure}
  \hfill
  \begin{subfigure}{0.280\linewidth}
    \includegraphics[width=1\linewidth]{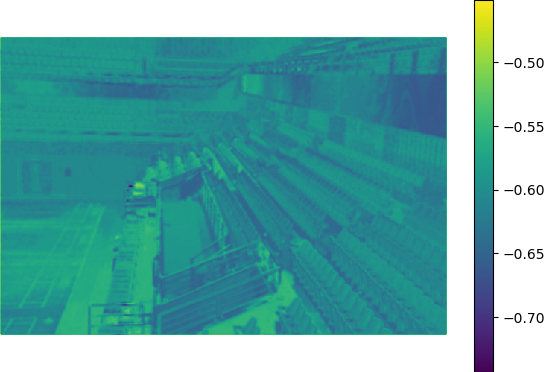}
      \caption{Heatmap of \(\cc\)}
  \end{subfigure}
  \hfill
  \begin{subfigure}{0.280\linewidth}
    \includegraphics[width=1\linewidth]{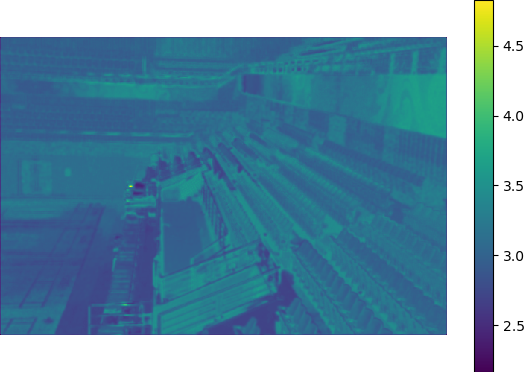}
      \caption{Heatmap of \(\a\)}
  \end{subfigure}
  \caption{{\colorRevision The decomposition results of the model.}}
  \label{fig::decomp}
  \vspace{-0.2cm}
\end{figure}
}
\end{document}